\documentclass[10pt]{article} %

\usepackage{etoolbox}
\newcommand{\arxiv}[1]{\iftoggle{iclr}{}{#1}}
\newcommand{\iclr}[1]{\iftoggle{iclr}{#1}{}}
\newtoggle{iclr}
\global\toggletrue{iclr}
\global\togglefalse{iclr}

\iclr{
\PassOptionsToPackage{dvipsnames}{xcolor}
\usepackage{icml2026}
}

\usepackage[utf8]{inputenc} %
\usepackage[T1]{fontenc}    %
\usepackage{url}            %
\usepackage{booktabs}       %
\usepackage{amsfonts}       %
\usepackage{nicefrac}       %
\usepackage{microtype}      %
\usepackage{bm}

\usepackage{wrapfig}
\usepackage{algorithmicx}
\usepackage{algpseudocode}
\usepackage{mdframed}
\usepackage{subfigure,subcaption}

\usepackage{enumitem}
\iclr{\setlist[itemize]{noitemsep, topsep=0pt}}
\iclr{\setlist[enumerate]{noitemsep, topsep=0pt}}
\usepackage{breakcites}

\newtoggle{draft}
\togglefalse{draft}

\usepackage{mathrsfs}

\usepackage{algorithm}
\usepackage{verbatim}

\usepackage{multicol}
\usepackage{siunitx}
\usepackage{colortbl}
\usepackage{xcolor}
\definecolor{hl}{RGB}{245,245,245}
\newcolumntype{H}{>{\columncolor{hl}}} %

\definecolor{sd}{RGB}{232, 233, 251}
\definecolor{fm}{RGB}{252, 232, 216}

\usepackage{setspace}

\usepackage{transparent}

\usepackage{inconsolata}
\usepackage[scaled=.90]{helvet}
\usepackage{xspace}
\usepackage[most]{tcolorbox}
\definecolor{rliableolive}{HTML}{BBCC33}
\tcbset{
  aibox/.style={
    width=\linewidth,
    top=10pt,
    bottom=4pt,
    colback=blue!6!white,
    colframe=black,
    colbacktitle=black,
    enhanced,
    center,
    attach boxed title to top left={yshift=-0.1in,xshift=0.15in},
    boxed title style={boxrule=0pt,colframe=white,},
  }
}
\newtcolorbox{AIbox}[2][]{aibox,title=#2,colback=rliableolive!10!white,#1}
\usepackage[most]{tcolorbox}

\tcbset{
  examplebox/.style={
    width=\linewidth,
    colback=gray!5,
    colframe=black,
    coltitle=black,
    fonttitle=\bfseries,
    boxrule=0.4pt,
    arc=3pt,
    left=6pt,
    right=6pt,
    top=6pt,
    bottom=6pt,
    enhanced
  }
}
\tcbuselibrary{breakable}

\usepackage{pifont}
\arxiv{
\usepackage[a4paper, margin=2.5cm]{geometry}
\PassOptionsToPackage{hypertexnames=false}{hyperref}  
\usepackage[dvipsnames]{xcolor}
\usepackage[colorlinks]{hyperref}
\hypersetup{
    citecolor=[RGB]{50,100,170},
    linkcolor=[RGB]{50,100,170},
    urlcolor=[RGB]{255,102,178}}
\usepackage{fancyhdr}
}

\arxiv{
\newcommand{\toptitlebar}{%
  {\color{black}\hrule height 1pt}%
  \vskip 0.25in%
}
}

\makeatletter
\arxiv{
\renewcommand{\maketitle}{%
  \begin{center}%
    \toptitlebar
    \vskip 0.1in%
    {\LARGE\bfseries \@title \par}%
    \vskip 0.3in%
    {\normalsize \@author \par}%
  \end{center}%
  \par
  \vskip 0.3in%
}

\renewcommand\section{\@startsection {section}{1}{\z@}{-2.0ex plus
    -0.5ex minus -.2ex}{1.5ex plus 0.3ex minus .2ex}{\large\bfseries\raggedright}}
\renewcommand\subsection{\@startsection{subsection}{2}{\z@}{-1.8ex plus
    -0.5ex minus -.2ex}{0.8ex plus .2ex}{\normalsize\bfseries\raggedright}}
\renewcommand\subsubsection{\@startsection{subsubsection}{3}{\z@}{-1.5ex plus
   -0.5ex minus -.2ex}{0.5ex plus .2ex}{\normalsize\bfseries\raggedright}}

\renewenvironment{abstract}%
  {\centerline{\large\bfseries Abstract}%
   \begin{list}{}%
      {\setlength{\rightmargin}{0.6cm}%
       \setlength{\leftmargin}{0.6cm}}%
    \item[]\ignorespaces}%
  {\unskip\end{list}}

\setlength{\parindent}{2em}
\setlength{\parskip}{0.4em}
}
\makeatother

\iclr{
\usepackage[colorlinks=true, linkcolor=blue!70!black, citecolor=blue!70!black,urlcolor=blue!70!black,breaklinks=true]{hyperref}
\PassOptionsToPackage{dvipsnames}{xcolor} 
}

\usepackage{microtype}
\usepackage{hhline}

\makeatletter
\newcommand{\neutralize}[1]{\expandafter\let\csname c@#1\endcsname\count@}
\makeatother

\usepackage{algorithm}

\arxiv{
\usepackage{natbib}
\bibliographystyle{plainnat}
\bibpunct{(}{)}{;}{a}{,}{,}
}

\usepackage{amsthm}
\usepackage{mathtools}
\usepackage{amsmath}
\usepackage{bbm}
\usepackage{amsfonts}
\usepackage{amssymb}

\usepackage{xpatch}

\usepackage{thmtools}
\usepackage{thm-restate}
\declaretheorem[name=Theorem]{theorem}
\declaretheorem[name=Lemma,sibling=theorem]{lemma}
\declaretheorem[name=Assumption,sibling=theorem]{assumption}
\declaretheorem[name=Condition,sibling=theorem]{condition}

\declaretheorem[name=Proposition,sibling=theorem]{proposition}

\makeatletter
  \renewenvironment{proof}[1][Proof]%
  {%
   \par\noindent{\bfseries\upshape {#1.}\ }%
  }%
  {\qed\newline}
  \makeatother

\theoremstyle{definition}  

\newtheorem{corollary}{Corollary}[section]

\theoremstyle{plain}
\newtheorem{definition}{Definition}[section]

\xpatchcmd{\proof}{\itshape}{\normalfont\proofnameformat}{}{}
\newcommand{\proofnameformat}{\bfseries}

\usepackage[nameinlink,capitalize]{cleveref}

\renewcommand{\eqref}[1]{\texorpdfstring{\hyperref[#1]{(\ref*{#1})}}{(\ref*{#1})}}

\crefformat{equation}{#2Eq. (#1)#3}
\Crefformat{equation}{#2Eq. (#1)#3}

\Crefformat{figure}{#2Figure #1#3}
\Crefname{assumption}{Assumption}{Assumptions}
\Crefformat{assumption}{#2Assumption #1#3}
\Crefname{subsubsection}{Section}{Sections}
\crefformat{subsubsection}{#2Section #1#3}
\Crefformat{subsubsection}{#2Section #1#3}
\Crefname{alg}{Alg.}{Algs.}

\usepackage{crossreftools}
\usepackage{xparse}

\ExplSyntaxOn
\DeclareDocumentCommand{\XDeclarePairedDelimiter}{mm}
 {
  \__egreg_delimiter_clear_keys: 
  \keys_set:nn { egreg/delimiters } { #2 }
  \use:x 
   {
    \exp_not:n {\NewDocumentCommand{#1}{sO{}m} }
     {
      \exp_not:n { \IfBooleanTF{##1} }
       {
        \exp_not:N \egreg_paired_delimiter_expand:nnnn
         { \exp_not:V \l_egreg_delimiter_left_tl }
         { \exp_not:V \l_egreg_delimiter_right_tl }
         { \exp_not:n { ##3 } }
         { \exp_not:V \l_egreg_delimiter_subscript_tl }
       }
       {
        \exp_not:N \egreg_paired_delimiter_fixed:nnnnn 
         { \exp_not:n { ##2 } }
         { \exp_not:V \l_egreg_delimiter_left_tl }
         { \exp_not:V \l_egreg_delimiter_right_tl }
         { \exp_not:n { ##3 } }
         { \exp_not:V \l_egreg_delimiter_subscript_tl }
       }
     }
   }
 }

\keys_define:nn { egreg/delimiters }
 {
  left      .tl_set:N = \l_egreg_delimiter_left_tl,
  right     .tl_set:N = \l_egreg_delimiter_right_tl,
  subscript .tl_set:N = \l_egreg_delimiter_subscript_tl,
 }

\cs_new_protected:Npn \__egreg_delimiter_clear_keys:
 {
  \keys_set:nn { egreg/delimiters } { left=.,right=.,subscript={} }
 }

\cs_new_protected:Npn \egreg_paired_delimiter_expand:nnnn #1 #2 #3 #4
 {
  \mathopen{}
  \mathclose\c_group_begin_token
   \left#1
   #3
   \group_insert_after:N \c_group_end_token
   \right#2
   \tl_if_empty:nF {#4} { \c_math_subscript_token {#4} }
 }
\cs_new_protected:Npn \egreg_paired_delimiter_fixed:nnnnn #1 #2 #3 #4 #5
 {
  \mathopen{#1#2}#4\mathclose{#1#3}
  \tl_if_empty:nF {#5} { \c_math_subscript_token {#5} }
 }
\ExplSyntaxOff

\XDeclarePairedDelimiter{\supnorm}{
  left=\lVert,
  right=\rVert,
  subscript=\infty
  }

\newcommand{\ours}[1]{\textsc{MaxRL}}

\newcommand{\RL}{\mathrm{RL}}
\newcommand{\passrate}{p^{\mathrm{pass}}_{\theta}}

\DeclareMathOperator{\En}{\mathbb{E}}

\def\ddefloop#1{\ifx\ddefloop#1\else\ddef{#1}\expandafter\ddefloop\fi}
\def\ddef#1{\expandafter\def\csname bb#1\endcsname{\ensuremath{\mathbb{#1}}}}
\ddefloop ABCDEFGHIJKLMNOPQRSTUVWXYZ\ddefloop
\def\ddefloop#1{\ifx\ddefloop#1\else\ddef{#1}\expandafter\ddefloop\fi}
\def\ddef#1{\expandafter\def\csname b#1\endcsname{\ensuremath{\mathbf{#1}}}}
\ddefloop ABCDEFGHIJKLMNOPQRSTUVWXYZ\ddefloop
\def\ddef#1{\expandafter\def\csname sf#1\endcsname{\ensuremath{\mathsf{#1}}}}
\ddefloop ABCDEFGHIJKLMNOPQRSTUVWXYZ\ddefloop
\def\ddef#1{\expandafter\def\csname c#1\endcsname{\ensuremath{\mathcal{#1}}}}
\ddefloop ABCDEFGHIJKLMNOPQRSTUVWXYZ\ddefloop
\def\ddef#1{\expandafter\def\csname h#1\endcsname{\ensuremath{\widehat{#1}}}}
\ddefloop ABCDEFGHIJKLMNOPQRSTUVWXYZ\ddefloop
\def\ddef#1{\expandafter\def\csname hc#1\endcsname{\ensuremath{\widehat{\mathcal{#1}}}}}
\ddefloop ABCDEFGHIJKLMNOPQRSTUVWXYZ\ddefloop
\def\ddef#1{\expandafter\def\csname t#1\endcsname{\ensuremath{\widetilde{#1}}}}
\ddefloop ABCDEFGHIJKLMNOPQRSTUVWXYZ\ddefloop
\def\ddef#1{\expandafter\def\csname tc#1\endcsname{\ensuremath{\widetilde{\mathcal{#1}}}}}
\ddefloop ABCDEFGHIJKLMNOPQRSTUVWXYZ\ddefloop
\def\ddefloop#1{\ifx\ddefloop#1\else\ddef{#1}\expandafter\ddefloop\fi}
\def\ddef#1{\expandafter\def\csname scr#1\endcsname{\ensuremath{\mathscr{#1}}}}
\ddefloop ABCDEFGHIJKLMNOPQRSTUVWXYZ\ddefloop

\iclr{%
\hypersetup{%
  colorlinks=true,%
  citecolor=[RGB]{50,100,170},%
  linkcolor=[RGB]{50,100,170},%
  urlcolor=[RGB]{255,102,178}%
}%
}

\iclr{
\usepackage{hyperref}
\newcommand{\alghyperref}[1]{\hyperref[#1]{Alg.~\ref*{#1}}}
}

\usepackage{color-edits}
\addauthor{ys}{MidnightBlue}

\arxiv{
\usepackage[final]{showlabels}

}

\usepackage{caption}
\makeatletter
\let\OldStatex\Statex
\renewcommand{\Statex}[1][3]{%
  \setlength\@tempdima{\algorithmicindent}%
  \OldStatex\hskip\dimexpr#1\@tempdima\relax}
\makeatother

\usepackage{accents}
\usepackage{wrapfig}
\usepackage{tikz}
\usetikzlibrary{decorations.pathreplacing}
\usepackage{varwidth}

 \addtocontents{toc}{\protect\setcounter{tocdepth}{0}}

\let\oldparagraph\paragraph
\arxiv{\renewcommand{\paragraph}[1]{\oldparagraph{#1.}}}
\iclr{\renewcommand{\paragraph}[1]{\textbf{#1.}}}

\algrenewcommand\algorithmicrequire{\textbf{require}}

\usepackage{yfonts}

\iclr{
\icmltitlerunning{Maximum Likelihood Reinforcement Learning}
}

\arxiv{
    \title{Maximum Likelihood Reinforcement Learning}
    \author{
      \textbf{Fahim Tajwar}$^{* 1}$ \quad
      \textbf{Guanning Zeng}$^{* 1}$ \quad
      \textbf{Yueer Zhou}$^2$ \quad
      \textbf{Yuda Song}$^1$ \\
      \textbf{Daman Arora}$^1$ \quad
      \textbf{Yiding Jiang}$^{1}$ \quad
      \textbf{Jeff Schneider}$^1$ \quad
      \textbf{Ruslan Salakhutdinov}$^1$ \\
      \textbf{Haiwen Feng}$^{3, 4}$ \quad
      \textbf{Andrea Zanette}$^1$ \\
      \vspace{1mm}
      $^1$Carnegie Mellon University \quad $^2$Stanford University \quad
      $^3$UC Berkeley \quad $^4$Impossible, Inc.\\
      \vspace{1mm}
      \texttt{\{ftajwar,guanningz,azanette\}@andrew.cmu.edu} \\
    }
}

\begin{document}

\arxiv{
\maketitle
\renewcommand{\thefootnote}{}
\footnotetext{$^*$Equal contribution.}
\renewcommand{\thefootnote}{\arabic{footnote}}
\thispagestyle{fancy}
\fancyhead{}
\lhead{\raisebox{-0.7cm}{\includegraphics[height=0.4cm]{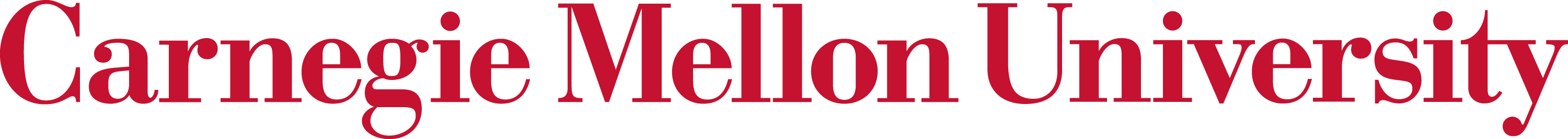}}}
\renewcommand{\headrulewidth}{0pt}
\setlength{\headheight}{18pt}
\setlength{\headsep}{3mm}
}

\newcommand{\guanning}[1]{\textcolor{cyan}{[Guanning: #1]}}

\iclr{
\twocolumn[
  \icmltitle{Beyond Scalar Rewards: Learning from Text Feedback in LLM Post-Training}

  \icmlsetsymbol{equal}{*}

  \begin{icmlauthorlist}
    \icmlauthor{Firstname1 Lastname1}{equal,yyy}
    \icmlauthor{Firstname2 Lastname2}{equal,yyy,comp}
    \icmlauthor{Firstname3 Lastname3}{comp}
    \icmlauthor{Firstname4 Lastname4}{sch}
    \icmlauthor{Firstname5 Lastname5}{yyy}
    \icmlauthor{Firstname6 Lastname6}{sch,yyy,comp}
    \icmlauthor{Firstname7 Lastname7}{comp}
    \icmlauthor{Firstname8 Lastname8}{sch}
    \icmlauthor{Firstname8 Lastname8}{yyy,comp}
  \end{icmlauthorlist}

  \icmlaffiliation{yyy}{Department of XXX, University of YYY, Location, Country}
  \icmlaffiliation{comp}{Company Name, Location, Country}
  \icmlaffiliation{sch}{School of ZZZ, Institute of WWW, Location, Country}

  \icmlcorrespondingauthor{Firstname1 Lastname1}{first1.last1@xxx.edu}
  \icmlcorrespondingauthor{Firstname2 Lastname2}{first2.last2@www.uk}

  \icmlkeywords{Machine Learning, ICML}
  
  \vskip 0.3in
]

\printAffiliationsAndNotice{}  %

}

\begin{abstract}
Reinforcement learning (RL) is the method of choice for training models in setups where the objective function can only be evaluated by sampling from the model. Our key observation is that when the feedback is terminal and binary, models implicitly induce a likelihood over correct rollouts. Maximum likelihood would be the natural framework in such settings, but RL is used instead as a workaround to the non-differentiability. We prove that the standard, expected-reward RL formulation is only a first-order approximation of the likelihood. To remedy this mismatch, we introduce \textbf{Maximum Likelihood Reinforcement Learning (MaxRL)}, a compute-indexed family of sample-based objectives that interpolate between expected-reward RL and maximum likelihood as sampling compute is scaled. The resulting objective is a one-line change to standard RL implementations. \ours{} Pareto-dominates existing methods in all tested models and tasks, achieves up to $\mathbf{20\times}$ gains in test-time scaling efficiency over GRPO, and scales more favorably with additional training data and compute.
\footnote{Project website, code, and other assets: \url{https://zanette-labs.github.io/MaxRL/}}
\end{abstract}

\section{Introduction}

Maximum likelihood (ML) and reinforcement learning (RL) are two highly successful optimization paradigms that significantly shaped the landscape of modern machine learning. Maximum likelihood training is a foundational principle behind modern generative and predictive models~\citep{bishop2006prml,murphy2012mlpp}; in fully differentiable settings, optimizing log-likelihood objectives has reliably translated increases in model capacity, data, and compute into consistent performance improvements~\citep{alexnet,radford2018improving}. Reinforcement learning, by contrast, originated in optimal control and sequential decision-making~\citep{bertsekas1995dynamic,sutton1998reinforcement}, where learning proceeds through interaction with an environment and the objective is to maximize expected return. The generality of this formulation enables reinforcement learning to address problems involving \emph{non-differentiable} intermediate sampling, and has led to models capable of superhuman performance in complex domains~\citep{mnih2015humanlevel,silver2016mastering,openai2019dota2largescale,vinyals2019grandmaster}.

Many modern learning problems are typically addressed with reinforcement learning even if they define an implicit likelihood over successes.
Examples include navigation~\citep{probabilistic_robotics,anderson2018visionandlanguagenavigationinterpretingvisuallygrounded}, program synthesis~\citep{chen2018synthesizingcomplexprogramsinputoutput,bunel2018leveraginggrammarreinforcementlearning}, structured prediction~\citep{linguistic_structure_prediction,mensch2018differentiabledynamicprogrammingstructured}, and multi-step reasoning in large language models~\citep{wei2023chainofthoughtpromptingelicitsreasoning,guo2025deepseek}.
In these tasks, success is determined by an external verifier only after a stochastic generation process, yielding a binary outcome. From an end-to-end perspective, the model induces a probability of success for each input, defining an implicit likelihood over correctness. Maximizing this likelihood would be the principled approach, but non-differentiable intermediate sampling precludes direct optimization. Reinforcement learning is used instead, not because it offers a better objective, but as a workaround to this non-differentiability.

Suppose that both maximum likelihood and reinforcement learning could be applied to a given task, regardless of differentiability. Then the two objectives induce markedly different optimization behavior, as we shall explain below. To make this distinction precise, we compare the population-level objectives implied by each framework. Let \(p_\theta(x)\) denote the probability that a model with parameters \(\theta\) produces a correct output for input \(x\). Moreover, let $\rho$ be the distribution from which the inputs are sampled. The corresponding gradients of reinforcement learning, $\nabla_\theta J_{\mathrm{RL}}$, and of maximum likelihood, $\nabla_\theta J_{\mathrm{ML}}$, take the form
\begin{align*}
    \nabla_\theta J_{\mathrm{RL}}
    &= \mathbb{E}_{x\sim \rho}\!\left[\nabla_\theta p_\theta(x)\right] \\
    \nabla_\theta J_{\mathrm{ML}}
    &= \mathbb{E}_{x\sim \rho}\!\left[\nabla_\theta \log p_\theta(x)\right]
     = \mathbb{E}_{x\sim \rho}\!\left[\frac{1}{p_\theta(x)} \nabla_\theta p_\theta(x)\right]
\end{align*}
The inverse-probability reweighting induced by maximum likelihood places greater emphasis on hard, low-success inputs, leading to very different optimization dynamics, as we empirically demonstrate in this paper (cf. \cref{sec:experiments}).

When viewed end-to-end, maximum likelihood emerges as the principled objective, for the very same reasons it is the method of choice in differentiable supervised learning with binary correctness.
In non-differentiable problems, however, this objective is difficult to optimize
directly because correctness is observed only after a non-differentiable stochastic generation
process, and the success probability $p_\theta(x)$ may be small.
This computational challenge motivates a framework that leverages additional
sampling compute to more faithfully approximate likelihood-based training.
We call this framework \textbf{Maximum Likelihood Reinforcement Learning (MaxRL)}.
At a high level, \ours{} bridges standard reinforcement learning and exact maximum
likelihood by progressively incorporating higher-order correctness information as
more sampling compute is utilized, ultimately recovering likelihood optimization in the
infinite-compute limit.
We discuss the relationship between \ours{} and recent work in this direction 
\citep{xiong2025reinforceadaadaptivesamplingframework,davis2025objectivereasoningreinforcementlearning}
in \cref{sec:relatedWorks}.
Our contributions are threefold:
\begin{enumerate}[leftmargin=*,labelsep=0.5em]
    \item We formalize correctness-based reinforcement learning as a latent-generation maximum likelihood problem and show that standard reinforcement learning optimizes only the first-order approximation of the maximum likelihood objective (cf.~\cref{sec:methods}).
    \item We introduce a compute-indexed family of objectives that interpolates between expected reward and exact maximum likelihood via a Maclaurin expansion in pass@k events (cf.~\cref{sec:methods}).
    \item We analyze a simple on-policy estimator whose expected gradient exactly matches the compute-indexed approximation of the likelihood objective, implying that increased sampling improves the optimized objective itself rather than merely reducing variance.
\end{enumerate}

Empirically, \ours{} Pareto-dominates standard RL objectives (RLOO~\citep{ahmadian2024basicsrevisitingreinforcestyle}, GRPO~\citep{shao2024deepseekmathpushinglimitsmathematical}) on all settings that we tested (cf.~\cref{sec:experiments}). Specifically, \ours{} shows better scaling trends when additional compute and data are available, and on mathematical reasoning tasks, it achieves up to $20\times$ test-time scaling efficiency gains over GRPO with a perfect verifier. Together, our results illustrate \ours{} to be a promising direction for scaling RL in correctness based domains.

\section{Preliminaries}
\label{sec:preliminaries}

We focus on correctness-based problems which define a \emph{binary} success or failure outcome for each input. 
Formally, let $x \sim \rho$ be an input drawn from a distribution $\rho$, and let $y^\ast(x)$ denote the correct label or answer for $x$.
Let the learner be parameterized by $\theta$, inducing a distribution $p_\theta(\cdot \mid x)$ over outputs for each input $x$.
Equality between outputs is defined up to a task-dependent equivalence relation,
so that $y = y^\ast(x)$ denotes semantic correctness rather than exact output
equality.
For example, in mathematical reasoning, $x$ is the prompt and $y$ is the final solution produced by the model. We use natural logarithms 
unless stated otherwise.

\paragraph{Latent generation models}
In many settings, the model generates a latent variable
$z \in \mathcal{Z}$ (a trajectory, e.g.\ a chain of thought) according to a discrete conditional distribution
$m_\theta(z \mid x)$ before producing the final answer. The final output $y$ is then obtained via a
deterministic decoding function $y = f(z)$, such as parsing a generated program
or extracting a boxed answer from a chain of thought. Correctness is evaluated only on the decoded output, i.e., $z$ is
successful if $f(z) = y^\ast(x)$.
Decoding pushes $m_\theta$ forward to the output distribution $p_\theta(\cdot\mid x)$ from above,
\begin{equation}
p_\theta(y\mid x)
=
\sum_{z \in \mathcal{Z}} m_\theta(z \mid x)\,\mathbb{I}\{f(z)=y\}.
\end{equation}
Throughout the paper, expectations with respect to model
outputs should be understood as expectations over latent samples $z \sim
m_\theta(\cdot \mid x)$ followed by deterministic decoding.

\paragraph{Pass rate}
We define the \emph{pass rate} as the probability that the model produces the
correct answer for a fixed input $x$:
\begin{align*}
    \passrate(x)
    := p_\theta(y^\ast(x)\mid x)
    = \mathbb{E}_{y \sim p_{\theta}(\cdot \mid x)}
    \!\left[\mathbb{I}\{y = y^\ast(x)\}\right].
\end{align*}
Similarly, let $y_1,\ldots,y_k \overset{\text{i.i.d.}}{\sim} p_\theta(\cdot\mid x)$.
We define $\text{pass@}k$ as the probability of at least one correct sample:
\begin{align*}
\text{pass@}k(x)
:= \mathbb{P}\!\left(\exists\, i \in [k]\ \text{s.t.}\ y_i = y^\ast(x)\right).
\end{align*}

\paragraph{Maximum likelihood (ML)}
Maximum likelihood selects parameters that maximize the \emph{log-likelihood} of the
observed data under the model.
In our setting, the probability of observing correctness is
$p_\theta(y^\ast(x)\mid x)$.
Maximizing the corresponding log-likelihood therefore yields the objective
\begin{align}\label{eq:maximum_likelihood}
    J_{\mathrm{ML}}(\theta)
    := \En_{x \sim \rho}\left[\log p_\theta(y^\ast(x) \mid x) \right],
\end{align}
    which is directly analogous to cross-entropy training in differentiable supervised
learning.
    
\paragraph{Reinforcement learning (RL)} 
Typically, reinforcement learning is applied to correctness-based tasks by defining a binary reward function $r(x, y) = \mathbb{I}\{y = y^\ast(x)\}$ indicating correctness, yielding the RL objective:
\begin{equation}\label{eq:rl_passrate}
    J_{\RL}(\theta) := \mathbb{E}_{x \sim \rho} \left[ \mathbb{E}_{z \sim m_\theta(\cdot|x)} \left[ r(x,z) \right] \right] = \mathbb{E}_{x \sim \rho} \left[ \passrate(x) \right].
\end{equation}
In short, RL maximizes the expectation over \textit{pass rate} while ML maximizes the expectation over its \textit{logarithm}.

\section{Maximum Likelihood Reinforcement Learning (\ours{})} \label{sec:methods}

In this section, we show that reinforcement learning on expected reward optimizes
only a first-order approximation of the ML objective. Specifically, the maximum likelihood objective admits a population-level expansion in terms of pass@k events, with standard RL optimizing only the first-order term. This suggests a compute-indexed family 
of objectives that incorporate higher-order terms, converging to ML as more 
compute is allocated.

\subsection{Maclaurin Expansion of Maximum Likelihood}
\label{sec:population_passk}
For simplicity, let us consider a single task $x$ in the development to follow, as the final objective and gradients can be obtained by taking an expectation over $x \sim \rho$. Moreover, we write $p := \passrate(x)$ to simplify our notation. The maximum likelihood objective admits the \emph{Maclaurin  expansion} in terms of failure events:
\begin{equation}
J_{\mathrm{ML}}(x)
=
\log p
=
-\sum_{k=1}^{\infty}\frac{(1-p)^k}{k}
=
-\sum_{k=1}^{\infty}\frac{\mathrm{fail@}k(x)}{k},
\label{eq:log-maclaurin}
\end{equation}
where $\mathrm{fail@}k(x)=1-\mathrm{pass@}k(x)$ denotes the probability that all
$k$ i.i.d.\ samples from the model fail.
Differentiating~\cref{eq:log-maclaurin} yields the population-level gradient
identity
\begin{equation}
\boxed{
\nabla_\theta J_{\mathrm{ML}}(x)
=
\sum_{k=1}^{\infty}\frac{1}{k}\,
\nabla_\theta \mathrm{pass@}k(x).
}
\label{eq:ml-passk-mixture}
\end{equation}
Thus, maximum likelihood optimizes an infinite harmonic mixture of $\mathrm{pass}@k$ gradients, with higher-order terms encoding rare success which are critical when $p$ is small. In contrast, the classical reinforcement learning approach is to optimize only the expected
pass@1 objective~\citep{koenig1993complexity,silver2016mastering,vecerik2018leveragingdemonstrationsdeepreinforcement,guo2025deepseek}:
\[
\nabla_\theta J_{\mathrm{RL}}(x)
=
\nabla_\theta \mathrm{pass@}1(x),
\]
corresponding to retaining solely the leading term of
\cref{eq:ml-passk-mixture}. From this observation, we can claim:

\begin{center}
\emph{Reinforcement learning optimizes a first-order approximation of the maximum likelihood 
objective.}
\end{center}

\subsection{\ours{} Objective Function}
\label{sec:truncated_objectives}
The maximum likelihood gradient in \cref{eq:ml-passk-mixture} is difficult to estimate with a fixed number of samples. In particular, estimating
pass@k gradients for large $k$ requires an increasing number of samples,
especially when the pass rate $p$ is small. 
This finite-sample difficulty is precisely what motivates Maximum Likelihood Reinforcement Learning.
We define \ours{} as the \emph{class of reinforcement-learning methods that
explicitly target the maximum likelihood objective} 
rather than the pass rate, while remaining implementable under fixed
sampling and non-differentiable generation. We consider a principled way to do so below.

Consider approximating the
maximum likelihood objective by truncating the Maclaurin expansion (\cref{eq:ml-passk-mixture}) to a fixed finite order and then estimating such an objective instead.
For a truncation level $T \in \mathbb{N}$, we define the truncated maximum
likelihood objective for a fixed input $x$ as:
\begin{equation}
J_{\ours{}}^{(T)}(x)
:=
-\sum_{k=1}^{T}\frac{(1-p)^k}{k}.
\label{eq:logT-def}
\end{equation}
Differentiating~\cref{eq:logT-def} yields the truncated population gradient:
\begin{equation}
\nabla_\theta J_{\ours{}}^{(T)}(x)
=
\sum_{k=1}^{T}\frac{1}{k}\,
\nabla_\theta \mathrm{pass@}k(x).
\label{eq:logT-passk-mixture}
\end{equation}

This defines a family of objectives: \textbf{$\mathbf{T=1}$ recovers reinforcement learning, $\mathbf{T \to \infty}$
recovers maximum likelihood}, and intermediate $T$ values interpolate between them. Thus, the truncation level $T$ directly controls the order of correctness events that contribute to learning. As we will soon see, it becomes viable to estimate higher-order $\nabla_\theta J_{\ours{}}^{(T)}(x)$ as more compute is expended in terms of rollouts. In other words: 

\begin{center}
\emph{\ours{} provides a principled framework for trading
additional compute for higher-fidelity approximations to the maximum likelihood
objective.}
\end{center}

The remaining question is whether these truncated objectives admit simple,
unbiased estimators under a fixed sampling budget, a question that we answer affirmatively in the next section.

\section{Gradient Estimators for \ours{}}
\label{sec:estimators}

\cref{eq:logT-passk-mixture} already provides a viable approach for
constructing an unbiased estimator: approximate \emph{each} term in the finite
series using a pass@k gradient estimator, as provided in recent  work~\citep{walder2025passkpolicyoptimizationsolving,chen2025passktrainingadaptivelybalancing}. %
Under this strategy, any improvement in pass@k
estimators directly translates into improved estimators for the truncated maximum
likelihood objective that we provided in \cref{eq:logT-passk-mixture}. 

In this work, we take an alternate approach. %
The key insight is that the maximum likelihood gradient can be expressed as an 
expectation under the \emph{success-conditioned} distribution, as established by the following theorem (\citet{davis2025objectivereasoningreinforcementlearning} also recently made a similar observation). 

\begin{theorem}[Conditional Form of the Maximum Likelihood Gradient]
\label{prop:ml-conditional}
The gradient of the maximum likelihood objective admits the following conditional
expectation representation:
\begin{equation}
\label{eqn:cond-grad}
\nabla_\theta J_{\mathrm{ML}}(x)
=
\mathbb{E}\!\left[
\nabla_\theta \log m_\theta(z \mid x)
\;\middle|\;
f(z)=y^\ast(x)
\right].
\end{equation}
\end{theorem}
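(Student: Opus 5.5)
The plan is a direct computation: apply the log-derivative trick to $p = \passrate(x)$, then recognise the result as a conditional expectation.

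First, write the pass rate as a sum over latent trajectories,
\[
p = p_\theta(y^\ast(x)\mid x) = \sum_{z\in\mathcal{Z}} m_\theta(z\mid x)\,\mathbb{I}\{f(z)=y^\ast(x)\},
\]
which follows from the pushforward identity for $p_\theta(y\mid x)$ in \cref{sec:preliminaries}. The decoding map $f$ and the target $y^\ast(x)$ do not depend on $\theta$. Hence the indicator is a $\theta$-independent weight, and all $\theta$-dependence sits in $m_\theta$.

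Second, differentiate term by term:
\[
\nabla_\theta p = \sum_z \mathbb{I}\{f(z)=y^\ast(x)\}\,\nabla_\theta m_\theta(z\mid x).
\]
On the support of $m_\theta$, substitute $\nabla_\theta m_\theta = m_\theta\,\nabla_\theta \log m_\theta$. This gives
\[
\nabla_\theta p = \mathbb{E}_{z\sim m_\theta(\cdot\mid x)}\bigl[\mathbb{I}\{f(z)=y^\ast(x)\}\,\nabla_\theta\log m_\theta(z\mid x)\bigr].
\]

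Third, use $\nabla_\theta J_{\mathrm{ML}}(x)=\nabla_\theta\log p=\nabla_\theta p/p$, and note that $\mathbb{P}(f(z)=y^\ast(x)) = p$. Dividing the indicator-weighted expectation by the probability of the conditioning event is exactly the definition of the conditional expectation given $f(z)=y^\ast(x)$, which yields \cref{eqn:cond-grad}. Equivalently, the success-conditioned distribution is $m_\theta(z\mid x)\,\mathbb{I}\{f(z)=y^\ast(x)\}/p$, and the display above is the expectation of $\nabla_\theta\log m_\theta$ under it.

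There is no real conceptual obstacle; the main care point is regularity. We need $p>0$ so that both $\log p$ and the conditioning are well defined; this is implicit in the statement. We also need to interchange $\nabla_\theta$ with the possibly countable sum over $\mathcal{Z}$. For finite $\mathcal{Z}$ (finite-length token sequences over a finite vocabulary) this is immediate. Otherwise I would assume $m_\theta$ is differentiable with $\sum_z \sup_\theta\|\nabla_\theta m_\theta(z\mid x)\|$ locally finite, and invoke dominated convergence. Trajectories with $m_\theta(z\mid x)=0$ contribute zero to both sides and can be dropped, so the log-derivative substitution is only applied where it is valid.

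As a consistency check, I would note that the result agrees with \cref{eq:ml-passk-mixture}. Summing $\tfrac{1}{k}\nabla_\theta\mathrm{pass@}k = (1-p)^{k-1}\nabla_\theta p$ over $k$ gives $\nabla_\theta p/p$, which matches the third step.
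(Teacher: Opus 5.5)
Your proposal is correct and matches the paper's proof: the paper also starts from the REINFORCE identity $\nabla_\theta p = \mathbb{E}_{z\sim m_\theta(\cdot\mid x)}[\mathbb{I}\{f(z)=y^\ast(x)\}\,\nabla_\theta \log m_\theta(z\mid x)]$, divides by $p = \mathbb{P}(f(z)=y^\ast(x))$, and applies $\mathbb{E}[X\mid A] = \mathbb{E}[X\,\mathbb{I}_A]/\mathbb{P}(A)$. The only difference is that you derive the REINFORCE identity from the pushforward sum and state the regularity conditions explicitly ($p>0$, and interchanging the gradient with the sum over $\mathcal{Z}$), whereas the paper treats these as standard.
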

We provide the proof of this theorem in Appendix~\ref{app:proofs}. The theorem establishes that the maximum likelihood gradient is the average gradient from successful trajectories only. 
This interpretation naturally leads to a concrete gradient estimator by replacing the expectation with sample averages over successful rollouts.

\subsection{Empirical Gradient Estimator}
\label{sec:estimator_objective_equivalence}

\cref{prop:ml-conditional} suggests drawing samples from the success-conditioned policy. Recent literature has proposed rejection fine-tuning~\citep{touvron2023llama2openfoundation,yuan2023scalingrelationshiplearningmathematical,dong2023raftrewardrankedfinetuning,xiong2025minimalistapproachllmreasoning,davis2025objectivereasoningreinforcementlearning} and adaptive sampling~\citep{xiong2025reinforceadaadaptivesamplingframework} as mechanisms to sample from this conditional distribution. However, doing so is computationally demanding when the pass rate is small or requires a more 
\begin{wraptable}{r}{0.58\textwidth}
\centering
\caption{Comparison of the REINFORCE estimator
($\nabla_\theta J_{\mathrm{RL}}(x)$) and the conditional estimator
($\nabla_\theta J_{\mathrm{\ours{}}}^{(T)}(x)$).
Although they differ only in normalization ($N$ vs.\ $K$), they are unbiased for
fundamentally different objectives.}
\label{tab:reinforce-vs-conditional}
\vspace{0.3em}
\begin{tabular}{lcc}
\toprule
 & \textbf{REINFORCE} & \textbf{\ours{}} \\
\midrule
Estimator &
$
\frac{1}{N}\sum_{i=1}^N r_i S_i
$ &
$
\frac{1}{K}\sum_{i=1}^N r_i S_i
$ \\[0.5em]
Unbiased for &
$\nabla_\theta\,\mathrm{pass@}1(x)$ &
$
\sum_{k=1}^{N}\frac{1}{k}\,
\nabla_\theta \mathrm{pass@}k(x)
$ \\
\bottomrule
\end{tabular}
\end{wraptable}
complex implementation regarding adaptive sampling. Instead, we adopt a simpler approach: we sample from the unconditional policy $m_\theta(\cdot \mid x)$ and then \textit{average over only the successful trajectories}.
Fix an input $x$ and draw $N$ latent trajectories
$z_1,\ldots,z_N \sim m_\theta(\cdot \mid x)$. Let
$r_i := \mathbb{I}\{f(z_i)=y^\ast(x)\}$ indicates success-based reward,
$S_i := \nabla_\theta \log m_\theta(z_i \mid x)$ denote the score function, and
$K := \sum_{i=1}^N r_i$ be the number of successful samples.
We average score functions \emph{only over successful trajectories} and obtain the following REINFORCE-style estimator:
\begin{equation}
\widehat{g}_N(x)
:=
\begin{cases}
\displaystyle
\frac{1}{K}\sum_{i=1}^N r_i S_i, & K \ge 1, \\[0.8em]
0, & K = 0.
\end{cases}
\label{eq:cond-est}
\end{equation}
The estimator constructed in this way is such that some inputs may receive zero gradient if there are no successes within $N$ samples, making the resulting estimator no longer unbiased with respect to the infinite series expansion of the true ML objective from \cref{eq:ml-passk-mixture}.
We show that this estimator is however unbiased for the gradient of the
truncated maximum likelihood objective in
\cref{eq:logT-passk-mixture}, $\nabla_\theta J_{\mathrm{\ours{}}}^{(T)}(x)$, with truncation level $T = N$:
\begin{theorem}[Estimator--objective equivalence]
\label{prop:logT_equivalence}
The estimator $\widehat{g}_N(x)$ is an unbiased estimator for the \ours{} gradient of order $T = N$, i.e.,
\[
\mathbb{E}\!\left[\widehat{g}_N(x)\right]
=
\nabla_\theta J_{\ours{}}^{(N)}(x).
\]
\end{theorem}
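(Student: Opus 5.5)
We will compute $\mathbb{E}[\widehat{g}_N(x)]$ by conditioning on the number of successes $K$, and separately express $\nabla_\theta J^{(N)}_{\ours{}}(x)$ in closed form as a scalar multiple of $\nabla_\theta p$. The two scalars should then agree.

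\textbf{Step 1: the target.} By \cref{eq:logT-def},
\[
\nabla_\theta J^{(N)}_{\ours{}}(x)=\sum_{k=1}^N(1-p)^{k-1}\nabla_\theta p=\frac{1-(1-p)^N}{p}\,\nabla_\theta p .
\]
By \cref{prop:ml-conditional}, or directly from $\nabla_\theta p=\mathbb{E}[r\,S]$, we have $\nabla_\theta p = p\,\mathbb{E}[S\mid r=1]$. Hence
\[
\nabla_\theta J^{(N)}_{\ours{}}(x)=\bigl(1-(1-p)^N\bigr)\,\mathbb{E}[S\mid f(z)=y^\ast(x)].
\]

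\textbf{Step 2: the estimator.} The samples are i.i.d., and the reward vector $(r_1,\dots,r_N)$ is a function of the $z_i$. So, conditional on the full success pattern, each successful $z_i$ is distributed as $m_\theta(\cdot\mid x)$ restricted to successes, i.e.\ the success-conditioned law, independently of the other samples. Therefore, on the event $K=k\ge1$,
\[
\mathbb{E}\Bigl[\tfrac1k\textstyle\sum_i r_iS_i\,\Bigm|\, r_{1:N}\Bigr]=\frac1k\cdot k\cdot\mathbb{E}[S\mid r=1]=\mathbb{E}[S\mid r=1].
\]
The case $K=0$ contributes $0$. Taking expectations over $r_{1:N}$ gives
\[
\mathbb{E}[\widehat g_N]=\mathbb{P}(K\ge1)\,\mathbb{E}[S\mid r=1]=\bigl(1-(1-p)^N\bigr)\,\mathbb{E}[S\mid r=1],
\]
which matches Step 1.

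\textbf{Main obstacle.} The delicate point is Step 2. We must make rigorous that dividing by the random $K$ introduces no bias. The factorization works because the product measure $\prod_i m_\theta(z_i\mid x)$, conditioned on the pattern $(r_1,\dots,r_N)$, factorizes into per-coordinate conditional laws. We would write this out explicitly with sums over $z_{1:N}$ in the discrete space $\mathcal Z$.

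The case $p=0$ is handled separately: both sides vanish, since $\widehat g_N=0$ almost surely and $\nabla p$ is then zero at an interior minimum or is interpreted via the limit. We would simply assume $p>0$.

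As a sanity check, we could also verify the result via symmetry: $\mathbb{E}[\widehat g_N]=N\,\mathbb{E}[r_1S_1/K]$. Conditioning on $z_1$ successful, $K-1\sim\mathrm{Bin}(N-1,p)$. Combined with $\mathbb{E}[1/(1+\mathrm{Bin}(N-1,p))]=\frac{1-(1-p)^N}{Np}$, this reproduces the same scalar.
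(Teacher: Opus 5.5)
Your proof is correct and follows the paper's argument. The paper likewise conditions on the presence of successes, uses \cref{prop:ml-conditional} to identify the conditional mean of $\widehat g_N(x)$ as $\nabla_\theta \log p$, multiplies by $\mathbb{P}(K\ge1)=1-(1-p)^N$, and rewrites $\frac{1-(1-p)^N}{p}\nabla_\theta p$ as the truncated geometric sum; your conditioning on the full reward pattern $r_{1:N}$ (rather than only on $K\ge1$) is just the rigorous form of the paper's remark that the successful samples are i.i.d.\ from the success-conditioned law, and your $p=0$ remark and binomial sanity check are harmless extras.
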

We present the proof of this result in Appendix~\ref{app:proofs}.~\cref{prop:logT_equivalence} reveals an elegant alignment between the estimator in \cref{eq:cond-est} and the gradient of the truncated Maclaurin expansion in \cref{eq:logT-passk-mixture}. It is worth highlighting the most important property of the estimator:

\begin{center}
\emph{Increasing compute as rollouts $N$ leads to a better approximation of the maximum likelihood gradient.}
\end{center}

\cref{tab:reinforce-vs-conditional} compares our estimator with the REINFORCE estimator, whose expected value underlies most RL algorithms.
At the estimator level, the difference is simple: both average score functions over sampled trajectories, 
but REINFORCE normalizes by total samples $N$ while \ours{} normalizes by successful 
samples $K$. This difference in normalization determines the objective each estimator is unbiased for.

Consequently, increasing the number of samples, $N$, for the two estimators has different effects: REINFORCE\footnote{
Modern policy-gradient methods such as PPO~\citep{schulman2017proximalpolicyoptimizationalgorithms} introduce additional mechanisms
(importance weight truncation via clipping) that trade bias for robustness. In the 
fully on-policy setting, these reduce to REINFORCE, our canonical baseline. GRPO~\citep{shao2024deepseekmathpushinglimitsmathematical} is a notable exception due to its division by standard deviation in the advantage calculation, which we
discuss further in~\cref{sec:weight_function_view}.
} reduces 
variance of a fixed objective (pass@1), while \ours{} increases the approximation 
order to maximum likelihood. Additional compute thus improves the \emph{objective itself} for 
\ours{}, not just estimation quality.

\subsection{Variance Reduction via Control Variates}
\label{sec:variance_reduction}

Like REINFORCE, the estimator in \cref{eq:cond-est} can exhibit high variance when the number of
successful samples, $K$, is small. 
Policy-gradient baselines are typically introduced to reduce variance without
changing the expected gradient~\citep{sutton1998reinforcement,sutton1999policy}. However, standard arguments for policy-gradient
baselines are not directly applicable in this setting, as the estimator
normalizes by the random variable $K$ which depends on all samples, making it correlated with the observed rollouts.

We instead proceed from first principles and use a simple zero-mean control variate, the unconditional average score:
\[
V_N := \frac{1}{N}\sum_{i=1}^N \nabla_\theta \log m_\theta(z_i \mid x).
\]
This satisfies $\mathbb{E}[V_N]=0$. Subtracting $V_N$ preserves unbiasedness while
reducing variance in practice:
\begin{equation}
\widetilde g_N(x)
=
\frac{1}{K}\sum_{i=1}^N r_i S_i
-
\frac{1}{N}\sum_{i=1}^N S_i
=
\sum_{i=1}^N\!\left(\frac{r_i}{K}-\frac{1}{N}\right)S_i
\label{eq:final-estimator}
\end{equation}
with the convention that the first term, $\left(\sum_{i = 1}^N r_iS_i\right)/K$, is zero when $K=0$. 

\subsection{On-Policy Implementation}
Finally, we present a simple \emph{on-policy} implementation of \ours{}
that differs from standard REINFORCE-style policy gradient methods by a
\emph{single-line} modification to the advantage calculation. We adopt the
variance-reduced formulation from \cref{eq:final-estimator}, and drop both terms
when $K=0$, consistent with standard policy-gradient practice~\citep{yu2025dapoopensourcellmreinforcement,nie2026gradientstarvationbinaryrewardgrpo} in which no
gradient is computed for tasks with no successful rollouts: this choice is
simpler and performs better empirically. Concretely, the advantage is normalized
by the per-task mean reward, rather than left unnormalized as in
RLOO~\citep{ahmadian2024basicsrevisitingreinforcestyle} or normalized by the reward
standard deviation as in
GRPO~\citep{shao2024deepseekmathpushinglimitsmathematical}. The modified part is
highlighted in blue.

\begin{algorithm}[h]
\caption{
\textbf{On-Policy Implementation of \ours{}.}
}
\label{alg:maxRL}
\begin{mdframed}
\small
\begin{algorithmic}[1]
\Require
Batch of inputs $B$,
number of rollouts $N$,
latent policy $m_\theta(\cdot \mid \cdot)$
\For{each input $x \in B$}
    \State Sample $z_1,\ldots,z_N \sim m_\theta(\cdot \mid x)$
    \For{$j = 1$ to $N$}
        \State $r_j \gets \mathbb{I}\{f(z_j)=y^\ast(x)\}$
        \State $S_j \gets \nabla_\theta \log m_\theta(z_j \mid x)$
    \EndFor
    \State $\widehat r(x) \gets \frac{1}{N}\sum_{j=1}^N r_j$
    \State \textcolor{blue}{$
    \widehat g(x) \gets
    \begin{cases}
    \displaystyle
    \frac{1}{N\,\widehat r(x)}\sum_{j=1}^N (r_j - \widehat r(x)) S_j,
    & \widehat r(x) > 0, \\[0.6em]
    0, & \text{otherwise}
    \end{cases}
    $}
\EndFor
\State $\widehat g \gets \frac{1}{|B|}\sum_{x \in B} \widehat g(x)$
\State \Return $\widehat g$
\end{algorithmic}
\end{mdframed}
\end{algorithm} 

We direct the reader to~\cref{app:varianceAnalysis} for a study on the effect of the control variate baseline on gradient variance:~\cref{fig:variance_baseline} shows that despite having no theoretical guaranties, baseline subtraction generally reduces gradient variance in practice, especially when the rollout count is low. Finally, it should be noted that the practical gradient estimator in~\cref{alg:maxRL} is an unbiased estimator for the \ours{} gradient of order $T = N - 1$, as opposed to order $T = N$ from~\cref{prop:logT_equivalence}. 
This is due to the lack of baseline subtraction for tasks without any positive rollouts, which changes the expected gradient. See~\cref{app:droppedBaselineGradientEstimatorAnalysis} for a detailed analysis.

\section{A Unifying Weight-Function View}
\label{sec:weight_function_view}

Maximum likelihood, \ours{}, classical reinforcement learning, and GRPO all admit
population-level gradients of the following form:
\begin{equation}
\nabla_\theta J
=
\mathbb{E}_{x \sim \rho}
\!\left[
w\!\left(p_\theta(x)\right)
\,\nabla_\theta p_\theta(x)
\right]
\label{eq:weight_function_form}
\end{equation}
where $p_\theta(x)=\passrate(x)$ and $w(p)$ is a scalar weight that depends only on
the pass rate. The function $w(p)$ determines how learning signal is allocated
across inputs of varying difficulty and fully characterizes the differences
between these objectives at the population level. \Cref{fig:weight_functions_for_grad_p}
plots the resulting weight functions for each method; derivations are provided
in Appendix~\ref{app:weight_function_view_supplements}.
The key distinction among objectives is \emph{how strongly they emphasize hard,
low-pass-rate inputs}. As $T$ increases, \ours{} uniquely approaches maximum
likelihood weighting in the low-pass-rate regime.

This weight perspective also provides a useful reinterpretation of GRPO~\citep{shao2024deepseekmathpushinglimitsmathematical}. Although GRPO is
heuristically motivated by Z-normalization using the empirical standard
deviation, such normalization induces a fundamentally different population-level
objective from REINFORCE, a conclusion also reached by recent 
work~\citep{davis2025objectivereasoningreinforcementlearning,liu2025understandingr1zeroliketrainingcritical,xiong2025reinforceadaadaptivesamplingframework,yang2026grouprelativeadvantagebiased,thrampoulidis2026advantage}. 

\Cref{tab:weight-functions} summarizes the population-level weighting functions. Relative to standard expected-reward
\begin{wrapfigure}{r}{0.5\columnwidth}
\centering

\captionsetup{type=table}
\caption{Population-level weighting functions $w(p)$.}
\label{tab:weight-functions}
\small
\begin{tabular}{lcccc}
\toprule
 & RL & GRPO & \ours{} ($T$) & ML \\
\midrule
$w(p)$
& $1$
& $\frac{1}{\sqrt{p(1-p)}}$
& $\frac{1-(1-p)^T}{p}$
& $\frac{1}{p}$ \\
\bottomrule
\end{tabular}

\end{wrapfigure}
optimization, GRPO upweights low-pass-rate inputs approximately as
$1/\sqrt{p}$ when $p$ is small, placing it between classical reinforcement
learning and maximum likelihood. However, increasing compute via additional
sampling under GRPO does not yield a better approximation to the maximum
likelihood objective, as the induced population loss is fundamentally distinct.
Moreover, as shown in \Cref{fig:weight_functions_for_grad_p}, the GRPO weighting function \emph{inverts} for
sufficiently large pass rates, increasing as $p \to 1$, unlike likelihood-based
objectives. Consequently, GRPO assigns increased weight to very easy inputs when
they are present, in contrast to the other formulations.\footnote{We conjecture
that this inversion may contribute to distribution sharpening (Pass@k degradation)~\citep{yue2025doesreinforcementlearningreally,wu2026invisibleleashrlvrescape} when datasets
contain a substantial fraction of overly easy inputs, and leave a detailed
analysis to future work.}

\begin{figure}[t]
    \centering
    \includegraphics[width=0.5\linewidth]{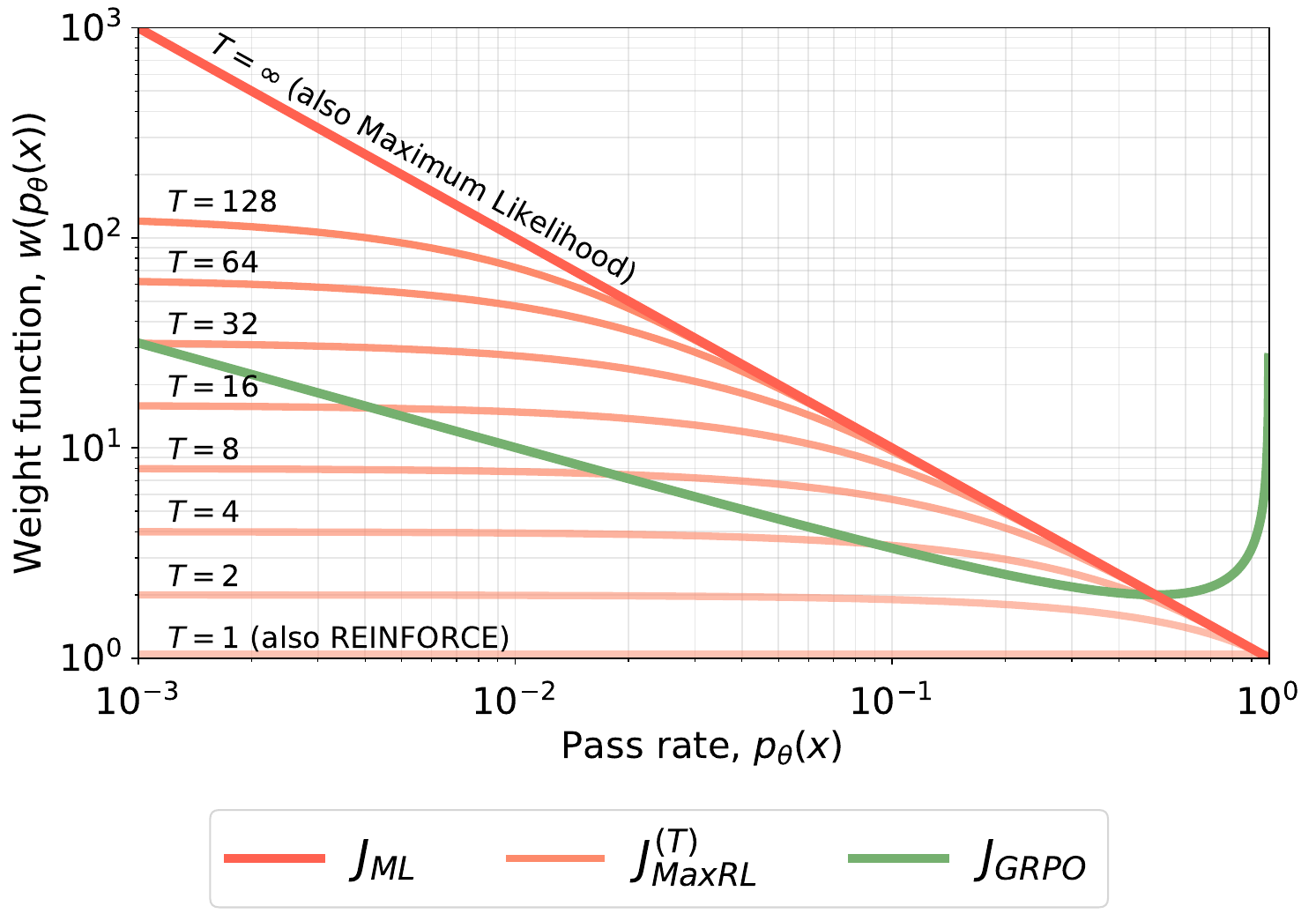}
    \caption{\footnotesize \textbf{(Weight functions)} Population-level weighting functions $w(p)$ as a function of pass rate $p$. Truncated objectives, $J_\mathrm{\ours{}}^{(T)}$, interpolate between REINFORCE and maximum likelihood as $T$ increases.}
    \label{fig:weight_functions_for_grad_p}
\end{figure}

\section{Experiments}
\label{sec:experiments}

We now turn to the empirical evaluation of \ours{}.
We begin in \cref{sec:imagenet} with a controlled setting where exact maximum likelihood optimization is possible, allowing direct comparison with \ours{} 
as compute increases. We then study non-differentiable correctness-based tasks in two regimes: (i) an \emph{effectively infinite-data} setting with large number of novel tasks (\cref{sec:maze}), and (ii) a \emph{data-scarce} setting with a fixed training dataset where we can scale compute nonetheless by training for many epochs over the same dataset (\cref{sec:GSM8K}). Finally, in \cref{sec:large_llms}, we train and evaluate billion-parameter reasoning models on mathematical problem-solving, testing whether the benefits of \ours{} extend to larger-scale LLM training.

Because we compare training objectives rather than algorithms, all methods are trained \emph{on-policy}. We compare against RLOO~\citep{ahmadian2024basicsrevisitingreinforcestyle} and GRPO~\citep{shao2024deepseekmathpushinglimitsmathematical} as primary baselines.

\subsection{Comparisons with Exact Maximum Likelihood}
\label{sec:imagenet}

As a first step, we evaluate how closely \ours{} approximates \emph{exact} maximum likelihood in a setting where the latter can be implemented exactly. We compare three objectives: (i) reinforcement learning on expected reward, (ii) \ours{} (a fixed sample-based objective), and (iii) exact maximum likelihood training. We consider a standard image classification task, where the exact ML objective corresponds to minimizing cross-entropy. The RL reward is defined as 1 if the predicted class matches the ground-truth label and 0 otherwise. We instantiate this comparison on the ImageNet~\citep{deng2009imagenet} dataset using a ResNet-50~\citep{he2015deepresiduallearningimage} based classifier trained under each objective; full experimental details are provided in Appendix~\ref{app:imagenet}.
\Cref{fig:imagenet_summary_results} summarizes the results: REINFORCE (with a standard baseline) fails to achieve meaningful improvements even with very high per-input sampling budget, whereas exact maximum likelihood training yields steady gains on both average performance (Pass@1) and coverage (Pass@k).

\begin{figure}[t]
    \centering
    \includegraphics[width=0.7\textwidth]{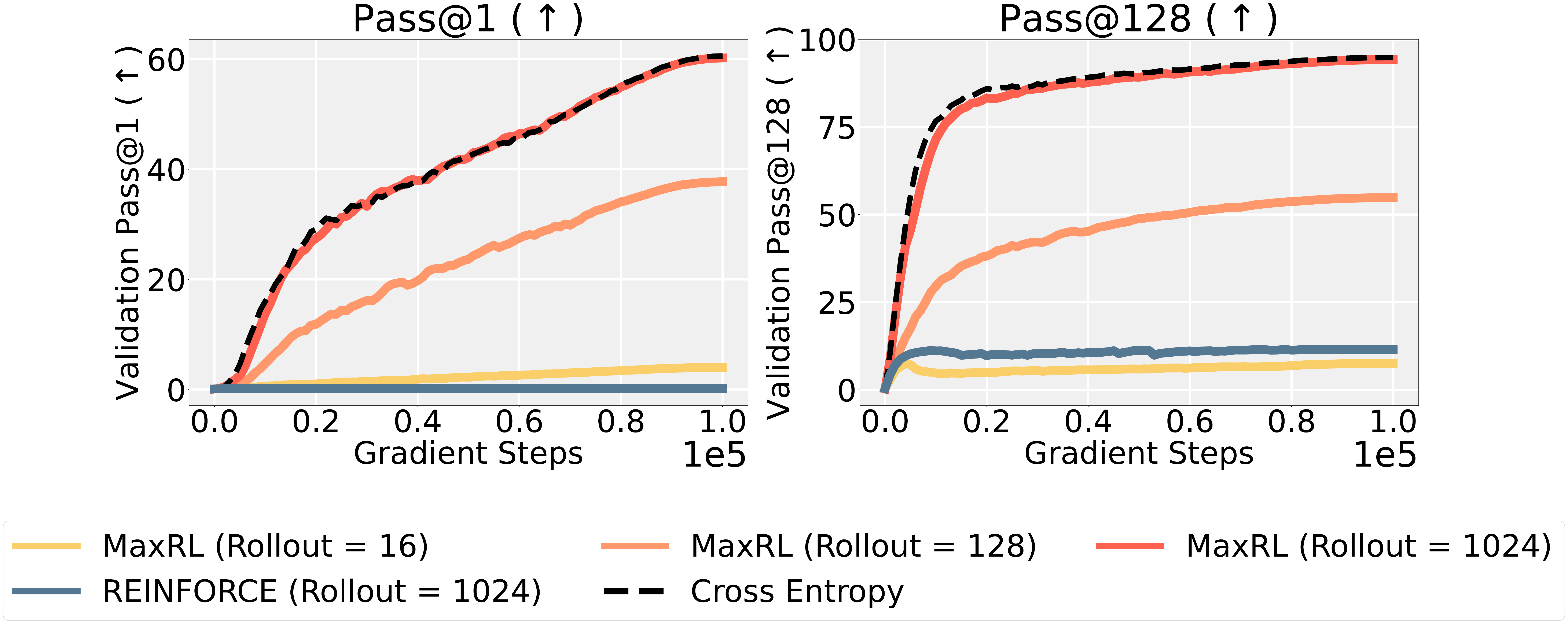}
    \caption{\footnotesize \textbf{(ImageNet)} Comparison of training dynamics under exact maximum likelihood, \ours{}, and REINFORCE in a controlled image classification setting. With sufficient rollouts, \ours{} closely matches cross-entropy training, while REINFORCE fails to make progress from low initial pass rates.}
    \label{fig:imagenet_summary_results}
    \vspace{-10pt}
\end{figure}

In contrast, \ours{} is trained on the same samples and observes the same sparse set of successful trajectories as REINFORCE, but makes more effective use of this limited learning signal through likelihood-inspired reweighting. As the compute increases by means of higher rollout counts, \emph{\ours{} improves consistently and closely tracks exact maximum likelihood.} We also analyze the gradient norm resulting from different objectives in~\cref{fig:imagenet_p_vs_grad_p}: \ours{} and cross-entropy concentrate learning signal on harder tasks and are characteristically similar given sufficient compute for \ours{}, whereas GRPO and REINFORCE exhibit very different behavior. For additional experiments such as comparison to GRPO, we refer the reader to Appendix~\ref{app:imagenet}.

\begin{AIbox}{Takeaway 1}
When direct maximum-likelihood optimization is available, \ours{} converges to it as sampling compute increases.
\end{AIbox}

\subsection{Infinite Data Regime}
\label{sec:maze}

\begin{figure}[h]
    \centering
    \includegraphics[width=0.99\linewidth]{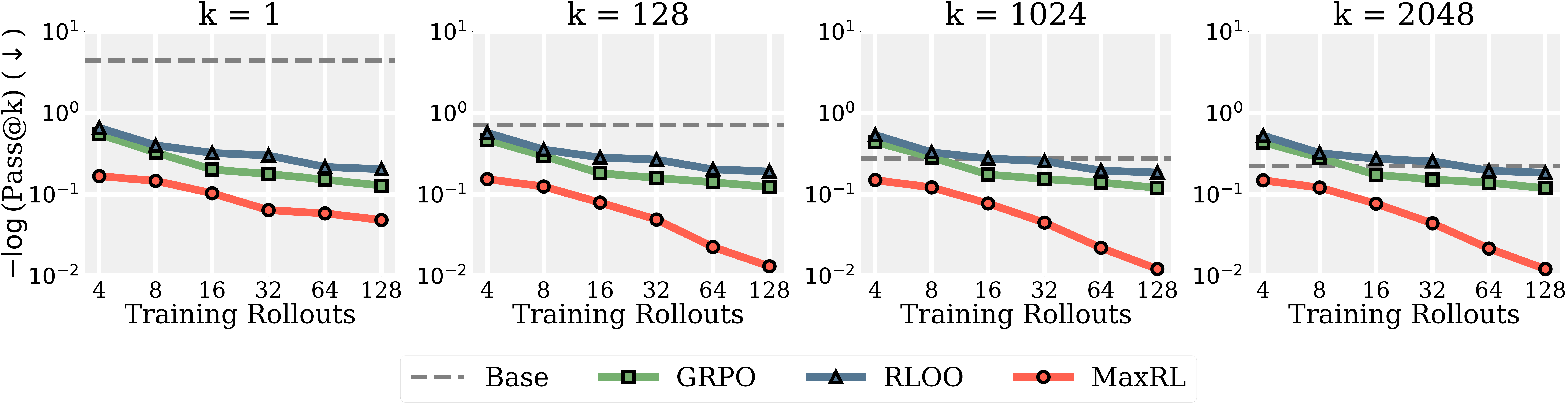}
\caption{\footnotesize \textbf{(Maze)} Motivated by~\citet{schaeffer2025largelanguagemonkeyspower}, we record $-\log(\mathrm{pass}@k)$ (lower is better) as a function of training rollouts for different objectives. We see that across all different inference rollout budgets ($k$), \ours{} exhibits better scaling compared to GRPO and RLOO as we increase number of training rollouts. Moreover, there is almost no gap in $\mathrm{pass}@k$ and $\mathrm{pass}@1$ for GRPO and RLOO despite no repeated data, indicating that these algorithms may have lost the ability to improve further, whereas \ours{} has a substantial gap and potentially can improve more given additional compute.}
    \label{fig:maze_scaling_with_additional_compute}
\end{figure}

Next, we study \ours{} in non-differentiable settings. For the first experiment, we study how \ours{} behaves in data-rich domains. To simulate training with infinite data, we construct a procedurally generated maze-navigation environment with 1 million unique $17 \times 17$ mazes for training where multiple valid solution paths might exist for a given task. We reserve a held-out set of 256 mazes for evaluation and apply a brief supervised pretraining phase to ensure a non-zero initial pass rate. The complete details of the task are provided in the Appendix~\ref{app:maze_task_description}.

We train a lightweight transformer model~\citep{vaswani2023attentionneed} with approximately 3M parameters and simulate extended training by running 9{,}000 RL steps with a batch size of 256. We report performance after 9{,}000 steps in \cref{fig:maze_scaling_with_additional_compute} as a function of training compute, implemented as different number of rollouts per prompt, from $4$ to $128$, while keeping other hyperparameters the same. Notice that this is a substantial amount of compute for our particular model size.

All three objectives (RLOO, GRPO, and \ours{}) improve upon the base model, but the magnitude of improvement differs markedly across methods. Even at the highest compute budget (128 rollouts per prompt), RLOO fails to match the performance achieved by \ours{} at the lowest budget (4 rollouts) across all pass@k metrics. GRPO-trained models at the highest compute similarly trail \ours{} trained with only 16 rollouts per prompt at every pass@k evaluation. These results highlight that \ours{} \emph{scales with compute} far more effectively than competing frameworks when large amounts of unique training data are available. Moreover, there is almost no gap in $\mathrm{pass}@k$ and $\mathrm{pass}@1$ for GRPO and RLOO \textit{despite no repeated data}, showing that these algorithms have lost the ability to improve further, whereas \ours{} has a substantial gap and potentially can improve more given additional compute. Comparisons with additional baselines are provided in \cref{tab:maze_baseline_comparisons} and Appendix~\ref{app:additionalBaselineComparisons}.

\begin{AIbox}{Takeaway 2}
In a data-rich training regime, \ours{} scales more favorably with additional compute compared to existing methods.
\end{AIbox}

\subsection{Data-Scarce Regime}
\label{sec:GSM8K}

\begin{figure}[h]
    \centering
    \includegraphics[width=0.99\linewidth]{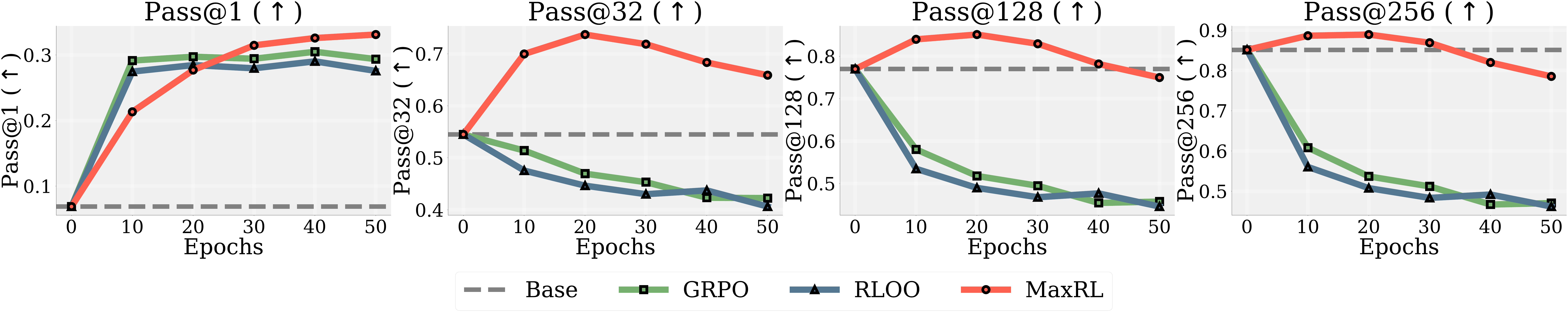}
\caption{\footnotesize \textbf{(GSM8K)} Training dynamics on GSM8K with a fixed dataset and increasing training compute in terms of RL steps. \ours{} shows slower initial gains but ultimately achieves higher performance with substantially less pass@k degradation.}

    \label{fig:smollm_pass_at_k_vs_training_steps}
\end{figure}

We next consider a data-scarce regime where models are trained for many epochs on a fixed dataset until peak performance is reached, with the goal of identifying which method extracts the highest attainable performance. Unlike the infinite-data setting in \cref{sec:maze}, longer training in this regime does not necessarily translate into improved performance due to the increased risk of overfitting. Specifically, we train a SmolLM2-360M-Instruct model~\citep{allal2025smollm2smolgoesbig} on GSM8K~\citep{cobbe2021trainingverifierssolvemath} for up to 50 epochs. Training dynamics is reported in \cref{fig:smollm_pass_at_k_vs_training_steps}, with additional experimental details provided in Appendix~\ref{app:gsm8k_task_description}.

All methods improve on the base model in terms of pass@1 performance; however, only \ours{} consistently exceeds the base model in pass@k metrics. In contrast, RLOO and GRPO exhibit massive pass@k degradation with extended training, mirroring the behavior observed by~\citet{yue2025doesreinforcementlearningreally} and exacerbated here by prolonged training on a fixed dataset. RLOO and GRPO reach their peak pass@1 performance faster than \ours{} (around 10 epochs), but \ours{} overtakes competing methods at approximately 30 epochs and continues to increase through the end of training, reaching a higher peak. At the same time,  pass@k remains substantially healthier for \ours{} and exceeds the base model for a large portion of training.
This behavior suggests \emph{\ours{} is more resistant to overfitting}, particularly with regard to output diversity. Additional comparisons with baseline methods are reported in~\cref{tab:gsm8k_comparison_with_baselines}.

\begin{AIbox}{Takeaway 3}
In a data-scarce regime, \ours{} can sustain improvement over a large number of epochs, demonstrating less pass@k degradation (overfitting) and converging to a higher average performance.
\end{AIbox}

\subsection{Large Reasoning Model Training}
\label{sec:large_llms}

\begin{figure*}[h]
    \centering
    \includegraphics[width=0.98\linewidth]{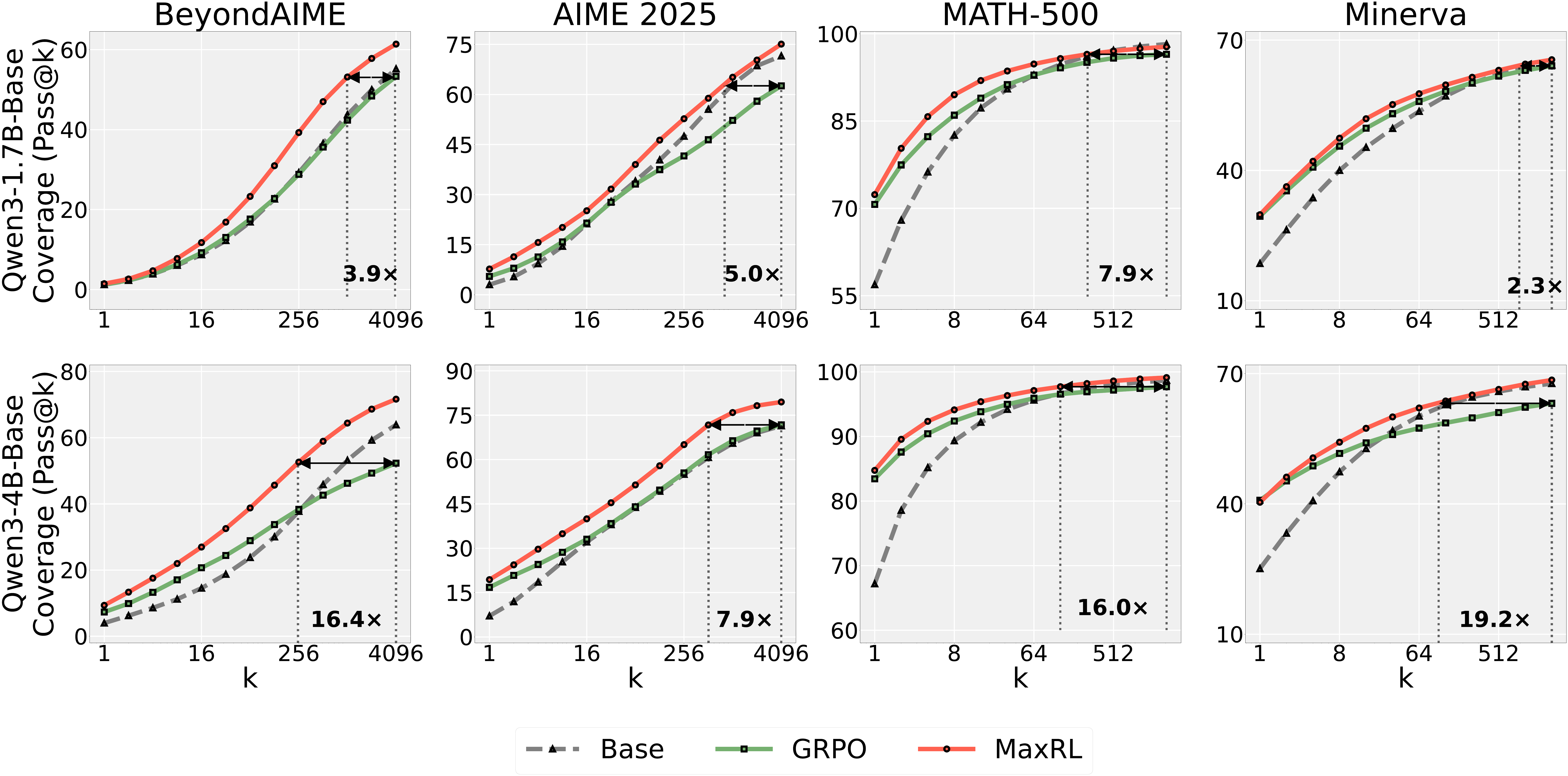}
    \caption{\footnotesize \textbf{(Qwen3 training results)} Evaluation of final checkpoints from training Qwen3-1.7B-Base and Qwen3-4B-Base, on 4 benchmarks: AIME 2025, BeyondAIME, MATH-500 and Minerva. \ours{} matches or outperforms GRPO in all 4 evaluation datasets and shows little to no degradation at coverage (pass@k) for very high k values. We also note the increase in inference efficiency: \ours{} can provide $2.3 \times$ - $19.2\times$ speedup compared to GRPO while generating multiple samples with a perfect verifier and maintains similar or better pass@1 performance.}
    \label{fig:large_scale_results_summary}
\end{figure*}

We next demonstrate that the benefits of \ours{} extend to larger-scale LLM reasoning training. We train Qwen3-1.7B-Base and Qwen3-4B-Base models on POLARIS-53K~\citep{Polaris2025}, a dataset of approximately 50{,}000 mathematical reasoning prompts, using 256 prompts per batch, 16 rollouts per prompt, and 1000 RL steps.
Notice that this setup utilizes lower rollout counts and RL steps than prior experiments to allow training larger models within our compute budget.
We evaluate on four standard math benchmarks: AIME 2025, BeyondAIME~\citep{bytedance_seed_2025_beyondaime}, MATH-500~\citep{hendrycks2021measuringmathematicalproblemsolving,lightman2023letsverifystepstep}, and Minerva~\citep{lewkowycz2022solvingquantitativereasoningproblems}. We compare against GRPO~\citep{shao2024deepseekmathpushinglimitsmathematical}, a widely used baseline for large-scale reasoning~\citep{guo2025deepseek,yang2025qwen3technicalreport}. Additional details are provided in Appendix~\ref{app:realistic_reasoning_task_description}. 

\Cref{fig:large_scale_results_summary} summarizes our main results. Across both model sizes, \ours{} consistently Pareto-dominates GRPO, achieving higher pass@1 while simultaneously improving pass@k. Consistent with prior work~\citep{yue2025doesreinforcementlearningreally,wu2026invisibleleashrlvrescape}, GRPO exhibits pronounced pass@k degradation at larger $k$. In contrast, \ours{} improves pass@k relative to both the pretrained base model and the GRPO-trained checkpoint in 7 out of 8 evaluation settings. Improved pass@k directly translates into inference efficiency under repeated sampling. As shown in \Cref{fig:large_scale_results_summary}, \ours{} achieves up to $20\times$ test-time scaling efficiency gains when using a perfect verifier to filter wrong solutions, yielding substantial practical savings at inference time. We note that many settings admit a strong verifier, such as programming~\citep{chen2021evaluatinglargelanguagemodels} or Lean~\citep{lean_theorem_prover}, where \ours{} can show strong benefits over other RL objectives. We provide additional results, such as evaluation on 4 additional benchmarks, and comparisons with additional baselines, in Appendix~\cref{app:additionalBenchmarkEvaluations,app:additionalBaselineComparisons}.

\begin{AIbox}{Takeaway 4}
On larger scale mathematical reasoning, \ours{} Pareto-dominates GRPO, shows little to no diversity degradation with respect to the base model, and leads to strong (up to $20\times$) test-time scaling efficiency gains.
\end{AIbox}

\subsection{Behavior Analysis of \ours{}}

Finally, we study whether \ours{} shows different characteristics compared to commonly used RL objectives besides performance metrics. We first study the gradient norms produced by different objectives:~\cref{fig:qwen_p_vs_grad_p_norm} illustrates that \ours{} generates higher gradient norms on harder prompts and lower gradient norms on easier prompts. This behavior matches that of cross-entropy in fully differentiable image classification setting (\cref{fig:imagenet_p_vs_grad_p}), showing that \ours{} concentrates learning signal on harder problems unlike GRPO and RLOO. These larger gradient norms on more difficult prompts then translate into the model's capability to generate correct solutions for a larger fraction of problems during training.~\cref{fig:fraction_problems_solved_during_training} illustrates this trend, as \ours{} consistently generates at least one correct rollout for a larger fraction of training prompts, and the gap between \ours{} and GRPO persists as we train longer. Appendix ~\ref{app:maze_smollm_non_zero_pass_rates} further corroborates these findings in the maze and GSM8K training settings. In addition,~\cref{fig:gsm8k_pass_rate_distribution_sharpening} shows that GRPO and RLOO drive the per-prompt pass-rate distribution to the extremes of 0 and 1, consistent with the distribution-sharpening narrative, whereas \ours{} maintains a broad pass-rate distribution throughout training. Finally, the inverse signal-to-noise ratio (SNR) analysis in~\cref{fig:inverse_snr_analysis} shows that \ours{} becomes more stable as the rollout budget increases, despite a stronger amplification of the learning signal from problems with very-low-success-rate ($p \rightarrow 0$). Appendix~\ref{app:additionalAnalysis} further examines how \ours{} and GRPO differ in distribution sharpening and non-binary reward settings. In general, these results suggest that \ours{} differs in significant ways from other RL objectives, and we leave a deeper investigation of these differences to future work. 

\begin{figure}[t!]
    \centering
    \includegraphics[width=0.8\linewidth]{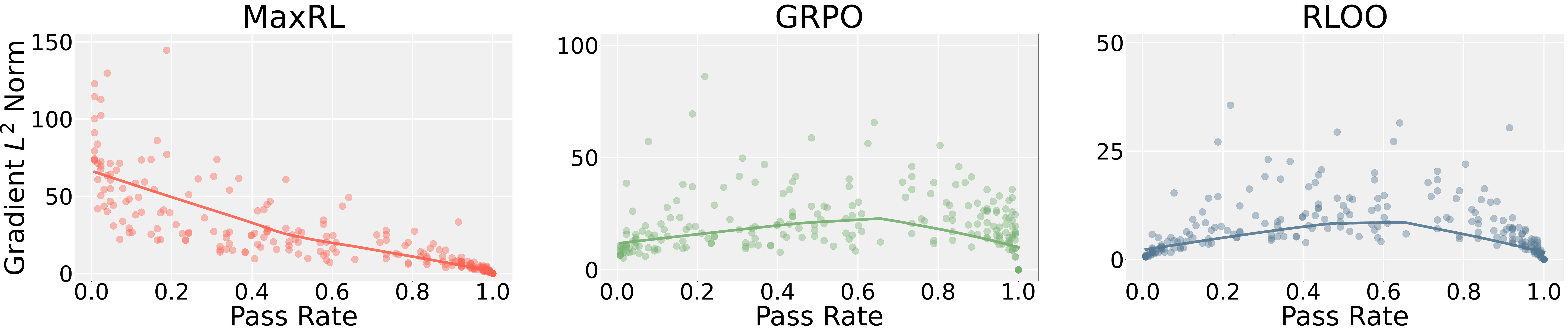}
    \caption{\footnotesize \textbf{(Gradient norm analysis)} To compare different objectives qualitatively, we show a scatter plot of gradient $L^2$ norm vs pass rate over individual prompts. We use Qwen2.5-1.5B-Instruct on MATH-500 dataset for this analysis. \ours{} generates larger gradient norms  prompts with close to 0 pass rates.}
    \label{fig:qwen_p_vs_grad_p_norm}
\end{figure}

\begin{figure}[t!]
    \centering
    \includegraphics[width=0.8\linewidth]{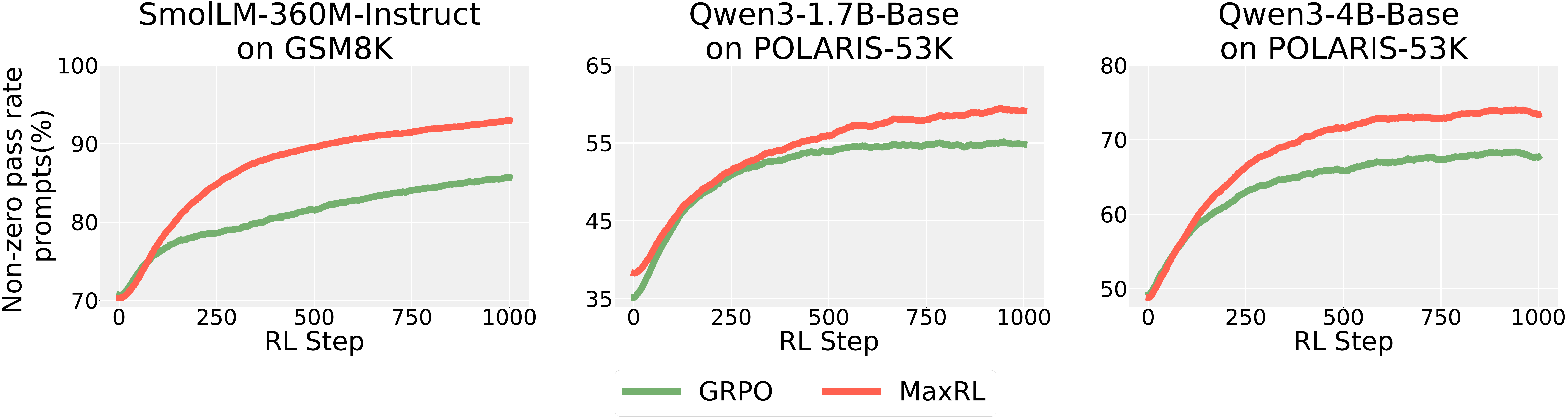}
    \caption{\footnotesize \textbf{(Training dynamics comparison)} Fraction of prompts where the model generates at least one correct rollout (out of 128, 16, and 16 rollouts for SmolLM-360M-Instruct, Qwen3-1.7B-Base, and Qwen3-4B-Base, respectively) during training. \ours{} consistently produces at least one correct rollout for more prompts.}
    \label{fig:fraction_problems_solved_during_training}
\end{figure}

\begin{AIbox}{Takeaway 5}
Besides performance metrics, \ours{} also exhibits different optimization dynamics. Most notably, it produces stronger gradients on harder prompts, and also leads to a larger fraction of prompts with at least one correct rollout during training.
\end{AIbox}

\section{Related Work} 
\label{sec:relatedWorks}

\paragraph{Supervised training vs reinforcement learning} Supervised learning and reinforcement learning (RL) are complementary but fundamentally different paradigms. Supervised training is stable, sample-efficient, and well-calibrated within the training distribution~\citep{ngNIPS2001}, but it is limited by the quality and scope of available data and cannot directly optimize non-differentiable objectives such as correctness or preferences. In contrast, RL can optimize such objectives directly, typically via policy gradients~\citep{williams1992simple,sutton1999policy,sutton1998reinforcement,schulman2017proximalpolicyoptimizationalgorithms,guo2025deepseek}, and improve performance beyond the available demonstrations by having access to interactions with an environment and the resulting reward-based feedback. Although recent work has reframed RL objectives as supervised ones~\citep{rafailov2024directpreferenceoptimizationlanguage}, on-policy learning, characteristic of online RL algorithms, appears crucial for optimal performance~\citep{tajwar2024preferencefinetuningllmsleverage,xu2024dposuperiorppollm}. Modern foundation model training often combines supervised learning on human data with subsequent RL~\citep{ouyang2022traininglanguagemodelsfollow}. Unlike these approaches, we assume no access to high-quality demonstrations or a stronger model, and instead study a purely interactive RL setting that nonetheless optimizes an objective that mimics cross-entropy. 

\paragraph{Closely related work}
One closely related line of work is that of~\citet{xiong2025reinforceadaadaptivesamplingframework}, which also considers non-linear functions of the pass rate in reinforcement learning. Both their work and ours are motivated by the observation that expected-reward objectives can underweight low-pass-rate prompts, and that maximizing a log-likelihood-like objective can mitigate this issue. However, the focus of~\citet{xiong2025reinforceadaadaptivesamplingframework} is on adaptive rollout budget allocation and sampling strategies, treating the choice of non-linear weighting as part of an algorithmic design space. In contrast, we do not employ adaptive sampling, and instead derive a sampling-based on-policy estimator that approaches maximum likelihood as rollout compute budget is increased. We also empirically focus on demonstrating better data and compute scaling with our framework, while~\citet{xiong2025reinforceadaadaptivesamplingframework} focuses on comparing against other adaptive sampling frameworks~\citep{yu2025dapoopensourcellmreinforcement}. Furthermore,~\citet{davis2025objectivereasoningreinforcementlearning} provides a theoretical argument characterizing the population-level objectives approached by specific binary-reward reinforcement-learning algorithms in asymptotic limits, showing that certain procedures induce log-like weighting of the pass rate. Our work addresses a complementary question: how finite sampling defines explicit population-level objectives. We establish an exact estimator--objective equivalence at each finite rollout count and empirically evaluate how this objective-level interpolation manifests as compute is increased in both controlled settings and large-scale LLM post-training experiments. We discuss additional related work in Appendix~\ref{app:additionalRelatedWorks}.

\section{Conclusion}

In this work, we establish Maximum Likelihood Reinforcement Learning as a principled optimization framework for non-differentiable binary reward settings. We showed that \ours{} approaches maximum likelihood in differentiable settings as compute increases and that in non-differentiable settings it offers key advantages over traditional RL, scaling more effectively with additional compute and data. More broadly, our results suggest that some limitations attributed to reinforcement learning with foundation models arise from objective choice rather than optimization or sampling. Our work currently assumes a binary reward setting and does not extend directly to continuous or arbitrarily valued rewards. Moreover, one can also consider objectives other than maximum likelihood to optimize following our framework. Generalizing \ours{} to continuous rewards, multi-turn reinforcement learning, and off-policy settings such as PPO-style training are promising directions for future work.

\section*{Acknowledgements}

This research used both the DeltaAI advanced computing and data resource~\citep{deltaai}, which is supported by the National Science Foundation (award OAC 2320345) and the State of Illinois, and the Delta advanced computing and data resource~\citep{delta}, which is supported by the National Science Foundation (award OAC 2005572) and the State of Illinois. Delta and DeltaAI are joint efforts of the University of Illinois Urbana-Champaign and its National Center for Supercomputing Applications. Both of these computing resources were used as part of ACCESS-approved compute grants~\citep{access_compute}. The authors also appreciate the computing resources of Bridges-2~\citep{psc_computing} at Pittsburgh Supercomputing Center through ACCESS allocation CIS240901 from the Advanced Cyberinfrastructure Coordination Ecosystem: Services \& Support (ACCESS) program, which is supported by National Science Foundation grants \#2138259, \#2138286, \#2138307, \#2137603, and \#2138296. Overall, this project used ACCESS grants  CIS240901, CIS250216, CIS250428, CIS250651, CIS250835, CIS250560, CIS251063, and CIS251385 for its compute resources. The authors also thank the CMU FLAME center and the CMU Babel Compute Cluster for additional compute support during the early phases of this project. 

This work was supported in part by ONR N000142312368 and ONR MURI N00014-25-1-2116. Moreover, Fahim Tajwar was partially supported by the U.S. Army Futures Command under Contract No. W519TC-23-C-0030 during the project. Yuda Song gratefully acknowledges the support of the Two Sigma Fellowship. Daman Arora was supported by the National Science Foundation under Grants CCF-2106778. Yiding Jiang gratefully acknowledges the support of the Google PhD Fellowship.

The authors thank Brandon Pusateri, Jillian Lehosky, and Greg Bauer from ACCESS Support Staff for their incredible help in approving supplements and renewals for ACCESS compute grants throughout this project. Moreover, the work would not have been completed in a timely manner without the help of Brett Bode from NCSA Delta Support Staff, who provided the authors with critical help in properly utilizing the Delta cluster. %
The authors gratefully acknowledge Qianqian Wang, Yifei Zhou, Samuel Sokota, Yutong He, Lili Chen, Stephan Xie, Haque Ishfaq, Gaurav Rohit Ghosal, Aakash Lahoti, Kevin Li and other members of the Zanette, Russ and Auton lab for valuable feedback and suggestions.

\iclr{\bibliographystyle{icml2026}}
\bibliography{refs}

\newpage
\appendix
\onecolumn

\makeatletter
\def\addcontentsline#1#2#3{%
  \def\@tempa{#1}\def\@tempb{toc}%
  \ifx\@tempa\@tempb \def\@tempc{atoc}\else \def\@tempc{#1}\fi
  \protected@write\@auxout
       {\let\label\@gobble \let\index\@gobble \let\glossary\@gobble}%
       {\string\@writefile{\@tempc}{\protect\contentsline{#2}{#3}{\thepage}{\@currentHref}}}}
\makeatother

\setcounter{tocdepth}{2}
\begingroup
  \hypersetup{linkcolor=MidnightBlue}
  \vspace*{-1.6em}
  {\noindent\Huge\bfseries Appendix\par}
  \vspace{0.5em}
  \large
  \renewcommand{\baselinestretch}{0.75}\selectfont
  \makeatletter
  \renewcommand*\l@section[2]{%
    \ifnum \c@tocdepth >\z@
      \addpenalty\@secpenalty
      \addvspace{0.3em \@plus\p@}%
      \setlength\@tempdima{1.5em}%
      \begingroup
        \parindent \z@ \rightskip \@pnumwidth
        \parfillskip -\@pnumwidth
        \leavevmode \bfseries
        \advance\leftskip\@tempdima
        \hskip -\leftskip
        #1\nobreak\hfil \nobreak\hb@xt@\@pnumwidth{\hss #2}\par
      \endgroup
    \fi}
  \@starttoc{atoc}
  \makeatother
\endgroup
\newpage

\section{Public Code Release}

We provide sufficient details about our implementation, hyperparameters, environment design, and datasets in the main paper and the appendix to effectively reproduce the results in this paper. The assets required to reproduce the results in our paper can be found through our website: \url{https://zanette-labs.github.io/MaxRL/}. In particular, the code and scripts to reproduce the results in this paper have been open-sourced at \url{https://github.com/tajwarfahim/maxrl}.

\section{Extended Related Works} \label{app:additionalRelatedWorks}

\paragraph{Cross-entropy objective} 
Maximizing log-likelihood via optimizing cross-entropy is a widely used framework in machine learning due to its simplicity and favorable theoretical properties. In particular, cross-entropy is a strictly proper scoring rule, meaning that the expected loss is uniquely minimized by the true probability distribution, which encourages statistically calibrated predictions~\citep{good2018,savageproperscoringrule,Gneiting01032007,waghmare2025properscoringrulesestimation}. Moreover, it yields statistically efficient estimators under standard assumptions~\citep{Vaart_1998,casella2002statistical,lehmann2006theory}, and induces gradients that concentrate learning signal on low-probability or uncertain outcomes through logarithmic weighting~\citep{Wang2020ACS}. As a result, cross-entropy (or log-loss) often yields consistent estimators for classification and tends to generalize well in practice~\citep{ngNIPS2001,tong2004}. However, more recent work has shown that models trained to maximize log-likelihood can still overfit and exhibit miscalibration, motivating post-hoc techniques such as temperature scaling~\citep{mizil2005,guo2017calibrationmodernneuralnetworks}. Moreover, the unbounded nature of cross-entropy and its sensitivity to small perturbations in the predicted distribution suggest that alternative strictly proper scoring rules may be more suitable in certain settings~\citep{kornblith2021demystifying}. Because cross-entropy is extensively studied, we refer interested readers to~\citet{mao2023crossentropylossfunctionstheoretical,crossEntropySurvey,Terven2025LossMetrics} for a comprehensive review.

\paragraph{Supervised training vs reinforcement learning} Supervised learning has been the go-to training paradigm in machine learning, beginning with early ``learning with a teacher'' neural-network-based systems such as the perceptron~\citep{Rosenblatt1958ThePA} and later becoming practical with backpropagation-based neural network training~\citep{rumelhart1986learning}. It has been used to tackle a broad range of problems, from financial fraud detection~\citep{AFRIYIE2023100163,Editya_Alamin_Pramana_Kurniati_2025}, to sentiment analysis~\citep{zhang2018deeplearningsentimentanalysis} and spam detection~\citep{smsspamclassification,imagespamdetection}. More recently, supervised training has been used in modern image classification systems~\citep{lenet,alexnet} to achieve strong performance. In modern foundation models, ``pretraining'' is typically done by supervised learning via minimizing cross-entropy over next-token prediction~\citep{radford2018improving,radford2019language} on a large corpus of text, followed by additional supervised training on high-quality, human-written demonstrations~\citep{ouyang2022traininglanguagemodelsfollow,zhang2025instructiontuninglargelanguage} to teach models how to respond to prompts, often known as instruction-tuning. In sequential decision-making domains, supervised training appears as behavior cloning from expert demonstration, used in early autonomous driving and robotic systems~\citep{alvinn,bojarski2016endendlearningselfdriving,codevilla2018endtoenddrivingconditionalimitation}. However, supervised learning in sequential decision-making breaks the i.i.d. assumption since the learned policy's actions affect which states it visits during execution, causing compounding errors over time~\citep{efficientReductionsForImitationLearning,belkhale2023dataqualityimitationlearning}. The classic no-regret reductions line (DAgger) makes this explicit and addresses drift by iteratively querying the expert on states visited by the learner, turning the sequential problem into an online supervised learning loop with improved guarantees~\citep{ross2011reductionimitationlearningstructured}.

In such scenarios, reinforcement learning is an attractive alternative paradigm that formalizes learning under delayed, sparse, and evaluative feedback in Markov decision processes (MDPs), with foundational roots in dynamic programming and MDP theory~\citep{mdp}. Mechanistically, the key contrast with supervised learning / behavior cloning is that supervised learning assumes (or benefits strongly from) an i.i.d. dataset of correct targets under a fixed data distribution, whereas RL's data distribution is policy-induced and nonstationary, and gradients arise from credit assignment through rewards rather than direct target labels. This mismatch shows up starkly in sequential prediction/imitation: naive behavior cloning trains on expert state distributions but at test time visits states induced by its own errors, causing compounding error (a form of distribution shift / ``state drift''). Classical RL algorithms include temporal-difference learning~\citep{sutton1988learning,sutton1998reinforcement} and value-based control such as Q-learning~\citep{watkins1992qlearning}, while policy-gradient methods~\citep{williams1992simple,sutton1999policy} directly optimize expected return via likelihood-ratio gradients (REINFORCE) and later stabilized large-scale learning through variants like trust-region policy optimization (TRPO)~\citep{schulman2017trustregionpolicyoptimization} and off-policy actor--critic methods for continuous control (e.g., DDPG~\citep{lillicrap2019continuouscontroldeepreinforcement}) and maximum-entropy actor--critic (e.g., SAC~\citep{haarnoja2018softactorcriticoffpolicymaximum}). Empirically, deep RL's modern resurgence is often associated with representation learning + RL on high-dimensional inputs (e.g., DQN~\citep{mnih2013playingatarideepreinforcement}).

A large body of work blends supervised and RL to get the best of both: (i) Imitation + online correction methods like DAgger~\citep{ross2011reductionimitationlearningstructured} explicitly combine supervised learning with interactive data collection to mitigate distribution shift ; (ii) Inverse RL / MaxEnt IRL reframes imitation as learning a reward/cost model that explains expert behavior, with maximum-entropy formulations giving a principled probabilistic objective~\citep{maximum_entropy_inverse_reinforcement_learning}; (iii) Adversarial imitation (GAIL)~\citep{ho2016generativeadversarialimitationlearning} avoids explicit reward learning by matching occupancy measures via a GAN-like discriminator, typically trained with policy optimization; (iv) Learning from demonstrations in deep RL injects supervised losses and/or demonstration replay into RL to improve exploration and sample efficiency---e.g., DQfD~\citep{hester2017deepqlearningdemonstrations} combines TD learning with supervised large-margin imitation terms, and demonstration-augmented continuous-control methods address sparse-reward exploration failures~\citep{nair2018overcomingexplorationreinforcementlearning}; and (v) Trajectory-optimization--guided policy learning (guided policy search~\citep{guided_policy_search}) explicitly produces supervised targets for a policy network from trajectory optimization / local controllers, bridging optimal control, RL, and supervised regression. In modern LLM alignment, the same hybrid template appears as ``Supervised fine-tuning + preference-based RL'': InstructGPT~\citep{ouyang2022traininglanguagemodelsfollow} first performs supervised fine-tuning on demonstrations and then applies RL from human feedback (RLHF), while more recent approaches like Direct Preference Optimization (DPO)~\citep{rafailov2024directpreferenceoptimizationlanguage} recast parts of RLHF into a supervised-style classification objective --- illustrating an active trend of recovering supervised-like training signals even when the underlying goal is preference/reward optimization. Concurrent work such as~\citep{russo2026successconditioningpolicyimprovement} has also studied policy improvement through the lens of success conditioning. Another closely related view for comparing REINFORCE and Maximum Likelihood is through the KL divergence: where REINFORCE can be seen as a reverse KL and Maximum Likelihood can be seen as forward KL objective with different properties, as noted by~\citet{greedificationOperators}. Thus \ours{} is also conceptually related to RWR~\citep{peterRWR2007}/AWR~\citep{peng2019advantageweightedregressionsimplescalable}, which are also forward KL objectives.

Control as inference is also a closely related topic~\citep{millidge2020relationshipactiveinferencecontrol,odonoghue2020makingsensereinforcementlearning,stochastic_optimal_control,ito2024risksensitive,tarbouriech2023probabilisticinferencereinforcementlearning}, and we point the reader to~\citet{levine2018reinforcementlearningcontrolprobabilistic} for more details. Finally, our approach is closely related to Monte Carlo Expectation Maximization (MCEM), which approximates intractable expectations in maximum-likelihood estimation using samples~\citep{emPaper1977,mcem,mcemImplementations2001,ruth2024reviewmontecarlobasedversions}. In the infinite sample regime, \ours{} can be viewed through an EM lens: the observed event is that the answer is correct, and the latent variable is the successful trajectory that produced it. A sample-until-success implementation of this maximum-likelihood objective would therefore recover an MCEM-style procedure. The finite-compute version of \ours{}, however, makes a different design choice: it fixes the number of rollouts and thereby defines a compute-indexed family of sample-based objectives, interpolating between standard reward maximization and exact maximum likelihood. Thus, rather than using Monte Carlo samples only to approximate a fixed EM objective, \ours{} uses the sampling budget itself to define the learning objective optimized by policy-gradient methods.

\paragraph{Training LLMs for strong reasoning abilities} A few different approaches for post-training have demonstrated success, including supervised fine-tuning on human-crafted, high-quality demonstrations~\citep{wang2023farcamelsgoexploring}, iterative supervised training on self-generated, good-quality responses~\citep{expertIteration,zelikman2022starbootstrappingreasoningreasoning,gulcehre2023reinforcedselftrainingrestlanguage}, reinforcement learning from a learned reward model on human preferences~\citep{ouyang2022traininglanguagemodelsfollow}, and more recently preference-based contrastive learning~\citep{rafailov2024directpreferenceoptimizationlanguage,pang2024iterativereasoningpreferenceoptimization}. Reinforcement learning from verifiable rewards (RLVR), where LLMs receive reward from a ground truth verifier instead of using a trained reward model, has emerged as the dominant paradigm for instilling strong reasoning capabilities into LLMs~\citep{jaech2024openai,guo2025deepseek,kimiteam2025kimik15scalingreinforcement,lambert2025tulu3pushingfrontiers,yang2025qwen3technicalreport}. Whereas supervised training learns better behavior from fixed static datasets, reinforcement learning uses policy gradient algorithms (e.g., PPO~\citep{schulman2017proximalpolicyoptimizationalgorithms}, GRPO~\citep{shao2024deepseekmathpushinglimitsmathematical}, RLOO~\citep{ahmadian2024basicsrevisitingreinforcestyle}) to learn from self-generated responses and non-differentiable rewards. However, these algorithms and their variants~\citep{zheng2025groupsequencepolicyoptimization,liu2025understandingr1zeroliketrainingcritical,minimax2025minimaxm1scalingtesttimecompute} optimize expected reward or pass rate and only differ in how the advantage or off-policy updates are calculated. In contrast, in our work we focus on recovering the cross-entropy-based classification objective in an RL training pipeline, fundamentally differing from previous work. 

Since the recent advent of RLVR, the RLVR pipeline has been studied extensively~\citep{zeng2025simplerlzooinvestigatingtamingzero,liu2025understandingr1zeroliketrainingcritical,khatri2025artscalingreinforcementlearning} and alternative algorithms such as Dr.GRPO~\citep{liu2025understandingr1zeroliketrainingcritical}, DAPO~\citep{yu2025dapoopensourcellmreinforcement}, GSPO~\citep{zheng2025groupsequencepolicyoptimization}, CISPO~\citep{minimax2025minimaxm1scalingtesttimecompute}, and RAFT~\citep{xiong2025minimalistapproachllmreasoning} have been proposed. The idea of normalizing advantages by mean reward, similar to ours, has been explored in~\citet{huang2025mapomixedadvantagepolicy}, but whereas we normalize advantage by group mean reward (mean reward over the rollouts associated with a particular prompt),~\citet{huang2025mapomixedadvantagepolicy} normalizes by the batch mean reward (mean reward over all prompts in a batch of policy gradient updates). A limitation of our work is that we assign no gradient for tasks where the policy fails to generate even one correct rollout: recent work such as~\citet{feng2025dontwastemistakesleveraging} has studied this problem by employing a maximum-likelihood (MLE) objective in reward modeling to assign non-zero, confidence-dependent rewards to incorrect generations. Similarly to us, earlier work has studied how to learn effectively via RL from a collection of tasks as opposed to a single task, where each task may have different difficulties, and thus reward distribution. PopArt~\citep{hesselPopArt2019} is one such example, which uses a normalization by standard deviation for the rewards of each task (as opposed to normalization by mean reward in \ours{}), and we consider this as an earlier example of normalization by standard deviation before GRPO~\citep{shao2024deepseekmathpushinglimitsmathematical}. Finally, recent work such as~\citet{zhang2025interplaypretrainingmidtrainingrl} has also studied how RL training is influenced by pretraining and midtraining in toy didactic settings, establishing the importance of good pretraining/midtraining for the success of RL, similar to~\citet{gandhi2025cognitivebehaviorsenableselfimproving}.

\paragraph{RL training causes distribution sharpening} Despite its usefulness, questions remain on whether RLVR teaches LLMs fundamentally new behavior/skills, or simply sharpens existing good behavior from the pretrained model. Previous research~\citep{liu2025understandingr1zeroliketrainingcritical,zhao2025echochamberrlposttraining,ai2025rethinkingreflectionpretraining} demonstrated that certain reasoning skills like reflection already exist in the pretrained model, and~\citet{gandhi2025cognitivebehaviorsenableselfimproving} shows that good reasoning behaviors learned from pretraining are crucial for the success of RLVR in the post-training phase. More recently, studies~\citep{yue2025doesreinforcementlearningreally,dang2025assessing,wu2026invisibleleashrlvrescape} found that RLVR decreases the model's diversity by reducing pass@k. In our paper, we confirm these findings and attribute this to the RL objective itself as optimizing expected reward tends to marginalize learning signal from harder prompts, which results in distribution sharpening.

\paragraph{On exploration for reinforcement learning for LLMs} Exploration, or taking actions to discover new information, is a widely studied topic in reinforcement learning. A closely related topic is \textit{curiosity}, where an agent seeks new information about its environment via interactions. \textit{Intrinsic motivation} is a popular notion for curiosity, where the agent is driven by an exploration bonus that is not necessarily related to the task to be achieved~\citep{schmidhuber1991curious,schmidhuber2007godel}. Follow-up research has built on this notion to mitigate problems of sparse reward (reward is observed at a very belated phase of interactions) or no reward at all~\citep{pathak2017curiosity,pathak2019self,eysenbach2018diversity,burda2018exploration,sharma2019dynamics,yang2024explorationantiexplorationdistributionalrandom,vime}. Count-based bonuses have also been introduced as a way of computing intrinsic motivation~\citep{bellemare2016unifyingcountbasedexplorationintrinsic}. Prompt-level reweighting of gradients has also been studied~\citep{yu2025restrainspuriousvotessignals}, though under a different context (self-training) and a different weighting mechanism. Finally, adding noise to network parameters or optimization has been another line of work to improve exploration during RL training~\citep{fortunato2019noisynetworksexploration,ishfaq2024efficientrandomizedexplorationreinforcement,ishfaq2024provablepracticalefficientexploration,ishfaq2025langevinsoftactorcriticefficient}. Maximum entropy RL, the principle where one attempts to recover an agent that achieves high reward but is as stochastic as possible, can be seen as another attempt at solving exploration for classical RL~\citep{haarnoja2018softactorcriticoffpolicymaximum,boucher2025evidenceregularisationpropertiesmaximumentropy,eysenbach2022maximumentropyrlprovably,dong2025maximumentropyreinforcementlearning}. In summary, exploration-exploitation tradeoff~\citep{auer2002finite,thompson1933likelihood} has been a crucial topic for ensuring RL agents' success. 

More recently, exploration has emerged as an important topic for building modern LLM-based systems. There are two types of exploration to consider. The first is \textit{inference-time exploration}, where an agent has to efficiently gather information during deployment by strategically choosing its interactions with its environment,~\citet{tajwar2025traininggenerallycuriousagent} is an important work in this line of research. More importantly, pass@k degradation (mode collapse) during RLVR~\citep{yue2025doesreinforcementlearningreally,wu2026invisibleleashrlvrescape,gxchen2025klregularizedreinforcementlearningdesigned} has prompted research into \textit{train-time exploration}, where the challenge is to go beyond the pretrained model's capabilities~\citep{ni2025grpohelpllmstranscend} and discover new knowledge. Primary approaches include directly optimizing for pass@k~\citep{mahdavi2025beyond,walder2025passkpolicyoptimizationsolving,tang2025optimizinglanguagemodelsinference,chen2025passktrainingadaptivelybalancing}, curriculum learning~\citep{tajwar2025traininggenerallycuriousagent,chen2025selfevolvingcurriculumllmreasoning,setlur2025e3learningexploreenables,motwani2025h1bootstrappingllmsreason}, learning from additional hints or abstractions~\citep{qu2025rladtrainingllmsdiscover,chen2025nudgingboundariesllmreasoning,anonymous2025exploratory}, increasing number of rollouts to prevent RL gains from saturating~\citep{hu2025brorlscalingreinforcementlearning}, employing data curation algorithm to redirect effort to problems with low success rate~\citep{nguyen2025reasoningboundaryparadoxreinforcement}, leveraging expert guidance~\citep{chang2024datasetresetpolicyoptimization,qu2025pope}, or differential smoothing by penalizing entropy on low reward trajectories and encouraging entropy on high reward trajectories. Entropy-based bonuses to encourage exploration during RL training~\citep{hao2025rethinkingentropyinterventionsrlvr,chen2025explorationvsexploitationrethinking,cheng2025reasoningexplorationentropyperspective,wang2025beyond,anonymous2025entropypreserving,ged2024matryoshkapolicygradiententropyregularized} is another popular line of work for improving exploration. A few modern approaches for exploration bonus utilized for LLM training are proposed by~\citet{song2025outcomebasedexplorationllmreasoning,tuyls2025representationbasedexplorationlanguagemodels}. The idea of curiosity-driven exploration from classical RL discussed above has also been adopted for LLMs~\citep{dai2025cdecuriositydrivenexplorationefficient}. Although some literature has reported pass@k degradation during RL training, others have found the opposite results. For example, ProRL~\citep{liu2025prorlprolongedreinforcementlearning} has shown that RL training on a mixture of reasoning puzzles~\citep{stojanovski2025reasoninggymreasoningenvironments} can improve pass@k on a heldout reasoning task. Similarly,~\citet{yuan2025fxgxfgxllms} has shown that LLMs can learn new skills via RL by composing old ones, showing the promise of going beyond pretraining knowledge, and~\citet{cheng2025revisitingreinforcementlearningllm} also found pass@k to improve, particularly on tasks less likely to appear during the pretraining stage. Recent work also formalized Set-RL~\citep{hamid2026polychromic,orney2026polyepotrainingexploratoryreasoning} and Risk-Sensitive Reinforcement Learning (RS-GRPO)~\citep{jiang2025risksensitiverlalleviatingexploration} to improve diversity during LLM reasoning training and better navigate the exploration-exploitation tradeoff in LLM RL training, with the latter being a closely related prior work for our own work. A parallel effort has gone into difficulty-aware adaptive sampling, with~\citet{xiong2025reinforceadaadaptivesamplingframework,nguyen2025reasoningboundaryparadoxreinforcement,yang2026depthbreadthsynergyrlvrunlocking} being notable examples. Ray interference~\citep{schaul2019rayinterferencesourceplateaus} has been proposed as an explanation for the observed pass@k degradation. Overall, this line of research remains important as focus moves to LLMs discovering new information during RL training and it is therefore an ongoing field of research.

\paragraph{Generalization in reinforcement learning} Research on generalization in reinforcement learning asks a core question: does an agent learn principles that transfer beyond the exact environments it trained in, or does it just memorize experience~\citep{zhang2018studyoverfittingdeepreinforcement,zhang2018dissectionoverfittinggeneralizationcontinuous,schaul2019rayinterferencesourceplateaus,bengio2020interferencegeneralizationtemporaldifference}? Empirical work shows deep RL agents often overfit to training seeds, visuals, or dynamics, performing poorly on new levels, layouts, or slightly shifted physics. To study this, researchers built procedural benchmarks (like CoinRun/Procgen~\citep{cobbe2020leveragingproceduralgenerationbenchmark}) and multi-task suites (e.g., robotics task collections~\citep{yu2021metaworldbenchmarkevaluationmultitask,atamuradov2025evaluatingmodelagnosticmetalearningmetaworld} or LLM sequential decision-making task suites~\citep{tajwar2025traininggenerallycuriousagent}) that separate train and test environments. A major line of work improves generalization through regularization~\citep{kostrikov2021imageaugmentationneedregularizing} and invariances~\citep{zhang2021learninginvariantrepresentationsreinforcement} --- especially data augmentation~\citep{rad2020,raileanu2021automaticdataaugmentationgeneralization}, mixup-style methods~\citep{mixreg}, and representation learning tricks~\citep{higgins2018darlaimprovingzeroshottransfer,srinivas2020curlcontrastiveunsupervisedrepresentations,wang2021unsupervisedvisualattentioninvariance} that make policies rely less on superficial visual details. Another branch focuses on task and domain shift, using meta-RL~\citep{duan2016rl2fastreinforcementlearning,finn2017modelagnosticmetalearningfastadaptation,liang2024dynamiterldynamicmodelimproved}, multi-task learning~\citep{brunskill2013samplecomplexitymultitaskreinforcement}, domain randomization, and distributionally robust RL~\citep{clavier2022robustreinforcementlearningdistributional,lu2024distributionallyrobustreinforcementlearning,shi2025curiouspricedistributionalrobustness} to handle new tasks or uncertain dynamics. Recent lines of work have also directly studied exploration as a means to achieve generalization in RL~\citep{jiang2023rlgeneralizationexploration}, and have shown that simple architectural changes and scale can often improve generalization in the ProcGen benchmark~\citep{jesson2024improvinggeneralizationprocgenbenchmark}. On the theory side, classical PAC-MDP~\citep{strehl2009pacmdps} and robust MDP frameworks~\citep{robustmdps,robustdynamicprogramming} formalize when policies learned from limited samples can be expected to work in new situations. In the LLM settings, analogous questions appear in RLHF~\citep{ouyang2022traininglanguagemodelsfollow}, where reinforcement learning is used to align models with human preferences, and researchers now study how this training affects generalization to unseen prompts, behaviors, and user distributions~\citep{kirk2024understandingeffectsrlhfllm,lin-etal-2024-limited,lambert2024rewardbenchevaluatingrewardmodels,jia2024generalizingrewardmodelingoutofdistribution,li2026theoreticalunderstandinggeneralizationrlhf}. Overall, the field has moved from ``can RL learn?'' to ``what exactly does it learn, and when does that knowledge transfer?''

\newpage

\section{Theoretical Results}\label{app:proofs}

Here we present the proofs of theorems mentioned in the main paper. First we restate and prove~\cref{prop:ml-conditional}.

\begin{theorem}[Restatement of Theorem~\ref{prop:ml-conditional}]
The gradient of the maximum likelihood objective admits the following conditional
expectation representation:
\[
\nabla_\theta J_{\mathrm{ML}}(x)
=
\mathbb{E}\!\left[
\nabla_\theta \log m_\theta(z \mid x)
\;\middle|\;
f(z)=y^\ast(x)
\right].
\]
\end{theorem}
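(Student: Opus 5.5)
The plan is a direct computation. Write $p := \passrate(x) = \sum_{z\in\mathcal{Z}} m_\theta(z\mid x)\,\mathbb{I}\{f(z)=y^\ast(x)\}$ using the latent-generation representation from \cref{sec:preliminaries}. We assume $p>0$, so that both $\log p$ and the conditioning event are well defined. Since $J_{\mathrm{ML}}(x)=\log p$, the chain rule gives $\nabla_\theta J_{\mathrm{ML}}(x) = \nabla_\theta p / p$. It therefore suffices to express $\nabla_\theta p$ as an expectation of score functions and then divide by $p$.

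For the first step, differentiate the sum term by term. The indicator $\mathbb{I}\{f(z)=y^\ast(x)\}$ does not depend on $\theta$, because $f$ is a deterministic decoder and $y^\ast(x)$ is fixed. Hence
\[
\nabla_\theta p = \sum_{z} \mathbb{I}\{f(z)=y^\ast(x)\}\,\nabla_\theta m_\theta(z\mid x)
= \sum_{z} \mathbb{I}\{f(z)=y^\ast(x)\}\, m_\theta(z\mid x)\,\nabla_\theta \log m_\theta(z\mid x).
\]
The second equality is the log-derivative trick. It is valid on the support of $m_\theta(\cdot\mid x)$; terms with $m_\theta(z\mid x)=0$ contribute nothing, since at such a point $m_\theta$ attains its minimum and so $\nabla_\theta m_\theta=0$.

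Next, divide by $p$. The weights $m_\theta(z\mid x)\,\mathbb{I}\{f(z)=y^\ast(x)\}/p$ are exactly the conditional distribution of $z\sim m_\theta(\cdot\mid x)$ given the event $f(z)=y^\ast(x)$. The resulting sum is therefore $\mathbb{E}[\nabla_\theta \log m_\theta(z\mid x)\mid f(z)=y^\ast(x)]$, which is \eqref{eqn:cond-grad}.

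The only delicate point is justifying the interchange of gradient and sum when $\mathcal{Z}$ is countably infinite. For autoregressive models over a finite vocabulary with bounded length, $\mathcal{Z}$ is finite and nothing needs to be checked. In general, I would assume $m_\theta$ is differentiable and that $\sum_z \sup_{\theta'}\|\nabla_\theta m_{\theta'}(z\mid x)\|$ is locally finite, and then invoke dominated convergence. This is a standard regularity condition, and I would state it as such rather than treat it as a real obstacle.
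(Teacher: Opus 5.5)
Your proposal is correct and follows essentially the same route as the paper. Both arguments use the REINFORCE (log-derivative) identity for $\nabla_\theta p$ and the chain rule $\nabla_\theta \log p = \nabla_\theta p / p$, then identify the resulting ratio as the conditional expectation given the success event; your remarks on $p>0$, zero-probability trajectories, and interchanging gradient and sum simply make explicit regularity points the paper leaves implicit.
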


\begin{proof}
Recall the standard REINFORCE identity for the gradient of the pass rate:
\begin{align*}
\nabla_\theta \passrate(x) = \nabla_\theta \mathbb{E}_{z \sim m_\theta(\cdot \mid x)}[\mathbb{I}\{f(z) = y^\ast(x)\}] = \mathbb{E}_{z \sim m_\theta(\cdot \mid x)}[\mathbb{I}\{f(z) = y^\ast(x)\} \nabla_\theta \log m_\theta(z \mid x)].
\end{align*}
The gradient of the maximum likelihood objective is:
\begin{align*}
\nabla_\theta J_{\mathrm{ML}}(x) = \nabla_\theta \log \passrate(x) = \frac{\nabla_\theta \passrate(x)}{\passrate(x)} = \frac{\mathbb{E}_{z \sim m_\theta(\cdot \mid x)}[\mathbb{I}\{f(z) = y^\ast(x)\} \nabla_\theta \log m_\theta(z \mid x)]}{\mathbb{E}_{z \sim m_\theta(\cdot \mid x)}[\mathbb{I}\{f(z) = y^\ast(x)\}]}.
\end{align*}
By the definition of conditional expectation for an event $A$ with $\mathbb{P}(A) > 0$:
\begin{align*}
\mathbb{E}[X \mid A] = \frac{\mathbb{E}[X \cdot \mathbb{I}_A]}{\mathbb{P}(A)}.
\end{align*}
Letting $X = \nabla_\theta \log m_\theta(z \mid x)$ and $A = \{z : f(z) = y^\ast(x)\}$, and noting that $\passrate(x) = \mathbb{P}(A)$, we obtain:
\begin{align*}
\nabla_\theta J_{\mathrm{ML}}(x) = \mathbb{E}\!\left[\nabla_\theta \log m_\theta(z \mid x) \;\middle|\; f(z) = y^\ast(x)\right].
\end{align*}
\end{proof}

Next, we restate and prove~\cref{prop:logT_equivalence}.

\begin{theorem}[Restatement of Theorem~\ref{prop:logT_equivalence}]
The estimator $\widehat{g}_N(x)$ is an unbiased estimator for the \ours{} gradient of order $T = N$, i.e.,
\[
\mathbb{E}\!\left[\widehat{g}_N(x)\right]
=
\nabla_\theta J_{\ours{}}^{(N)}(x).
\]
\end{theorem}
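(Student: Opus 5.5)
We will compute $\mathbb{E}[\widehat g_N(x)]$ by conditioning on the number of successes $K$. We will then compare the result with the target $\nabla_\theta J^{(N)}_{\ours{}}(x)=\frac{1-(1-p)^N}{p}\,\nabla_\theta p$, which follows from differentiating \cref{eq:logT-def}: $\nabla_\theta\big(-\sum_{k=1}^N (1-p)^k/k\big)=\sum_{k=1}^N (1-p)^{k-1}\nabla_\theta p$, a geometric sum.

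\textbf{Step 1 (exchangeability).} Fix $k\ge 1$ and condition on the event $\{K=k\}$. The samples are i.i.d., and $\widehat g_N$ is symmetric in the indices. Hence, conditional on $K=k$ and on which indices succeeded, each successful $z_i$ is distributed as an independent draw from the success-conditioned law $m_\theta(\cdot\mid x,\,f(z)=y^\ast(x))$. This holds because the joint density factorizes as a product over $i$ of $m_\theta(z_i\mid x)$, and restricting the successful coordinates to the success set gives independent conditional draws. Therefore
\[
\mathbb{E}\!\left[\tfrac{1}{K}\textstyle\sum_i r_i S_i \,\middle|\, K=k\right]=\mathbb{E}\!\left[\nabla_\theta\log m_\theta(z\mid x)\,\middle|\, f(z)=y^\ast(x)\right]=\nabla_\theta J_{\mathrm{ML}}(x)=\frac{\nabla_\theta p}{p},
\]
where the last two equalities use \cref{prop:ml-conditional}. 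This step is the main point needing care. We must argue that the random normalization by $K$ does not bias the per-sample conditional expectation. The cleanest way is to condition on the full success pattern $(r_1,\dots,r_N)$ rather than only on $K$.

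\textbf{Step 2 (tower property).} On $\{K=0\}$ the estimator is $0$. Hence
\[
\mathbb{E}[\widehat g_N]=\mathbb{P}(K\ge1)\,\frac{\nabla_\theta p}{p}=\big(1-(1-p)^N\big)\frac{\nabla_\theta p}{p}.
\]
This equals the geometric-sum expression above, and therefore equals $\nabla_\theta J^{(N)}_{\ours{}}(x)$ by \cref{eq:logT-passk-mixture}, using $\nabla_\theta\,\mathrm{pass@}k=k(1-p)^{k-1}\nabla_\theta p$.

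We assume $p>0$ so that the conditional expectation is defined. The degenerate case $p=0$ gives $0=0$ on both sides, since then $K=0$ almost surely and $\nabla_\theta p$ appears with a vanishing factor. Interchange of gradient and (finite or summable) expectation is taken as in the REINFORCE identity used in the proof of \cref{prop:ml-conditional}.
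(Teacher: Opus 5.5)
Your proposal is correct and follows essentially the same route as the paper: condition on the successes, apply Theorem~\ref{prop:ml-conditional} to get $\nabla_\theta \log p$, multiply by $\mathbb{P}(K\ge 1)=\mathrm{pass@}N(x)$, and expand $\frac{1-(1-p)^N}{p}$ as a geometric sum. Your conditioning on the full success pattern $(r_1,\dots,r_N)$ just makes rigorous the paper's one-line claim that the successful samples are i.i.d.\ success-conditioned draws given $K\ge 1$. One small correction to your $p=0$ remark: the factor multiplying $\nabla_\theta p$ in the target is $\sum_{k=1}^N (1-p)^{k-1}=N$, which does not vanish, so the target is zero because $\nabla_\theta p=0$ at a minimizer of the nonnegative function $p$.
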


\begin{proof}
Conditioned on $K \geq 1$, the successful samples are i.i.d.\ draws from the success-conditioned distribution, so by Theorem~\ref{prop:ml-conditional}:
\begin{align*}
\mathbb{E}[\widehat{g}_N(x) \mid K \geq 1] = \nabla_\theta \log \passrate(x).
\end{align*}
Since $\widehat{g}_N(x) = 0$ when $K = 0$:
\begin{align*}
\mathbb{E}[\widehat{g}_N(x)] = \nabla_\theta \log \passrate(x) \cdot \mathbb{P}(K \geq 1) = \nabla_\theta \log \passrate(x) \cdot \mathrm{pass@}N(x).
\end{align*}
Writing $p = \passrate(x)$ and using $\mathrm{pass@}k(x) = 1 - (1-p)^k$:
\begin{align*}
\frac{\nabla_\theta p}{p} \cdot (1 - (1-p)^N) = \nabla_\theta p \sum_{k=1}^{N}(1-p)^{k-1} = \sum_{k=1}^{N} \frac{1}{k} \nabla_\theta \mathrm{pass@}k(x) = \nabla_\theta J_{\ours{}}^{(N)}(x),
\end{align*}
where the second equality uses $\nabla_\theta \mathrm{pass@}k(x) = k(1-p)^{k-1}\nabla_\theta p$.
\end{proof}

\newpage

\section[Analysis of the Practical Gradient Estimator]{Analysis of the Practical Gradient Estimator from~\cref{alg:maxRL}} \label{app:droppedBaselineGradientEstimatorAnalysis}

In this section, we will analyse the practical gradient estimator from~\cref{alg:maxRL}, which includes both an unconditional average score baseline to reduce variance, and sets the gradient to be 0 in the case of all negative rollouts (no baseline is used). We show that this changes the objective for which the gradient is unbiased for: while the gradient from~\cref{eq:cond-est} is unbiased for the \ours{} gradient of order $T = N$ (\cref{prop:logT_equivalence}), the practical gradient estimator is instead unbiased for the \ours{} gradient of order $T = N - 1$. We formally prove this result below.

\begin{theorem}[Effective truncation of the dropped-baseline estimator]
Fix an input $x$ and let $z_1,\ldots,z_N \sim m_\theta(\cdot \mid x)$ be $N$ i.i.d. rollouts from the latent policy. Define
\[
r_i = \mathbb{I}\{f(z_i)=y^\ast(x)\}, \qquad
K = \sum_{i=1}^N r_i, \qquad
S_i = \nabla_\theta \log m_\theta(z_i \mid x).
\]
Let
\[
\passrate(x) = \mathbb{P}_{z \sim m_\theta(\cdot \mid x)}(f(z)=y^\ast(x)),
\]
and assume the standard score-function identities
\[
\mathbb{E}[S_i]=0,
\qquad
\mathbb{E}[r_i S_i] = \nabla_\theta \passrate(x).
\]
Consider the practical gradient estimator from~\cref{alg:maxRL}
\[
\widetilde g_N(x)
=
\mathbb{I}\{K>0\}
\left(
\frac{1}{K}\sum_{i=1}^N r_i S_i
-
\frac{1}{N}\sum_{i=1}^N S_i
\right).
\]
Then
\[
\mathbb{E}[\widetilde g_N(x)]
=
\sum_{t=1}^{N-1}
(1-\passrate(x))^{t-1}
\nabla_\theta \passrate(x).
\]
Equivalently, the practical dropped-baseline estimator is unbiased for the
$(N-1)$-truncated \ours{} gradient.
\end{theorem}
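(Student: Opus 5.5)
Write $p=\passrate(x)$ and split $\widetilde g_N(x)$ into two pieces:
\[
\mathbb{I}\{K>0\}\frac{1}{K}\sum_{i=1}^N r_iS_i
\qquad\text{and}\qquad
\mathbb{I}\{K>0\}\frac{1}{N}\sum_{i=1}^N S_i .
\]
The first piece is exactly $\widehat g_N(x)$ from \cref{eq:cond-est}. The indicator is redundant there, since the sum vanishes when $K=0$. By \cref{prop:logT_equivalence}, its expectation is
\[
\frac{1-(1-p)^N}{p}\,\nabla_\theta p=\sum_{t=1}^N(1-p)^{t-1}\nabla_\theta p .
\]

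The whole task therefore reduces to computing the expectation of the second piece and showing it removes exactly the $t=N$ term. Write $\mathbb{I}\{K>0\}=1-\mathbb{I}\{K=0\}$ and use $\mathbb{E}[S_i]=0$. This gives
\[
\mathbb{E}\!\left[\mathbb{I}\{K>0\}\tfrac1N\textstyle\sum_i S_i\right]=-\tfrac1N\textstyle\sum_i\mathbb{E}\!\left[\mathbb{I}\{K=0\}S_i\right].
\]
On the event $\{K=0\}$ every rollout fails, so $\mathbb{I}\{K=0\}=\prod_j(1-r_j)$. Independence then gives
\[
\mathbb{E}\!\left[\mathbb{I}\{K=0\}S_i\right]=\mathbb{E}[(1-r_i)S_i]\prod_{j\neq i}\mathbb{E}[1-r_j].
\]
The two score-function identities give $\mathbb{E}[(1-r_i)S_i]=0-\nabla_\theta p$, and each remaining factor equals $1-p$. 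Hence each summand is $-(1-p)^{N-1}\nabla_\theta p$. Averaging over $i$, the second piece has expectation $-(1-p)^{N-1}\nabla_\theta p$.

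Subtracting, the expectation of $\widetilde g_N(x)$ is the first-piece expectation minus the second:
\[
\sum_{t=1}^N(1-p)^{t-1}\nabla_\theta p-(1-p)^{N-1}\nabla_\theta p=\sum_{t=1}^{N-1}(1-p)^{t-1}\nabla_\theta p .
\]
Finally, using $\nabla_\theta\,\mathrm{pass@}k=k(1-p)^{k-1}\nabla_\theta p$ as in the proof of \cref{prop:logT_equivalence}, this sum equals $\nabla_\theta J_{\ours{}}^{(N-1)}(x)$.

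The only delicate step is the factorization of $\mathbb{E}[\mathbb{I}\{K=0\}S_i]$. It requires that $S_i$ depends only on $z_i$ and that the rollouts are i.i.d. I would state this explicitly rather than argue loosely by conditioning on $K$.
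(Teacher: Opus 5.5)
Your proof follows the paper's argument step for step: the same split into success and baseline terms, the same use of \cref{prop:logT_equivalence} for the success term, and the same rewriting $\mathbb{I}\{K>0\}=1-\mathbb{I}\{K=0\}$ with the factorization $\mathbb{I}\{K=0\}=\prod_j(1-r_j)$ via independence. There is one sign slip: each $\mathbb{E}[\mathbb{I}\{K=0\}S_i]=-(1-p)^{N-1}\nabla_\theta p$ enters with the leading minus, so the second piece has expectation $+(1-p)^{N-1}\nabla_\theta p$, not $-(1-p)^{N-1}\nabla_\theta p$; as you wrote it, ``first minus second'' would add the $t=N$ term rather than cancel it, but with the corrected sign your final display is right.
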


\begin{proof}
For readability, write $p=\passrate(x)$ and $\nabla p=\nabla_\theta \passrate(x)$.
Decompose the estimator as
\[
\widetilde g_N(x)
=
\underbrace{
\mathbb{I}\{K>0\}
\frac{1}{K}\sum_{i=1}^N r_i S_i
}_{\widetilde g_N^{\mathrm{succ}}(x)}
-
\underbrace{
\mathbb{I}\{K>0\}
\frac{1}{N}\sum_{i=1}^N S_i
}_{\widetilde g_N^{\mathrm{base}}(x)}.
\]
By the standard \ours{} estimator identity from~\cref{prop:logT_equivalence}, the successful-rollout term satisfies
\[
\mathbb{E}\!\left[\widetilde g_N^{\mathrm{succ}}(x)\right]
=
\sum_{t=1}^{N}
(1-p)^{t-1}\nabla p.
\]
It remains to compute the expectation of the dropped baseline term. Since
\[
\mathbb{E}\!\left[\frac{1}{N}\sum_{i=1}^N S_i\right]=0,
\]
we have
\[
\mathbb{E}\!\left[
\mathbb{I}\{K>0\}
\frac{1}{N}\sum_{i=1}^N S_i
\right]
=
-
\mathbb{E}\!\left[
\mathbb{I}\{K=0\}
\frac{1}{N}\sum_{i=1}^N S_i
\right].
\]
Now observe that
\[
\mathbb{I}\{K=0\}
=
\prod_{\ell=1}^N (1-r_\ell).
\]
Using independence of the samples,
\begin{align*}
\mathbb{E}\!\left[
\mathbb{I}\{K=0\}
\frac{1}{N}\sum_{i=1}^N S_i
\right]
&=
\frac{1}{N}\sum_{i=1}^N
\mathbb{E}\!\left[(1-r_i)S_i\right]
\prod_{\ell \neq i}
\mathbb{E}[1-r_\ell] \\
&=
\frac{1}{N}\sum_{i=1}^N
\left(
\mathbb{E}[S_i]-\mathbb{E}[r_i S_i]
\right)
(1-p)^{N-1} \\
&=
-(1-p)^{N-1}\nabla p.
\end{align*}
Therefore,
\[
\mathbb{E}\!\left[\widetilde g_N^{\mathrm{base}}(x)\right]
=
(1-p)^{N-1}\nabla p.
\]
Combining the two terms gives
\begin{align*}
\mathbb{E}[\widetilde g_N(x)]
&=
\left[\sum_{t=1}^{N}
(1-p)^{t-1}\nabla p\right]
-
(1-p)^{N-1}\nabla p \\
&=
\sum_{t=1}^{N-1}
(1-p)^{t-1}\nabla p.
\end{align*}
Substituting back $p=\passrate(x)$ yields
\[
\mathbb{E}[\widetilde g_N(x)]
=
\sum_{t=1}^{N-1}
(1-\passrate(x))^{t-1}
\nabla_\theta \passrate(x),
\]
which is exactly the $(N-1)$-truncated \ours{} gradient, concluding our proof.
\end{proof}

\newpage

\section{More on Unifying Weight-Function View on RL Objectives} \label{app:weight_function_view_supplements}

Here we provide full derivations following~\cref{sec:weight_function_view}. Recall that we want to express population-level gradients of different objectives in the following form:
\begin{equation*}
    \nabla_\theta J = \mathbb{E}_{x \sim \rho} \left[w(p_\theta(x)) \nabla_\theta p_\theta(x)\right]
\end{equation*}
where $p_\theta(x) = \passrate(x)$ and $w(p_\theta(x))$ is the weighting function. In this section, we show that all objectives of our consideration can be written in this form. Furthermore, we will derive the weighting function $w(p_\theta(x))$ for each of them.

\paragraph{Classical RL (REINFORCE)} For classical reinforcement learning, i.e., the REINFORCE objective, we have:
\begin{align*}
    J_\mathrm{RL} = {}  & \mathbb{E}_{x \sim \rho} [\mathbb{E}_{z \sim m_\theta(\cdot|x)} [r(x, z)]] \\
    = {} & \mathbb{E}_{x \sim \rho} [\mathbb{E}_{z \sim m_\theta(\cdot|x)} [\mathbb{I}\left\{ f(z) = y^\ast(x)\right\}]] \\
    = {} & \mathbb{E}_{x \sim \rho} [\passrate(x)]
\end{align*}
Therefore, its gradient is:
\begin{align*}
    \nabla_\theta J_\mathrm{RL}  = \mathbb{E}_{x \sim \rho} [\nabla_\theta \passrate(x)]
\end{align*}
giving the corresponding $w_\mathrm{RL}$ to be $1$.

\paragraph{GRPO} Our analysis is similar to that of~\citet{davis2025objectivereasoningreinforcementlearning}. The gradient of the population level GRPO objective gradient can be written as:
\begin{align*}
    \nabla_\theta J_\mathrm{GRPO} = {} & \mathbb{E}_{x \sim \rho} \left[ \mathbb{E}_{z \sim m_\theta(\cdot|x)} \left[ \left(\frac{r(x, z) - \mathbb{E}_{z \sim m_\theta(\cdot|x)}[r(x, z)]}{\sqrt{\mathrm{Var}_{z \sim m_\theta(\cdot|x)} [r(x, z)]}}\right) \nabla_\theta \log m_\theta(z|x)\right] \right]
\end{align*}

Since we consider a binary reward setting, we have $\mathbb{E}_{z \sim m_\theta(\cdot|x)}[r(x, z)] = \passrate(x)$. Similarly, considering the variance of a Bernoulli random variable, we get:
\begin{equation*}
    \mathrm{Var}_{z \sim m_\theta(\cdot|x)}[r(x, z)] = \passrate(x) \left(1 - \passrate(x)\right)
\end{equation*}
Therefore, the objective becomes:
\begin{equation*}
    \nabla_\theta J_\mathrm{GRPO} = \mathbb{E}_{x \sim \rho} \left[ \frac{1}{\sqrt{\passrate(x) \left(1 - \passrate(x)\right)}} \nabla_\theta \passrate(x)\right]
\end{equation*}
which thereby gives us the weighting function to be $1/\sqrt{p_\theta(x)(1 - p_\theta(x))}$, as desired.

\paragraph{Maximum Likelihood (ML)} The maximum likelihood objective is given by
\begin{equation*}
    J_\mathrm{ML} = \mathbb{E}_{x\sim \rho} [\log \passrate(x)].
\end{equation*}
Taking its gradient with respect to $\theta$ and applying the chain rule, we obtain
\begin{align*}
    \nabla_\theta J_\mathrm{ML} = {} & \mathbb{E}_{x\sim \rho} \left[ \nabla_\theta \log \passrate(x) \right] \\
    = {} & \mathbb{E}_{x\sim \rho} \left[ \frac{1}{\passrate(x)}\nabla_\theta \passrate(x) \right]
\end{align*}
This shows that the weighting function for the maximum likelihood objective is $1/\passrate(x)$, as we claimed in~\cref{tab:weight-functions}.

\paragraph{\ours{}} Finally, we consider the objective $J_\mathrm{\ours{}}^{(T)}$. 

\begin{proposition}
    For \ours{} with order $T$, we can rewrite it as 
    \begin{align*}
    \nabla_\theta J^{(T)}_{\ours{}}
=
\mathbb{E}_{x \sim \rho}
\!\left[
w\!\left(p_\theta(x)\right)
\,\nabla_\theta p_\theta(x)
\right],
    \end{align*}
where 
\begin{align*}
    w_T(p)
=
\sum_{k=1}^{T} (1-p)^{k-1} = \frac{1-(1-p)^T}{p}.
\end{align*}

\begin{proof}
From Equation~\eqref{eq:logT-passk-mixture}, we have:
\begin{align*}
\nabla_\theta J^{(T)}_{\ours{}}(x) = \sum_{k=1}^{T} \frac{1}{k} \nabla_\theta \mathrm{pass@}k(x).
\end{align*}
Using $\mathrm{pass@}k(x) = 1 - (1-p)^k$ where $p = \passrate(x)$:
\begin{align*}
\nabla_\theta \mathrm{pass@}k(x) = k(1-p)^{k-1} \nabla_\theta p.
\end{align*}
Substituting:
\begin{align*}
\nabla_\theta J^{(T)}_{\ours{}}(x) = \sum_{k=1}^{T} \frac{1}{k} \cdot k(1-p)^{k-1} \nabla_\theta p = \left(\sum_{k=1}^{T} (1-p)^{k-1}\right) \nabla_\theta p = w_T(p) \nabla_\theta \passrate(x).
\end{align*}
Taking the expectation over $x \sim \rho$ completes the proof.
\end{proof}

\end{proposition}

\newpage

\section{RL Implementation Details for LLM Experiments}

Our discussion here follows that of~\citet{shafayat2025largereasoningmodelsselftrain}. For continuity with existing literature, we use slightly different notations from the rest of the paper for this section. Let $x$ represent a prompt, and let $y \sim \pi(\cdot|x)$ represent sequence of tokens autoregressively sampled from the language model $\pi$ conditioned on the prompt $x$. Let $\pi_\theta$ be the current policy, and $\pi_{\theta_\text{old}}$ be an older policy (from earlier iterations in training) used for data generation. In our implementation (based on \texttt{verl}~\citep{zhang2024framework,sheng2024hybridflow}), we use the following general RL objective:

\begin{align*}
\mathcal{J}(\theta) 
= \mathbb{E}_{x \sim \mathcal{D},\ \{y_i\}_{i=1}^G \sim \pi_{\theta_{\text{old}}}(\cdot|x)} 
\Bigg[ 
    \frac{1}{T} \sum_{i=1}^G \sum_{t=1}^{|y_i|} 
    \min \Big( & w_{i,t}(\theta) \hat{A}_{i,t}, 
     \text{clip}(w_{i,t}(\theta),\ 1 - \varepsilon,\ 1 + \varepsilon) \hat{A}_{i,t} 
    \Big) 
\Bigg]
\end{align*}

where $T$ is the total number of tokens in the mini-batch (excluding tokens in the prompt etc., since we only compute loss on the model generated tokens), $\pi_\theta$ represents the current LLM's autoregressive probability distribution, $\pi_{\theta_\text{old}}$ denotes the behavior policy/data generation policy's probability distribution, $w_{i, t}(\theta)$ is the importance ratio, defined as:

$$w_{i,t}(\theta) = \frac{
    \pi_\theta(y_{i,t} \mid x,\ y_{i,<t})
}{
    \pi_{\theta_{\text{old}}}(y_{i,t} \mid x,\ y_{i,<t})
}$$

Since we operate fully on-policy, i.e., one RL step per one batch of generated rollouts, this is always one in our experiments, and the clipping parameter $\epsilon$ has no effect on our training. $\hat{A}_{i,t}$ represents the advantage for the $t$-th token in the sequence $y_i$. The same advantage defined at a sequence level is applied to each token in the sequence, so henceforth we will drop the $t$ from the notation as well.

The main difference between GRPO~\citep{shao2024deepseekmathpushinglimitsmathematical}, RLOO~\citep{ahmadian2024basicsrevisitingreinforcestyle} and \ours{} comes from their use of different advantage functions. RLOO objective uses the following advantage function:

$$\hat{A}_i^{\mathrm{RLOO}} R(y_{(i)},x) - \frac{1}{G-1}\sum_{j\neq i}R(y_{(j)},x)$$

whereas GRPO uses the following advantage function:

$$\hat{A}_i^{\mathrm{GRPO}} = \frac{
    r(x, y_i) - \text{mean}\left( \{ r(x, y_i) \}_{i=1}^G \right)
}{
    \text{std}\left( \{ r(x, y_i) \}_{i=1}^G \right) + \epsilon
}$$

where $\epsilon$ is a small number ($1 \times 10^{-6}$) added to avoid division by zero. Finally, the advantage for \ours{} is as follows:
$$\hat{A}_i^{\mathrm{\ours{}}} = \frac{
    r(x, y_i) - \text{mean}\left( \{ r(x, y_i) \}_{i=1}^G \right)
}{
    \text{mean}\left( \{ r(x, y_i) \}_{i=1}^G \right) + \epsilon
}$$
Here $G$ is the number of online samples generated. RLOO, GRPO and \ours{} create a dynamic baseline for each sample without needing a separate value function (unlike PPO~\citep{schulman2017proximalpolicyoptimizationalgorithms}), effectively estimating the expected return on-the-fly during training. Not having a value network makes the training much simpler for all three algorithms.

\newpage

\section{Pass@k Calculation}

\subsection{Closed-form Calculation for Differentiable Settings}

To calculate pass@k from a generative model, one usually samples $T \geq k$ rollouts from the model, calculate success or failure from each of them, and then uses an appropriate statistical estimator for pass@k~\citep{chen2021evaluatinglargelanguagemodels,yue2025doesreinforcementlearningreally}. However, since there is no latent reasoning process involved in our didactic ImageNet experiments and since we can directly calculate the model likelihood of label $y \in \mathcal{Y}$ for an input image $x \in \mathcal{X}$, namely $\pi_\theta(y | x)$, we can also analytically compute pass@k without sampling as well. Formally, in all ImageNet experiments, we calculate pass@k for an example (image, label) pair $(x, y^\ast(x))$ as follows:

\begin{equation*}
    \text{Pass}@k(x, y^\ast(x); \pi_\theta) = 1 - (1 - \pi_\theta(y^\ast(x)|x))^k
\end{equation*}

The average pass@k is then obtained by averaging the above quantity over all example pairs in the validation dataset.

\subsection{Sampling-based Estimator}

Unlike the ImageNet setting, we can't usually directly calculate pass@k via accessing the true probability of the correct action. Therefore, we use the default pass@k calculation mechanism in \texttt{verl}~\citep{sheng2024hybridflow,zhang2024framework}, using the bootstrapping low variance unbiased estimator introduced by~\citet{chen2021evaluatinglargelanguagemodels}. This employs generating $n \geq k$ samples per task, counting the number of correct samples $c(x)$ among the $n$ samples, and estimate pass@k as:

\begin{equation*}
    \text{Pass}@k = \mathbb{E}_{x \sim \rho}\left[ 1 - \frac{\binom{n - c(x)}{k}}{\binom{n}{k}}\right]
\end{equation*}

\newpage

\section{ImageNet Experiments} \label{app:imagenet}

\subsection{Training Procedure}

Let $\mathcal{X}$ be the input space and $\mathcal{Y}$ be the label space. Let $\pi_\theta$ denote our model: given an input image $x \in \mathcal{X}$, $\pi_\theta(y | x)$ is the model's predicted probability of image $x$ belonging to class $y \in \mathcal{Y}$. For an input image and label pair $(x, y^\ast(x))$, the cross-entropy loss is:

\begin{equation*}
    \mathcal{L}_\text{CE}(x, y^\ast(x); \pi_\theta) = -\log \pi_\theta(y^\ast(x)|x)
\end{equation*}

On the other hand, the corresponding RL objective for the same pair is:

\begin{equation*}
    \mathcal{L}_\text{RL}(x, y^\ast; \pi_\theta) = -\mathbb{E}_{y \sim \pi_\theta(\cdot|x)}[\log\pi_\theta(y|x) \cdot \hat{A}(y|x)]
\end{equation*}

where the expectation is computed using Monte-Carlo sampling with $K$ rollouts of $y$ from $\pi_\theta(\cdot | x)$. GRPO, REINFORCE and \ours{} vary only in the calculation of the advantage $A(y|x)$. Concretely, let $y^{(1)}, \ldots, y^{(K)}$ be our $K$ rollouts, sampled from the conditional probability distribution $\pi_\theta(\cdot|x)$. We operate under a binary reward setting, meaning the reward function $r(x, y)$ is:

\begin{equation*}
    r(x, y) = \mathbb{I}[y = y^\ast(x)] = 
    \begin{cases}
        1, & \text{if } y = y^\ast(x) \\
        0, & \text{otherwise}
    \end{cases}
\end{equation*}

Given this reward, we calculate advantage under GRPO, REINFORCE and \ours{} as follows:

\begin{equation*}
    \hat{A}_{\text{GRPO}}(x, y) = \frac{r(x, y) - \hat{\mu}}{\hat{\sigma}}
\end{equation*}

\begin{equation*}
    \hat{A}_{\text{REINFORCE}}(x, y) = r(x, y) - \hat{\mu}
\end{equation*}

\begin{equation*}
    \hat{A}_{\ours{}}(x, y) = \frac{r(x, y) - \hat{\mu}}{\hat{\mu}}
\end{equation*}

where $\hat{\mu} = \frac{\sum_{i = 1}^K r(x, y^{(i)})}{K}$, $\hat{\sigma} = \sqrt{\frac{\sum_{i = 1}^K (r(x, y^{(i)}) - \hat{\mu})^2}{K}}$ are the mean and standard deviation of rewards of the sampled rollouts.

Finally, at each training step, a batch of (input image, label) pairs is collected from the training dataset. The above computation gives us per (input image, label) loss, we average them over all the pairs in a given batch to calculate the final loss which is then used to update the model via gradient descent.

\subsection{Training Hyperparameters}

We train ResNet-50~\citep{he2015deepresiduallearningimage} models on ImageNet~\citep{deng2009imagenet} with a batch size of $256$ for $20$ epochs using SGD with momentum $0.9$ (no Nesterov momentum) and an initial learning rate of $0.1$. We swept the learning rate over $\{0.001, 0.003, 0.01, 0.03, 0.1, 0.3, 0.7, 1.0\}$ and found that the standard value of $0.1$ generally works well for all objectives, so we report results at this setting. The learning rate follows a cosine schedule~\citep{loshchilov2017sgdrstochasticgradientdescent} with linear warmup for the first epoch. For evaluation, no augmentation is applied: each image is resized to $224 \times 224$ and normalized by the channel-wise mean and standard deviation of pixel values. During training, in addition to the same resizing and normalization, we apply a random horizontal flip (with probability $0.5$) and a random resized crop to $224 \times 224$ (with scale $(0.08, 1.0)$). The number of rollouts $K$ is varied across experiments. All training runs are conducted on a single L40S GPU for 15 hours.

\subsection{Equivalence of Validation Top-1 Accuracy and Majority Voting Accuracy} \label{app:imagenet_majority_voting_accuracy}

In this section, we discuss the validation top-1 accuracy metric, which is the traditional metric used in image classification. Formally, validation accuracy for a single image and label pair $(x, y^\ast(x))$ is defined as:

\begin{equation*}
    \text{Accuracy}(x, y^\ast(x) ; \pi_\theta) = \mathbb{I}\left[\arg\max_{y \in \mathcal{Y}} \pi_\theta(y|x) = y^\ast(x) \right] = \begin{cases}
        1, & \text{if } \arg\max_{y \in \mathcal{Y}} \pi_\theta(y|x) = y^\ast(x) \\
        0, & \text{otherwise}
    \end{cases}
\end{equation*}

which is then averaged over all validation examples for the final metric. In other words, validation accuracy is the same as majority voting accuracy~\citep{wang2023selfconsistencyimproveschainthought} in traditional LLM chain-of-thought reasoning tasks.

\subsection{Gradient Norm Analysis} \label{app:imagenet_gradient_norm_analysis}

\begin{figure}[!h]
    \centering
    \includegraphics[width=1.0\linewidth]{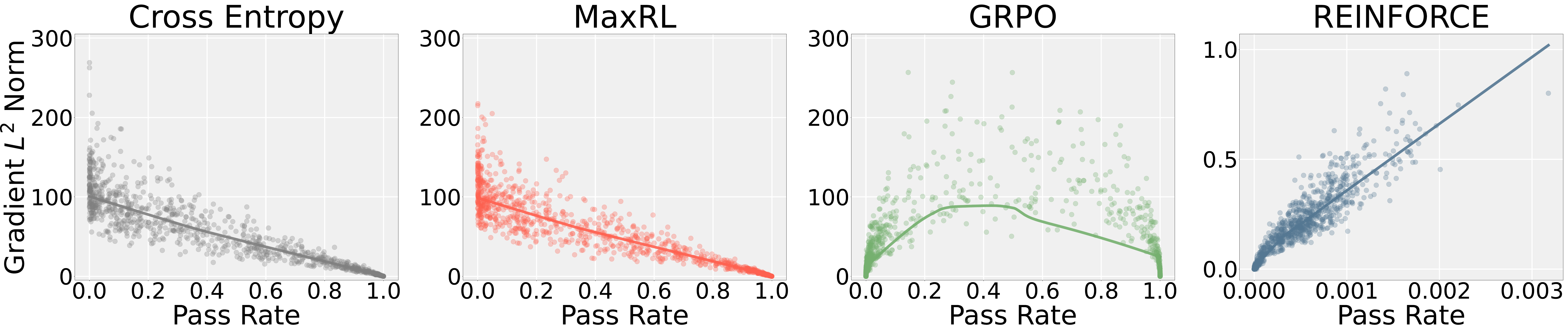}
    \caption{\footnotesize \textbf{(ImageNet gradient norm analysis)} Scatter plot, where each point has the pass rate (model's predicted probability of the correct class) of a particular image in the x-axis, and gradient $L^2$ norm for that image in the y-axis, for 1000 randomly selected images from the ImageNet validation dataset after 1500 steps of training on a ResNet-50 model. Sampling-based algorithms' (\ours{}, GRPO and REINFORCE) gradients are calculated using 131,072 rollouts per example to reduce sampling error and estimate the population-level gradient. Cross-entropy and \ours{} have similar scatter plot: with high gradient norm for hard inputs (pass rate close to 0) and lower gradient norm for the easier ones (pass rate close to 1). In contrast, the highest gradient norm for GRPO is on medium-difficulty (pass rate close to 0.5) inputs, with hard inputs having very low gradient norm. Finally, REINFORCE fails to produce any significant gradient norm and its pass rate is confined below 0.003 after 1500 steps, demonstrating its difficulty to learn in this setting.}
    \label{fig:imagenet_p_vs_grad_p}
\end{figure}

\cref{fig:imagenet_p_vs_grad_p} shows the correlation between gradient norm and pass rate (model's predicted probability of the correct class) for a particular image on different objectives. We see that cross-entropy and \ours{} have a similar scatter plot: with high gradient norm for hard inputs (pass rate close to 0) and lower gradient norm for the easier ones (pass rate close to 1). In contrast, the highest gradient norm for GRPO is on medium-difficulty (pass rate close to 0.5) inputs, with hard inputs having very low gradient norm. 
\begin{figure*}[h!]
    \centering
    \includegraphics[width=0.99\linewidth]{figures/appendix_figures/imagenet_comparison_to_grpo.pdf}
    \caption{\footnotesize \textbf{(Additional ImageNet results)} On the didactic image classification setting, \ours{} outperforms and scales better than GRPO with additional compute, and approaches the same performance as maximum likelihood training via cross-entropy given sufficient number of rollouts ($\geq 1024$). Note that REINFORCE remains flat, since the initial model's pass rate is low ($\sim 0.1\%$) and REINFORCE fails to generate significant gradient signal (\cref{fig:imagenet_p_vs_grad_p}). From left, the plots show pass@1, pass@128, Majority Voting Accuracy (equivalent to traditional validation top-1 accuracy in image classification, see Appendix~\ref{app:imagenet_majority_voting_accuracy}), and coverage of the final checkpoint, respectively.}
    \label{fig:imagenet_more_experiments}
\end{figure*}
Finally, REINFORCE fails to produce any significant gradient norm compared to the other objectives and its pass rate is confined below 0.003 after 1500 steps, demonstrating its difficulty to learn in this setting. This is also reflected in our other results, where REINFORCE does not show any signs of learning. We attribute this to the very low gradient norm: since the randomly initialized model has pass rate $0.001$ in expectation over all inputs, REINFORCE fails to produce sufficiently large gradients during training and therefore stalls in model improvement. One caveat: REINFORCE's failure may be due to us training the model from scratch --- on a pretrained model, it indeed produces gradients but still shows poor gradient norm on hard inputs (see~\cref{fig:qwen_p_vs_grad_p_norm}).

\subsection{Comparison with GRPO and Additional Metrics}

Here we present additional experimental results. In particular, \textbf{(1)} we compare against GRPO with varying number of rollouts, \textbf{(2)} record additional metrics such as majority voting accuracy (i.e., validation top-1 accuracy), and \textbf{(3)} show the resulting coverage (pass@k vs k) from different objectives.~\cref{fig:imagenet_more_experiments} records our findings: \ours{} outperforms and scales better than GRPO with additional compute. While GRPO improves performance if given more compute unlike REINFORCE, it remains suboptimal compared to \ours{} and supervised cross-entropy training. Moreover, both GRPO and REINFORCE exhibit worse coverage as their pass@k values are significantly lower compared to \ours{}, corroborating our experiments from other sections.

\newpage

\section{Maze Experiments} \label{app:maze_task_description}

\subsection{Model Architecture}

We adopt a lightweight decoder-only Transformer model following the Qwen2 architecture~\citep{yang2024qwen2technicalreport}, with a total of approximately $3M$ parameters. The model consists of 4 Transformer layers, each using full self-attention. The hidden size is set to 256, with an intermediate (feed-forward) dimension of 1024, and 4 attention heads per layer. We use grouped query attention with 2 key-value heads. The model employs RMSNorm with $\sigma = 1\times10^{-6}$ and uses the SiLU activation function in the feed-forward networks. Rotary positional embeddings (RoPE)~\citep{su2023roformerenhancedtransformerrotary} are applied with $\theta = 1,000,000$, and the maximum sequence length is 512 tokens. The vocabulary size is 32 tokens, and input and output embeddings are tied. The model is trained and evaluated using bfloat16 precision, with attention dropout set to 0. The architecture follows a standard causal language modeling setup with autoregressive decoding.

\subsection{Task Description}

Mazes are procedurally generated using Prim's algorithm~\citep{prim1957shortest}, and task difficulty is controlled by the grid size.
We use a symbolic tokenization to represent both the maze layout and the navigation policy, with tokens drawn from a small, discrete vocabulary.

The input sequence describes a two-dimensional grid in row-major order. Each cell is represented by a single token indicating its type (e.g., \textsc{WALL}, \textsc{PATH}, \textsc{START}, or \textsc{GOAL}). Rows are separated by a dedicated \textsc{NEWLINE} token, and the entire grid is delimited by special boundary tokens marking the beginning (\textsc{GRID\_START}) and end (\textsc{GRID\_END}) of the grid description. Following the maze specification, the model autoregressively generates a sequence of navigation actions drawn from a fixed action vocabulary (e.g., directional moves) and terminates by a \textsc{DONE} token. 

Below, we provide an example data instance following this format.

\begin{tcolorbox}[
  colback=gray!3,
  colframe=gray!40,
  title={7*7 Maze Example Model Input and Output Format},
  fonttitle=\small\bfseries,
  boxrule=0.4pt,
  arc=2mm,
  left=4pt,
  right=4pt,
  top=4pt,
  bottom=4pt
]
\label{box:maze-prompts}

\textbf{Input:}

{\ttfamily\small

<bos> GRID\_START WALL WALL WALL WALL WALL WALL WALL NEWLINE WALL START WALL PATH PATH PATH WALL NEWLINE WALL PATH WALL PATH WALL WALL WALL NEWLINE WALL PATH PATH PATH PATH PATH WALL NEWLINE WALL PATH WALL WALL WALL PATH WALL NEWLINE WALL PATH WALL PATH PATH GOAL WALL NEWLINE WALL WALL WALL WALL WALL WALL WALL NEWLINE GRID\_END PATH\_START
}

\textbf{Output:}

{\ttfamily\small
RIGHT RIGHT RIGHT RIGHT DOWN DOWN DOWN DOWN DONE <eos>
}
\end{tcolorbox}

For reference, we also visualize one typical successful trajectory and one representative failed prediction in \Cref{fig:maze-example}.

\begin{figure}[ht]
    \centering
    \includegraphics[width=0.6\linewidth]{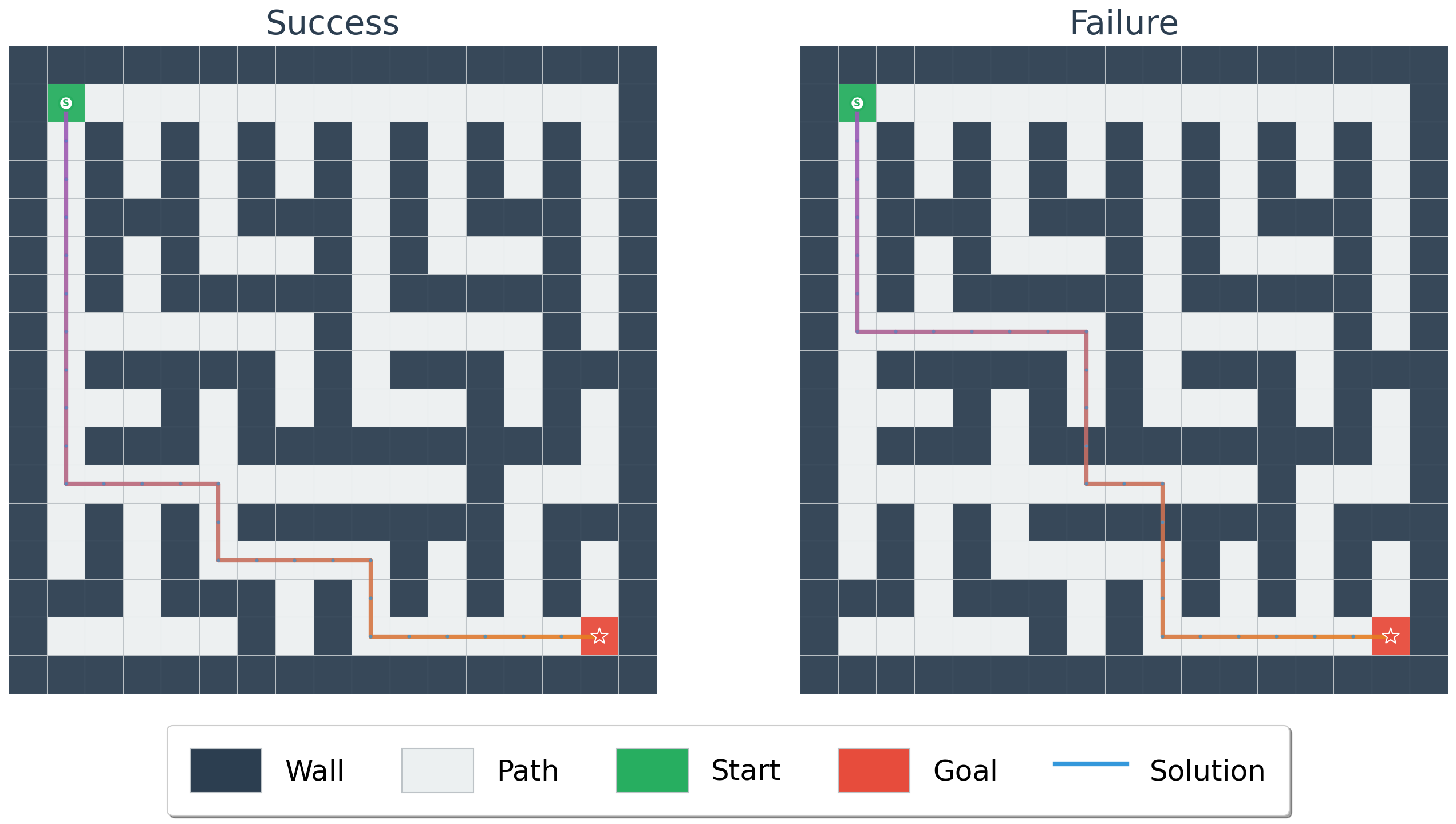}
    \caption{\textbf{(Maze data visualization)} The left plot shows a successful navigation trajectory, while the right plot illustrates a failure case produced by the trained model, where the generated action sequence deviates from the correct path before reaching the goal.}
    \label{fig:maze-example}
\end{figure}

\subsection{Training Setups}

To ensure sufficient task complexity and rigorous evaluation, we construct a training set of 1 million distinct \(17 \times 17\) mazes and a test set of 256 non-overlapping samples. We first pretrain the model from scratch, where it is trained to follow a provided ground-truth trajectory for each maze. During SFT, we use a learning rate of $5 \times 10^{-4}$ with the \texttt{AdamW} optimizer~\citep{kingma2017adammethodstochasticoptimization,loshchilov2019decoupledweightdecayregularization} and train for 1,500 steps with a batch size of 32. This pretraining stage\footnote{Our pretrained Maze transformer checkpoint, which is used as the starting model for RL fine-tuning, can be found here: \url{https://github.com/tajwarfahim/maxrl/tree/main/maze/ckpt-1500}.} initializes the model with the basic output format required for representing maze-solving trajectories.

Subsequently, we perform reinforcement learning (RL) training, where we use a learning rate of $1\times10^{-4}$ in all our experiments. We update the model parameters only once per RL step (i.e., the fully on-policy setting from ~\citet{tajwar2024preferencefinetuningllmsleverage}). We have two different training settings:

\begin{enumerate}
    \item \textbf{256 Batch size, 9000 RL steps, varying number of training rollouts.} This is used for~\cref{fig:maze_scaling_with_additional_compute,fig:maze_passk_steps}.~\cref{fig:maze_scaling_with_additional_compute} compares performance across different number of training rollouts (from 4 to 128). We observed that when number of training rollouts is sufficiently small (e.g., 4), lower batch size training runs can collapse, possibly due to noise/gradient variance. Hence these runs use a larger batch size of 256. Since this setting is very expensive, we only train our models for 9000 RL steps, and only train with \ours{} and our principal baselines, namely GRPO and RLOO.~\cref{fig:maze_passk_steps} shows the evolution of various Pass@k metrics under these 3 objectives during training, and reports experiments resulting from our highest compute setting, namely that with 128 training rollouts per prompt.

    \item \textbf{32 Batch size, 20000 RL steps, 128 training rollouts per task.} This setting is used for~\cref{fig:maze_baseline_curves}, where we show the training curves for different baselines on the maze setting, and~\cref{tab:maze_baseline_comparisons}, where we present summary Pass@k performance of different baselines at the end of 20000 training steps. Our first setting of 256 batch size was beyond our compute budget for being too expensive to train all the different baselines with, so we adopted this cheaper setting for baseline comparison.
\end{enumerate}

Each training run uses 4 RTX 4090 GPUs. Given the small number of model parameters, the model is no longer memory-bound, so we modified the rollout engine (instead of using the default vLLM~\citep{kwon2023efficient} engine) in the \texttt{verl}~\citep{zhang2024framework,sheng2024hybridflow} codebase to increase sampling parallelism and speed up training.

\subsection{Training Dynamics in Our Highest Compute Budget Setting}

\begin{figure}[!h]
    \centering
    \includegraphics[width=1.0\linewidth]{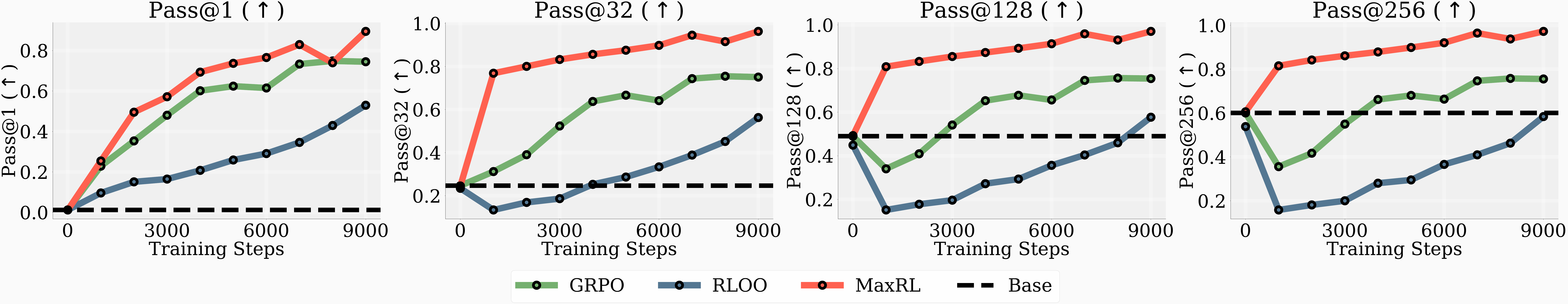}
    \caption{\footnotesize \textbf{(Training dynamics in the maze setting under our highest compute budget)} We investigate how the Pass@k performance of different objectives evolve when we train a 3M model to solve 17x17 maze puzzles on our highest compute budget, namely 256 batch size and 128 training rollouts per task. \ours{} performs significantly better compared with GRPO and REINFORCE in pass@1, pass@32, pass@128 and pass@256. These results signify \ours{}'s effectiveness in compute scaling during RL.}
    \label{fig:maze_passk_steps}
\end{figure}

Here we provide additional results pertaining to the training dynamics of \ours{} in the infinite data maze setting. \Cref{fig:maze_passk_steps} shows pass@1 through pass@256 over training steps for \ours{}, GRPO, and RLOO, trained for 9000 RL steps using 256 batch size and 128 training rollouts per task. \ours{} outperforms both GRPO and RLOO at all metrics.

\newpage

\subsection{Comparison Table with Additional Baselines}

\begin{figure}[!h]
    \centering
    \includegraphics[width=1.00\linewidth]{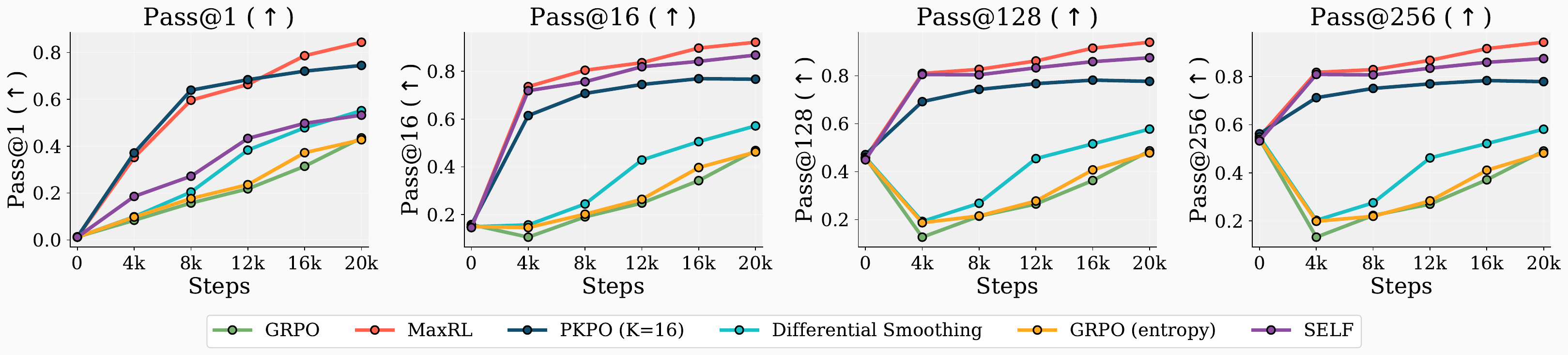}
    \caption{\footnotesize \textbf{(Training curves compared with other baselines.)} We compare \ours{} with other RL algorithms, including entropy bonus, PKPO~\citep{walder2025passkpolicyoptimizationsolving}, Differential Smoothing~\citep{gai2026differentialsmoothingmitigatessharpening} and SELF~\citep{nguyen2025reasoningboundaryparadoxreinforcement}, under batch size 32 and number of training rollouts 128. \ours{} significantly outperforms other methods in all metrics, and is the only method to maintain both good average performance (pass@1) and coverage (pass@k).}
    \label{fig:maze_baseline_curves}
\end{figure}

\begin{table}[!h]
\centering
\caption{Performance comparison across methods in maze (batch size 32, number of training rollouts per task 128).}
\label{tab:maze_baseline_comparisons}
\resizebox{0.8\textwidth}{!}{%
    \begin{tabular}{l|cccc}
    \toprule
    \textbf{Method} & \textbf{Pass@1} & \textbf{Pass@16} & \textbf{Pass@128} & \textbf{Pass@256} \\
    \midrule
    GRPO & 39.6 & 41.3 & 42.4 & 43.0 \\
    GRPO (with entropy bonus) & 47.2 & 51.7 & 53.4 & 54.0 \\
    PKPO (T = 16)~\citep{walder2025passkpolicyoptimizationsolving} & 74.5 & 76.7 & 77.6 & 77.9 \\
    SELF~\citep{nguyen2025reasoningboundaryparadoxreinforcement} & 46.1 & 82.7 & 86.3 & 87.5 \\
    Differential Smoothing~\citep{gai2026differentialsmoothingmitigatessharpening} & 50.0 & 57.1 & 57.8 & 58.7 \\
    \midrule
    \textbf{\ours{}} & \textbf{84.4} & \textbf{92.2} & \textbf{94.0} & \textbf{94.3} \\
    \bottomrule
    \end{tabular}
}
\end{table}

\cref{tab:maze_baseline_comparisons} compares \ours{} against a broader set of baselines in the Maze setting, where all models are trained for 20{,}000 steps with a batch size of 32 and 128 rollouts per prompt. \ours{} reaches $84.4$ pass@1 and $94.3$ pass@256, improving over standard GRPO by more than $44$ points on pass@1. PKPO attains a strong pass@1 of $74.5$ but its pass@k curve is nearly flat, indicating that it has largely collapsed onto a single mode and gains little from additional samples. SELF reaches a high pass@256 of $87.5$ yet a weak pass@1 of $46.1$, trading away average accuracy for coverage. GRPO, with or without an entropy bonus, and Differential Smoothing all remain well below $60$ across the board; among these, Differential Smoothing performs slightly better than GRPO with an entropy bonus, which in turn improves over vanilla GRPO.~\cref{fig:maze_baseline_curves} shows the evolution of the Pass@k metrics for these baselines throughout during. Overall, \ours{} shows superior performance on both average accuracy (pass@1) and coverage (pass@k).

\newpage

\section{GSM8K Experiments} \label{app:gsm8k_task_description}

\subsection{Prompt template}

The goal of this experiment setting is to study how RL algorithms perform in data-scarce but compute rich settings. This is similar in spirit to~\citet{kim2026pretraining}, except that we study RL and not pre-training. Due to our compute limitations, we choose a relatively small model (360M parameters) for our experiments. Specifically, we conduct reinforcement finetuning on the \texttt{SmolLM2-360M-Instruct} model~\citep{allal2025smollm2smolgoesbig} using the GSM8K~\citep{cobbe2021trainingverifierssolvemath} training set for 1500 steps. We use GSM8K-Platinum~\citep{vendrow2025largelanguagemodelbenchmarks}, a revised version of the original GSM8K test that is non-overlapping with the GSM8K train dataset, for evaluation. An example prompt and the model's response are shown below. In particular, we take each individual problem, append it with `` Let's think step by step and output the final answer within \texttt{\textbackslash \textbackslash boxed\{\}}." and process the string through the SmolLM2-360M-Instruct model's tokenizer chat-template to obtain the final prompt for the model.

\begin{tcolorbox}[
    colback=cyan!5,       
    colframe=cyan!50!blue, 
    coltext=black,
    coltitle=white,
  title={SmolLM2-360M-Instruct},
  fonttitle=\small\bfseries,
  boxrule=0.4pt,
  arc=2mm,
  left=10pt,
  right=10pt,
  top=4pt,
  bottom=4pt
]
\label{box:gsm8k}

{\ttfamily\small
<|im\_start|>system
}

{\ttfamily\small
You are a helpful AI assistant named SmolLM, trained by Hugging Face<|im\_end|>
}

{\ttfamily\small
<|im\_start|>user
}

{\ttfamily\small
Jeannie hikes the 12 miles to Mount Overlook at a pace of 4 miles per hour, and then returns at a pace of 6 miles per hour. How long did her hike take, in hours? Let's think step by step and output the final answer within \texttt{\textbackslash \textbackslash boxed\{\}}. <|im\_end|>
}

{\ttfamily\small
<|im\_start|>assistant
}

{\ttfamily\small
To find the time it takes for Jeannie to hike the entire trip, we need to calculate her total travel time and then divide it by the rate at which she travels.
}

{\ttfamily\small
Jeannie hikes the 12 miles at a pace of 4 miles per hour. So, the time taken is 12 miles / 4 miles per hour = 3 hours.
}

{\ttfamily\small
Next, she returns at a pace of 6 miles per hour. So, the time taken to return is 12 miles / 6 miles per hour = 2 hours.
}

{\ttfamily\small
Therefore, the total time taken is 3 hours + 2 hours = 5 hours.
}

{\ttfamily\small
So, the final answer is 5 hours. \texttt{\textbackslash \textbackslash boxed\{5\}} <|im\_end|>
}

\end{tcolorbox}

\subsection{Hyperparameters}
Detailed hyperparameter setup is provided in \cref{tab:config_smollm}. For other baselines, we use $T=16$ for PKPO, which means we optimize pass@16 as the objective, and $\gamma=0.05$ for differential smoothing.

\begin{table}[!h]
  \caption{Training hyperparameters for SmolLM2-360M-Instruct on GSM8K.}
  \label{tab:config_smollm}
  \centering
  \begin{tcolorbox}[
    enhanced,
    hbox,
    title={\hspace{0.0cm} Training hyperparameters for SmolLM2-360M-Instruct},
    colback=teal!5,
    colframe=teal,      
    coltext=black,
    coltitle=white,
    fonttitle=\bfseries,
    arc=1mm,
    boxrule=1pt,
    boxsep=1pt,                
    left=2pt, right=2pt,       
    top=0pt,                   
    bottom=2pt,       
    toptitle=3pt, bottomtitle=3pt,
    center
  ]
    \small
    \renewcommand{\arraystretch}{1.2}
    \setlength{\tabcolsep}{12pt}
\begin{tabular}{l|l|l|l} 
  \textbf{Parameter} & \textbf{Value} & \textbf{Parameter} & \textbf{Value} \\
  \hline
  Base model & SmolLM2-360M-Instruct & Training set & GSM8K \\
  Test set & GSM8K-Platinum & Prompts per batch & 256 \\
  Generations per prompt & 128 & Grad update per RL step & 1 \\
  Max prompt length & 512 & Max response len & 2048 \\
  Learning rate & $1 \times 10^{-5}$ &  Training Steps & 1500 \\
  KL coeff & 0.0 & Entropy coeff & 0.0 \\
  Rollout temp & 1.0 & Validation top\_p & 0.95 \\
  Validation temp & 0.6 & Device & 8 $\times$ Nvidia GH200 \\
\end{tabular}
  \end{tcolorbox}
\end{table}

\subsection{Effect of Additional Compute}

In~\cref{fig:imagenet_summary_results}, we demonstrated how \ours{}'s performance scales as additional compute is available in terms of rollout budget, but in our supervised ImageNet classification setting. Here, we further experiment with the effect of additional compute on the finite data GSM8K training setting, which is a more realistic RL training since it involves autoregressive generations from a 360M LLM on a real mathematical reasoning task.

\begin{figure}[!h]
    \centering
    \includegraphics[width=0.6\linewidth]{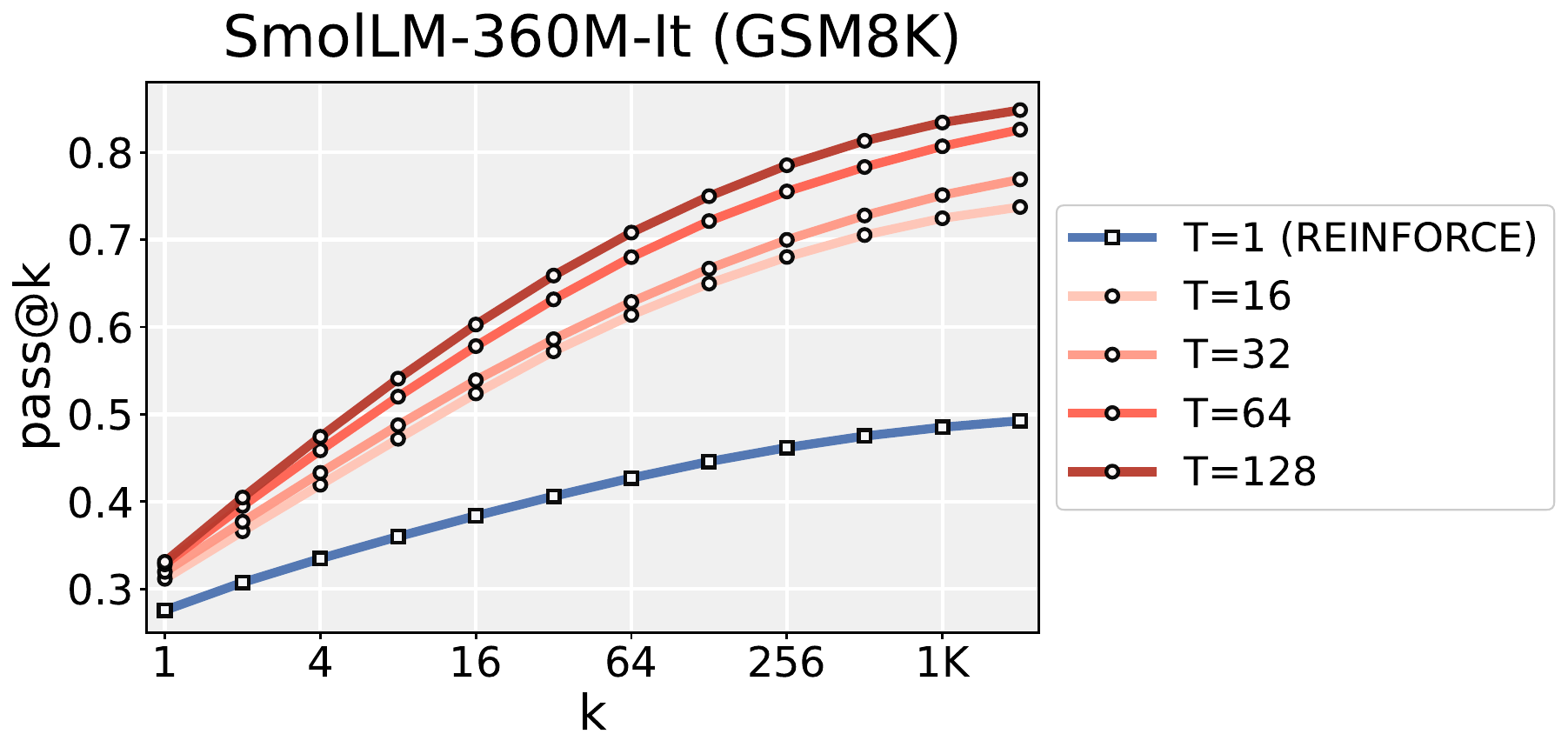}
    \caption{\footnotesize \textbf{(Effect of number of training rollouts in the GSM8K setting)} For \ours{}, increase in number of rollouts $N$ directly leads to an increase in the approximation order $T$ of the maximum likelihood objective that the finite sample \ours{} optimizes. Here we compare the final checkpoint resulting from training with different values of $T (= \text{Number of rollouts})$ but otherwise identical training setup on the GSM8K setting. Increasing compute smoothly translates into performance gain, with the highest amount of gain for higher pass@k.}
    \label{fig:gsm8k_rollout_ablation}
\end{figure}

\cref{fig:gsm8k_rollout_ablation} shows our findings: as the number of training rollouts increases, it results in performance improvement in the final checkpoint. Moreover, as expected, the biggest gains come at higher values of pass@k, with $T = 128$ outperforming $T = 1$ (REINFORCE) by almost $30\%$ at pass@2048.

\subsection{Comparison with Additional Baselines}

\begin{table}[!h]
\centering
\caption{Performance comparison across methods on GSM8K.}
\label{tab:gsm8k_comparison_with_baselines}
\resizebox{0.8\textwidth}{!}{%
    \begin{tabular}{l|cccc}
    \toprule
    \textbf{Method} & \textbf{Pass@1} & \textbf{Pass@16} & \textbf{Pass@128} & \textbf{Pass@1024} \\
    \midrule
    GRPO~\citep{shao2024deepseekmathpushinglimitsmathematical} & 29.3 & 40.1 & 45.8 & 48.8 \\
    RLOO~\citep{ahmadian2024basicsrevisitingreinforcestyle} & 27.5 & 38.8 & 44.6 & 48.5 \\
    GRPO (with entropy bonus) & 31.1 & 42.4 & 48.1 & 51.6 \\
    PKPO (T = 16)~\citep{walder2025passkpolicyoptimizationsolving} & 30.7 & 53.6 & 67.2 & 75.9 \\
    Differential Smoothing~\citep{gai2026differentialsmoothingmitigatessharpening} & 31.4 & 42.7 & 48.5 & 52.3 \\
    \midrule
    \textbf{\ours{}} & \textbf{33.2} & \textbf{60.3} & \textbf{75.0} & \textbf{83.4} \\
    \bottomrule
    \end{tabular}
}
\end{table}

\cref{tab:gsm8k_comparison_with_baselines} compares \ours{} against the same set of baselines in the data-scarce GSM8K training setting (\Cref{tab:config_smollm}). \ours{} again achieves the best performance on every metric, reaching $33.2$ pass@1 and $83.4$ pass@1024. While all methods are clustered within a few points at pass@1 (between $27.5$ and $33.2$), the differences widen dramatically as $k$ grows: \ours{} attains $83.4$ pass@1024, $7.5$ points above the strongest baseline (PKPO at $75.9$) and more than $30$ points above GRPO, RLOO, GRPO with an entropy bonus, and Differential Smoothing, all of which saturate around $49$--$52$. These results suggest \ours{} extracts more learning signal from a fixed training dataset.

\subsection{Per-Prompt Pass-Rate Distribution During Training} \label{app:gsm8k_sharpening_distribution}

To visualize the sharpening dynamics of the three methods, \cref{fig:gsm8k_pass_rate_distribution_sharpening} tracks the empirical distribution of per-prompt pass rates over the GSM8K training set throughout the SmolLM2-360M-Instruct run from~\cref{tab:config_smollm}. Each row corresponds to a training epoch ($0, 10, 20, 30, 40, 50$), each column to a method, and each bar to the fraction of training prompts whose pass-rate estimate falls into the corresponding $\log_2$-spaced bin (with the leftmost bin reserved for prompts with $0$ correct rollouts and the rightmost for prompts solved on every rollout). Validation pass@1 values of the corresponding checkpoints over the entire evaluation dataset are reported on top of each panel.

\begin{figure}[!h]
    \centering
    \includegraphics[width=0.85\linewidth]{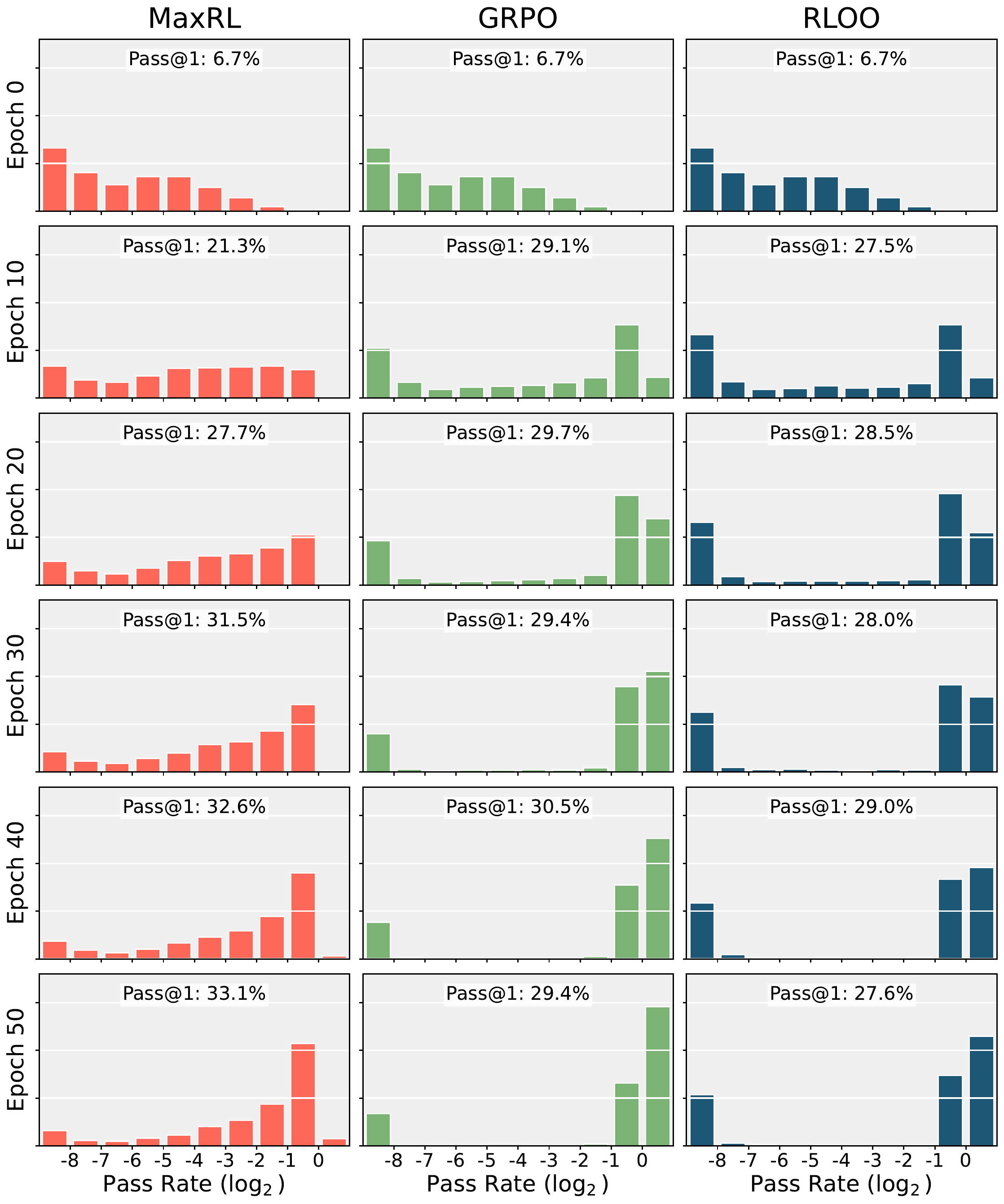}
    \caption{\footnotesize \textbf{(Distribution sharpening over training on GSM8K)} Evolution of the per-prompt pass-rate distribution on the SmolLM2-360M-Instruct GSM8K training set across $50$ training epochs for \ours{}, GRPO, and RLOO. All three methods start from the same base-model distribution (epoch $0$). As training progresses, GRPO and RLOO concentrate prompts into the two extreme bins ($[0,0]$ and $[1,1]$), so the model either trivially solves or trivially fails most prompts and the intermediate bins are emptied. In contrast, \ours{} keeps probability mass spread across the entire pass-rate spectrum, gradually shifting weight from the failure end toward the higher pass-rate bins while continuing to expose intermediate-difficulty prompts to non-zero learning signal. }
    \label{fig:gsm8k_pass_rate_distribution_sharpening}
\end{figure}

\cref{fig:gsm8k_pass_rate_distribution_sharpening} suggests that GRPO and RLOO push the per-prompt pass-rate distribution towards either 0 or 1, supporting the distribution sharpening narrative, whereas \ours{} retains a broad distribution of pass rates throughout training. This contrast shows that \ours{} is more robust to the distribution sharpening commonly observed in LLM reinforcement learning~\citep{yue2025doesreinforcementlearningreally,dang2025assessing,wu2026invisibleleashrlvrescape}, allowing it to extract more learning signal from a fixed training dataset.

\newpage
\clearpage

\section{Qwen3 Experiments} \label{app:realistic_reasoning_task_description}

\subsection{Prompt template}

We use the Qwen-math template~\citep{yang2024qwen2technicalreport,qwen2025qwen25technicalreport,yang2024qwen25mathtechnicalreportmathematical} for formatting our prompts. We show an example prompt~\citep{yu2025dapoopensourcellmreinforcement} after formatting through our template below. In particular, we take each individual problem, append it with ``\textbackslash nPlease reason step by step, and put your final answer within \textbackslash \textbackslash boxed\{\{\}\}." and process the string through the Qwen3 model's tokenizer chat-template to obtain the final prompt for the model.

\begin{tcolorbox}[
    colback=cyan!5,       
    colframe=cyan!50!blue, 
    coltext=black,
    coltitle=white,
  title={Qwen Math Prompt Template},
  fonttitle=\small\bfseries,
  boxrule=0.4pt,
  arc=2mm,
  left=4pt,
  right=4pt,
  top=4pt,
  bottom=4pt
]
\label{box:qwen-math-prompts}

{\ttfamily\small
<|im\_start|>system
}

{\ttfamily\small
Please reason step by step and put the final answer in \texttt{\textbackslash \textbackslash boxed\{\}}. <|im\_end|>
}

{\ttfamily\small
<|im\_start|>user
}

{\ttfamily\small
 Denote by $S(n)$ the sum of the digits of the positive integer $n$. Find all the solutions of the equation $n(S(n)-1)=2010$. \textbackslash nPlease reason step by step, and put your final answer within \textbackslash \textbackslash boxed\{\{\}\}. <|im\_end|>
}

{\ttfamily\small
<|im\_start|>assistant
}

\end{tcolorbox}

\subsection{Hyperparameters}

Next, we describe the default hyperparameters for our training setup. Since there are many possible alternatives to handle off-policy updates and corresponding importance ratio~\citep{schulman2017proximalpolicyoptimizationalgorithms,shao2024deepseekmathpushinglimitsmathematical,zheng2025groupsequencepolicyoptimization,minimax2025minimaxm1scalingtesttimecompute,yu2025dapoopensourcellmreinforcement}, to keep things simple, we choose to train in the fully on-policy setup, meaning we have no importance ratio or associated clipping. Similarly, to avoid tuning additional hyperparameters for each algorithm, following~\citet{olmo2025olmo3}, we remove KL penalty and also entropy bonus in our default training comparison. Note: we train with GRPO and entropy bonus as a baseline in our SmolLM2-360M-Instruct training on GSM8K, results are recorded in~\cref{tab:gsm8k_comparison_with_baselines}: \ours{} outperforms this variant, showing that entropy bonus does not fully mitigate issues resulting from GRPO though it can slightly mitigate it, as also observed by~\citet{yue2025doesreinforcementlearningreally}. 

We generate all training rollouts using temperature 1.0, and do not use special sampling techniques. Similarly, we also do not use any adaptive sampling~\citep{yu2025dapoopensourcellmreinforcement} or fixes for inference-training logit mismatch~\citep{he2025defeating,khatri2025artscalingreinforcementlearning}. Finally, for evaluation, we follow the same protocol as~\citet{yue2025doesreinforcementlearningreally}, and we run inference with temperature 0.6, top-p sampling parameter 0.95, no top-k or min-p sampling~\citep{nguyen2025turningheatminpsampling}.

\cref{tab:config_qwen} shows our default hyperparameter setting.

\begin{table}[!h]
  \caption{Training hyperparameters for Qwen3-1.7B-Base and Qwen3-4B-Base training.}
  \label{tab:config_qwen}
  \centering
  \begin{tcolorbox}[
    enhanced,
    hbox,
    title={\hspace{0.0cm} Training hyperparameters for Qwen3-1.7B-Base and Qwen3-4B-Base},
    colback=teal!5,
    colframe=teal,      
    coltext=black,
    coltitle=white,
    fonttitle=\bfseries,
    arc=1mm,
    boxrule=1pt,
    boxsep=1pt,                
    left=2pt, right=2pt,       
    top=0pt,                   
    bottom=2pt,       
    toptitle=3pt, bottomtitle=3pt,
    center
  ]
    \small
    \renewcommand{\arraystretch}{1.2}
    \setlength{\tabcolsep}{12pt}
\begin{tabular}{l|l|l|l} 
  \textbf{Parameter} & \textbf{Value} & \textbf{Parameter} & \textbf{Value} \\
  \hline
  Base model & Qwen3-1.7B-Base, Qwen3-4B-Base  & Prompts per batch & 256 \\
  Generations per prompt & 16 & Grad update per RL step & 1 \\
  Max prompt length & 1024 & Max response len & 4096 \\
  Learning rate & $1 \times 10^{-6}$ &  Training Steps & 1000 \\
  KL coeff & 0.0 & Entropy coeff & 0.0 \\
  Rollout temp & 1.0 & Validation top-p & 0.95 \\
  Validation temp & 0.6 & Device & 32 $\times$ Nvidia H200 \\
\end{tabular}
  \end{tcolorbox}
\end{table}

\subsection{Additional Results} \label{app:additionalBenchmarkEvaluations}

We report additional empirical results from our Qwen3-1.7B-Base and Qwen3-4B-Base training runs that complement the main-paper evaluation. We expand the benchmark coverage with four further reasoning datasets, compare \ours{} against the base model and GRPO under verifier-free majority voting, and report training-dynamics traces (response length, actor entropy, gradient norm) as well as validation pass@1 curves throughout training. 

\paragraph{Extended Benchmarks}
Beyond the four primary benchmarks reported in the main paper, we further evaluate the Qwen3-1.7B-Base and Qwen3-4B-Base trained models on four additional reasoning benchmarks: AIME 2024, HMMT Feb 2025~\citep{hmmt_feb_2025}, HMMT Nov 2025~\citep{hmmt_nov_2025}, and JEEBENCH~\citep{arora2023have}. \cref{fig:additional_qwen3_evaluation} reports coverage on these benchmarks; \ours{} matches or outperforms both the base model and GRPO, achieving up to $20.5\times$ speedup over GRPO when generating multiple samples under a perfect verifier while maintaining similar or better pass@1 performance.

\begin{figure}[!h]
    \centering
    \includegraphics[width=1.0\linewidth]{figures/appendix_figures/additional_evaluations_for_coverage.pdf}
    \caption{\footnotesize \textbf{(Evaluation of Qwen3 model training on additional benchmarks)} Coverage on AIME 2024, HMMT Feb 2025~\citep{hmmt_feb_2025}, HMMT Nov 2025~\citep{hmmt_nov_2025}, and JEEBENCH~\citep{arora2023have}. \ours{} matches or outperforms both the base model and GRPO, leading to up to $20.5\times$ speedup compared to GRPO while generating multiple samples with a perfect verifier and maintaining similar or better pass@1 performance.}
    \label{fig:additional_qwen3_evaluation}
\end{figure}

\paragraph{Majority voting}
We compare \ours{} against the base model and GRPO under \emph{majority voting}~\citep{wang2023selfconsistencyimproveschainthought}, a widely used verifier-free method for scaling test-time compute: we draw $N$ i.i.d.\ rollouts for each prompt, group the responses by their final answer, and take the most frequent answer as our prediction. \cref{tab:majority_voting} reports majority-voting accuracy on five benchmarks; \ours{} outperforms both the pretrained base model and the GRPO-trained model on every benchmark.

\begin{table}[ht]
\centering
\caption{\footnotesize \textbf{(Majority voting performance comparison on Qwen3-4B-Base)} We compare the performance of \ours{} in terms of majority voting against the pretrained base model and GRPO. }
\resizebox{\linewidth}{!}{%
\begin{tabular}{lccccc}
\hline
 & AIME 2024 & AIME 2025 & BeyondAIME & MATH-500 & Minerva \\
 & (majority@4096) & (majority@4096) & (majority@4096) & (majority@2048) & (majority@2048) \\
\hline
Base  & 23.3 & 23.3 & 7.0  & 69.8 & 18.8 \\
GRPO  & 23.3 & 23.3 & 7.0  & 72.4 & 27.2 \\
\ours{} & \textbf{26.7} & \textbf{26.7} & \textbf{14.0} & \textbf{74.0} & \textbf{28.7} \\
\hline
\end{tabular}%
}
\label{tab:majority_voting}
\end{table}

\paragraph{Training dynamics}
We record mean generated response length, actor entropy, and gradient norm throughout training for both Qwen3-4B-Base (\cref{fig:qwen3_4b_base_response_length_and_actor_entropy}) and Qwen3-1.7B-Base (\cref{fig:qwen3_1_7b_base_response_length_and_actor_entropy}). The trends are consistent across model sizes: \ours{} produces longer chains-of-thought, retains higher actor entropy, and exhibits larger gradient norms during training compared with GRPO.

\begin{figure}[!h]
    \centering
    \includegraphics[width=0.7\linewidth]{figures/appendix_figures/qwen3_4B_base_training_dynamics.pdf}
    \caption{\footnotesize \textbf{(Additional training dynamics metrics for Qwen3-4B-Base)} We show comparison between GRPO and \ours{} in terms of mean response length, entropy of the actor, and gradient norm during training for the Qwen3-4B-Base model. \ours{} generally produces longer chains-of-thought, and also retains higher actor entropy during training. \ours{} also produces larger gradient norms during training.}
    \label{fig:qwen3_4b_base_response_length_and_actor_entropy}
\end{figure}

\begin{figure}[!h]
    \centering
    \includegraphics[width=0.7\linewidth]{figures/appendix_figures/qwen3_1.7B_base_training_dynamics.pdf}
    \caption{\footnotesize \textbf{(Additional training dynamics metrics for Qwen3-1.7B-Base)} We show comparison between GRPO and \ours{} in terms of mean response length, entropy of the actor, and gradient norm during training for the Qwen3-1.7B-Base model. \ours{} generally produces longer chains-of-thought, and also retains higher actor entropy during training. \ours{} also produces larger gradient norms during training.}
    \label{fig:qwen3_1_7b_base_response_length_and_actor_entropy}
\end{figure}

\begin{figure}[!h]
    \centering
    \includegraphics[width=0.7\linewidth]{figures/appendix_figures/qwen3_4B_validation_accuracy_during_training.pdf}
    \caption{\footnotesize \textbf{(Qwen3-4B-Base validation pass@1 during training)} Pass@1 (estimated using 32 samples) during training of Qwen3-4B-Base, on 3 different evaluation datasets. \ours{} consistently outperforms GRPO during training.}
    \label{fig:qwen3_4B_validation_accuracy_during_training}
\end{figure}

\begin{figure}[!h]
    \centering
    \includegraphics[width=0.7\linewidth]{figures/appendix_figures/qwen3_1.7B_validation_accuracy_during_training.pdf}
    \caption{\footnotesize \textbf{(Qwen3-1.7B-Base validation accuracy during intermediate training)} We record validation pass@1 (using mean over 32 rollouts per prompt) over AIME 2024, AIME 2025 and MATH-500 during Qwen3-1.7B-Base model training.}
    \label{fig:qwen3_1.7B_base_validation_accuracy_during_training}
\end{figure}

\paragraph{Validation accuracy during training}
In addition, we examine pass@1 (estimated using 32 samples) over the course of training on intermediate evaluation datasets. For Qwen3-4B-Base, \ours{} consistently outperforms GRPO across the three evaluation datasets (\cref{fig:qwen3_4B_validation_accuracy_during_training}). For the smaller Qwen3-1.7B-Base model (\cref{fig:qwen3_1.7B_base_validation_accuracy_during_training}), \ours{} initially trails GRPO at pass@1, similar to the SmolLM curve in~\cref{fig:smollm_pass_at_k_vs_training_steps}, but catches up with extended training and ultimately converges to a higher value.

\newpage
\clearpage

\section{Comparison with Additional Baselines} \label{app:additionalBaselineComparisons}

\subsection{PKPO}

Realizing the shortcomings of the expected reward (pass@1) optimization paradigm in reinforcement learning, recent work~\citep{walder2025passkpolicyoptimizationsolving,tang2025optimizinglanguagemodelsinference,bagirov2025bestnworldsaligning} has also proposed directly optimizing for pass@k during RL training. The connection between maximum likelihood and harmonic mixture of pass@k optimization necessitates a direct comparison between our work and these pass@k optimization procedures. Here we present comparisons with a representative baseline, PKPO~\citep{walder2025passkpolicyoptimizationsolving}. PKPO has a tunable hyperparameter $T$, which dictates the order of optimization objective, namely PKPO with parameter $T$ optimizes pass@$T$ directly.

\begin{table}[!h]
\centering
\caption{Comparison between PKPO (T = 16) and \ours{} on the maze setting.}
\label{tab:maze_pkpo_comparisons}
\resizebox{0.8\textwidth}{!}{%
    \begin{tabular}{l|cccc}
    \toprule
    \textbf{Method} & \textbf{Pass@1} & \textbf{Pass@16} & \textbf{Pass@128} & \textbf{Pass@256} \\
    \midrule
    PKPO (T = 16)~\citep{walder2025passkpolicyoptimizationsolving} & 74.5 & 76.7 & 77.6 & 77.9 \\
    \textbf{\ours{}} & \textbf{84.4} & \textbf{92.2} & \textbf{94.0} & \textbf{94.3} \\
    \bottomrule
    \end{tabular}
}
\end{table}

\begin{table}[!h]
\centering
\caption{Comparison between PKPO (T = 16) and \ours{} on the GSM8K setting.}
\label{tab:gsm8k_pkpo_comparisons}
\resizebox{0.8\textwidth}{!}{%
    \begin{tabular}{l|cccc}
    \toprule
    \textbf{Method} & \textbf{Pass@1} & \textbf{Pass@16} & \textbf{Pass@128} & \textbf{Pass@1024} \\
    \midrule
    PKPO (T = 16)~\citep{walder2025passkpolicyoptimizationsolving} & 30.7 & 53.6 & 67.2 & 75.9 \\
    \textbf{\ours{}} & \textbf{33.2} & \textbf{60.3} & \textbf{75.0} & \textbf{83.4} \\
    \bottomrule
    \end{tabular}
}
\end{table}

\cref{tab:maze_pkpo_comparisons,tab:gsm8k_pkpo_comparisons} show comparison between PKPO and \ours{} on the maze and GSM8K settings, respectively. PKPO experiments are run with $T=16$, with all other hyperparameters kept the same as their default values for the respective settings. We observe that \ours{} outperforms PKPO at all values of pass@k. \textbf{More importantly, \ours{} outperforms PKPO at pass@16, despite the latter explicitly optimizing for this metric.} A similar phenomenon is observed with REINFORCE, which directly optimizes pass@1 yet consistently underperforms \ours{} at pass@1. This suggests that directly optimizing a specific pass@k objective does not necessarily yield the best performance at that pass@k. We hypothesize that \ours{}'s aggregation of signals across pass@k events enables more effective use of the learning signal, and leave a detailed investigation to future work.

\begin{figure}[!h]
    \centering
    \includegraphics[width=1.0\linewidth]{figures/appendix_figures/qwen3_4B_base_pkpo_comparisons.pdf}
    \caption{\footnotesize \textbf{(Comparison with PKPO on Qwen3-4B-Base training)} We compare \ours{} against PKPO for $T \in \{4, 8, 16\}$. \ours{} retains the same performance or outperforms PKPO with different values for $T$. On harder evaluation datasets like BeyondAIME and AIME 2025, the gap is more pronounced, and \ours{} outperforms PKPO at pass@k for all values of k that we evaluated on.}
    \label{fig:qwen3_pkpo_comparisons}
\end{figure}

\cref{fig:qwen3_pkpo_comparisons} shows further comparisons against PKPO on the large scale LLM training setting. Specifically, we train a Qwen3-4B-Base model on the POLARIS-53K dataset, using the default hyperparameters for this setting. For PKPO, we train with 3 different values of $T$, namely 4, 8, and 16. \ours{} retains similar performance or outperforms PKPO on all 4 evaluation datasets. On harder evaluation datasets like BeyondAIME and AIME 2025, the gap is more pronounced, and \ours{} outperforms PKPO at pass@k for all values of k that we evaluated on. These large-scale training experiments validate our results from the smaller maze and GSM8K settings.

\subsection{ZPD}

One way of thinking about \ours{} is to focus on its upweighting of high-difficulty prompts. This raises the question: how does \ours{} compare to methods that perform difficulty-aware sampling strategies? RLOO with a ``not too easy, not too hard" curriculum (ZPD~\citep{bae2026onlinedifficultyfilteringreasoning}) can be interpreted as reweighting prompts with weight $w(p) = p(1 - p)$, where $p = p_\theta(x)$ is the pass rate for prompt $x$, in~\cref{tab:weight-functions}. Since we do not use any additional curriculum learning or involved sampling strategies in our setting for the sake of simplicity, we choose to compare against this variant of ZPD by employing the advantage function $\hat{A}(x, y) = \hat{r}(1 - \hat{r})(r(x, y) - \hat{r})$, where $\hat{r}$ is the Monte-Carlo estimate of the pass rate for prompt $x$.

\begin{figure}[!h]
    \centering
    \includegraphics[width=0.8\linewidth]{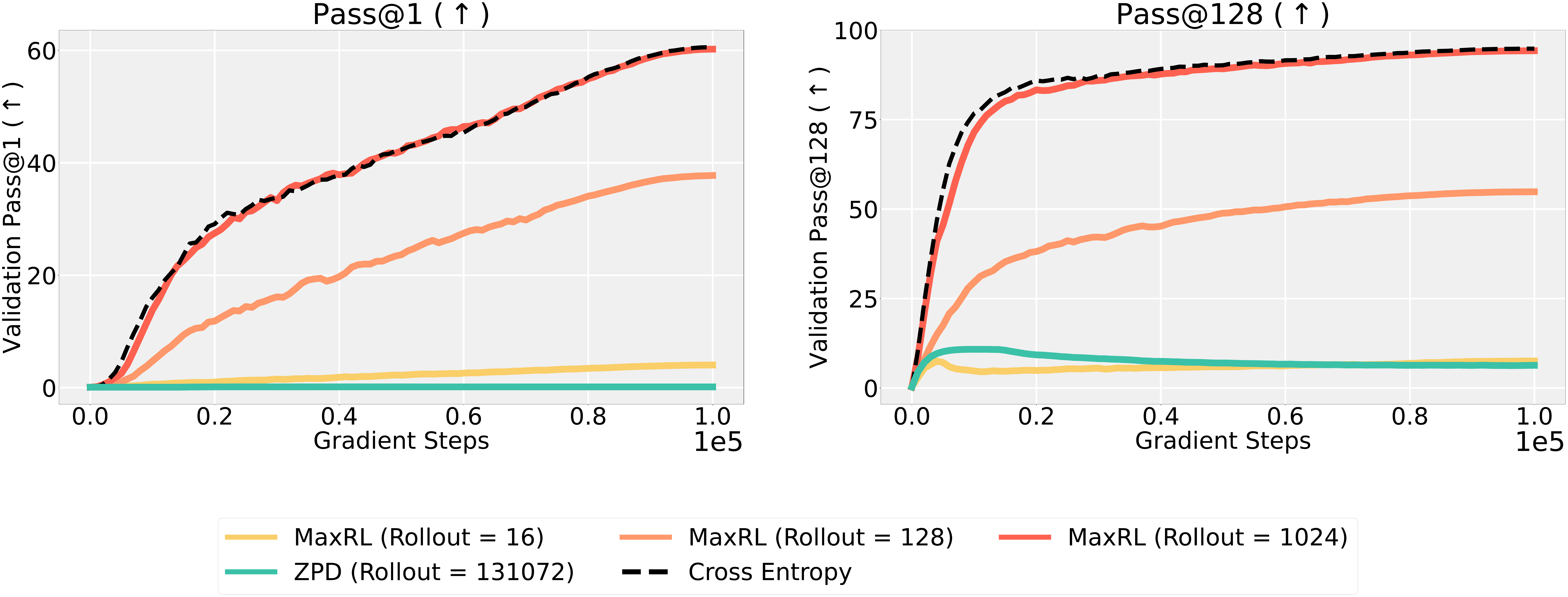}
    \caption{\footnotesize \textbf{(Comparison with ZPD on supervised ImageNet classification setting)} We compare \ours{} against ZPD on the ImageNet setting under identical training setup. Similar to REINFORCE, ZPD fails to improve significantly beyond the randomly initialized model.}
    \label{fig:imagenet_zpd_comparisons}
\end{figure}

First, we compare against ZPD on the ImageNet classification setting in~\cref{fig:imagenet_zpd_comparisons}. ZPD performs similarly to REINFORCE here, and fails to improve the model significantly beyond its random initialization at the beginning of training.

\begin{figure}[!h]
    \centering
    \includegraphics[width=0.8\linewidth]{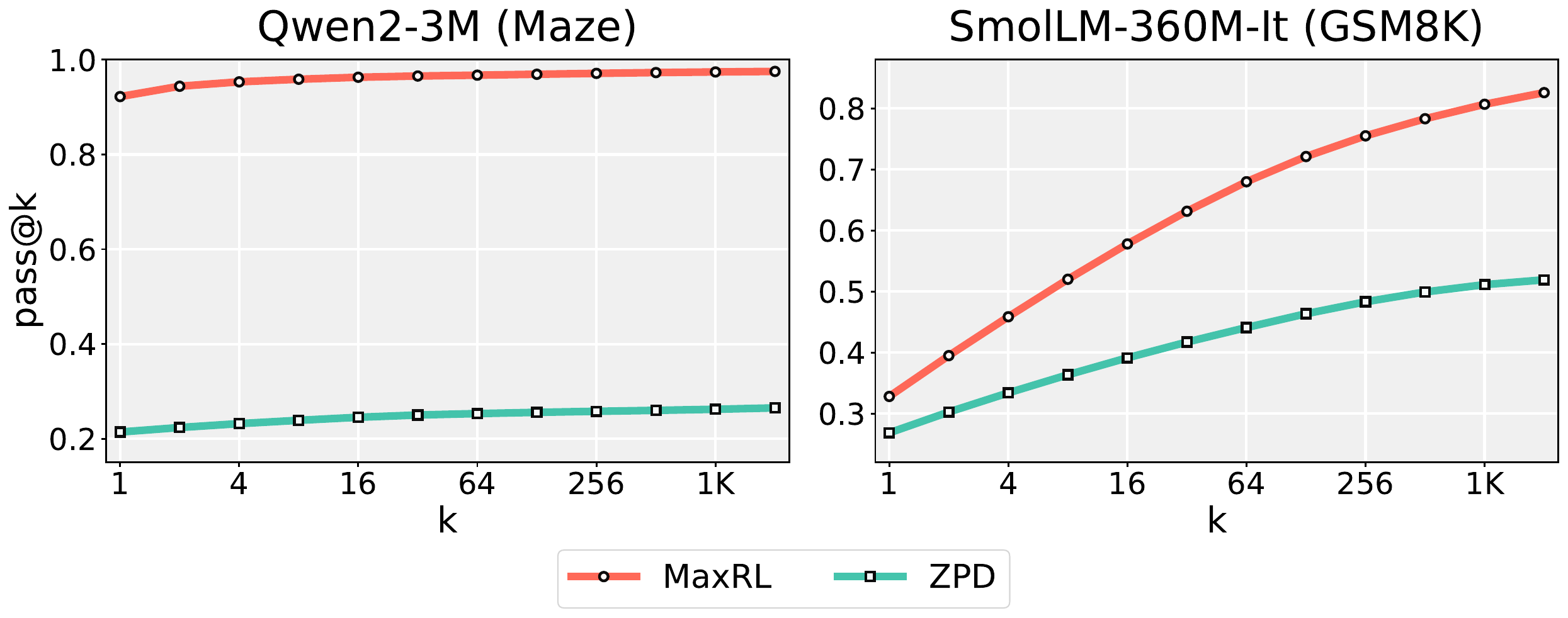}
    \caption{\footnotesize \textbf{(Comparison with ZPD in the maze and GSM8K setting.)}
    We compare \ours{} against ZPD on the infinite data maze and finite data GSM8K training settings under identical training budget and hyperparameters. On both settings, \ours{} significantly outperforms ZPD at pass@k, for all values of k that we evaluated on.
    }
    \label{fig:zpd_comparisons}
\end{figure}

Next, we evaluate ZPD on the maze and GSM8K training settings.~\cref{fig:zpd_comparisons} shows our main results: under identical training settings and compute budget, \ours{} outperforms ZPD on both settings, achieving significantly higher pass@k for all values of k that we evaluated on. On GSM8K, \ours{} achieves $\sim 30\%$ improvement in pass@2048 over ZPD.

\begin{figure}[!h]
    \centering
    \includegraphics[width=0.98\linewidth]{figures/appendix_figures/qwen3_4B_base_zpd_comparisons.pdf}
    \caption{\footnotesize \textbf{(Comparison with ZPD on Qwen3-4B-Base training)} We compare \ours{} against ZPD while training a Qwen3-4B-Base model on the POLARIS-53K dataset under identical training setup and compute budget (1000 steps of RL training). \ours{} outperforms ZPD on evaluation datasets, achieving 12.6$\times$ to 29.0$\times$ gains in test-time scaling efficiency under a perfect verifier.}
    \label{fig:qwen3_zpd_comparisons}
\end{figure}

Finally, we also compared \ours{} with ZPD on our largest training setting: Qwen3-4B-Base trained on POLARIS-53K.~\cref{fig:qwen3_zpd_comparisons} shows our main findings: under identical training setup and compute budget (1000 steps of RL training), \ours{} outperforms ZPD on all evaluation datasets, achieving gains of 12.6$\times$ to 29.0$\times$ in test-time scaling efficiency under a perfect verifier.

\newpage

\subsection{Power Function}

Finally, \ours{} can be seen as optimizing a particular non-linear function of the pass rate ($\log p_\theta(x)$). This begs the question of how other non-linear functions of the pass rate may perform. Here we compare with an example function from~\citet{xiong2025reinforceadaadaptivesamplingframework}, namely, the Power Function weighting. The Power Function with parameter $\alpha > 0$ is defined as $f(t) = t^\alpha$, and correspondingly one can optimize $J_{\mathrm{power-function}}^\alpha = \mathbb{E}_{x \sim \rho}[p_\theta(x)]^\alpha$. The corresponding gradient of this objective is:
\begin{equation*}
    \nabla_\theta J_{\mathrm{power-function}}^\alpha = \mathbb{E}_{x \sim \rho} \left[ \alpha p_\theta(x)^{\alpha - 1} \nabla_\theta p_\theta(x)\right]
\end{equation*}

where $\nabla_\theta p_\theta(x)$ is the standard REINFORCE gradient.~\citet{xiong2025reinforceadaadaptivesamplingframework} particularly considers power functions with $\alpha = \frac{1}{2}$, giving the following gradient:

\begin{equation*}
    \nabla_\theta J_{\mathrm{power-function}} = \mathbb{E}_{x \sim \rho} \left[ \frac{1}{2\sqrt{p_\theta(x)}} \nabla_\theta p_\theta(x)\right]
\end{equation*}

In practice, this objective can be optimized by setting the advantage to be $\hat{A}(x, y) = \frac{r(x, y) - \hat{r}}{2\sqrt{\hat{r}}}$ when $\hat{r} > 0$ and 0 otherwise, with $\hat{r}$ being the Monte-Carlo estimate of the pass rate for prompt $x$. This is the variant of the Power Function objective that we will compare \ours{} against in this work.

\begin{figure}[!h]
    \centering
    \includegraphics[width=0.8\linewidth]{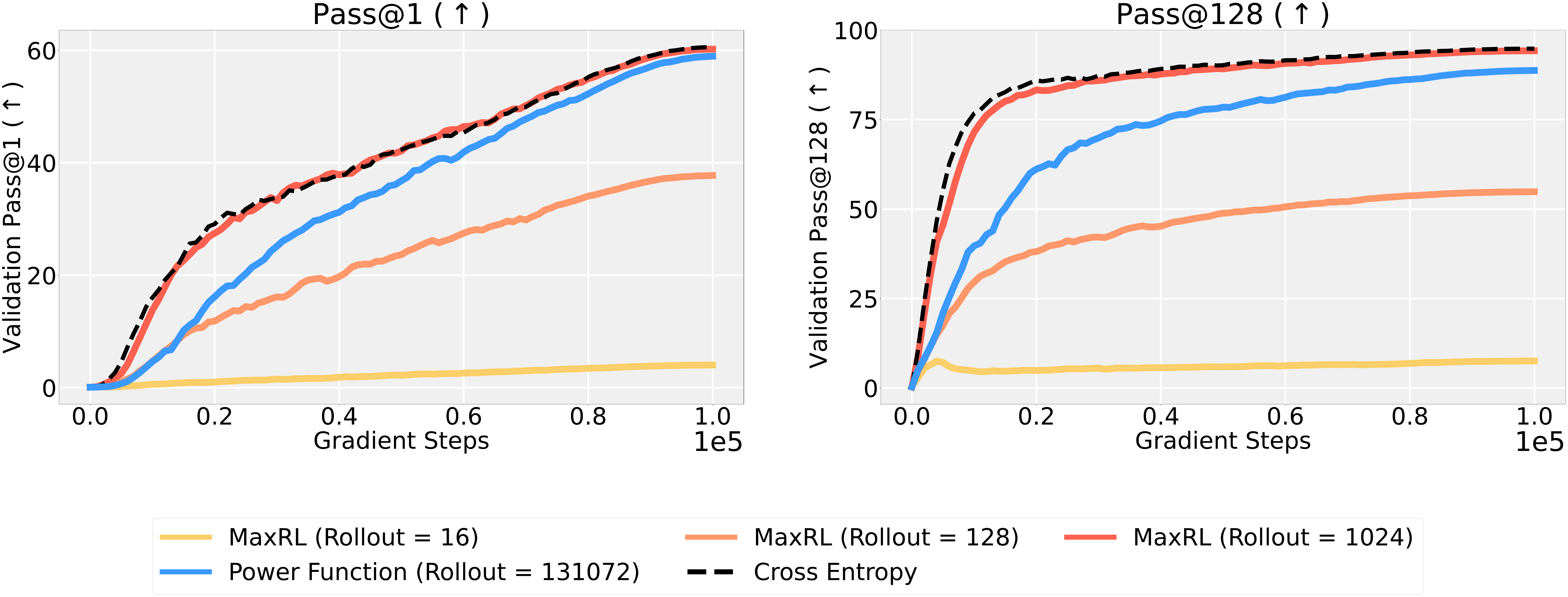}
    \caption{\footnotesize \textbf{(Comparison with Power Function on supervised ImageNet classification setting)} We compare \ours{} against the Power Function on the ImageNet setting under identical training setup. Despite using significantly more compute, the Power Function remains suboptimal at both pass@1 and pass@128, with the gap being larger at pass@128.}
    \label{fig:imagenet_power_function_comparisons}
\end{figure}

First, we compare against this version of the Power Function on our supervised ImageNet classification setting under an identical training recipe.~\cref{fig:imagenet_power_function_comparisons} shows our empirical findings: \ours{} outperforms Power Function at both pass@1 and pass@128 despite using 128x less rollout compute budget. Moreover, the gap is higher for pass@128, showing that despite performing better than REINFORCE and GRPO in this setting, ultimately the Power Function is a suboptimal choice compared to \ours{} here.

\begin{figure}[!h]
    \centering
    \includegraphics[width=0.7\linewidth]{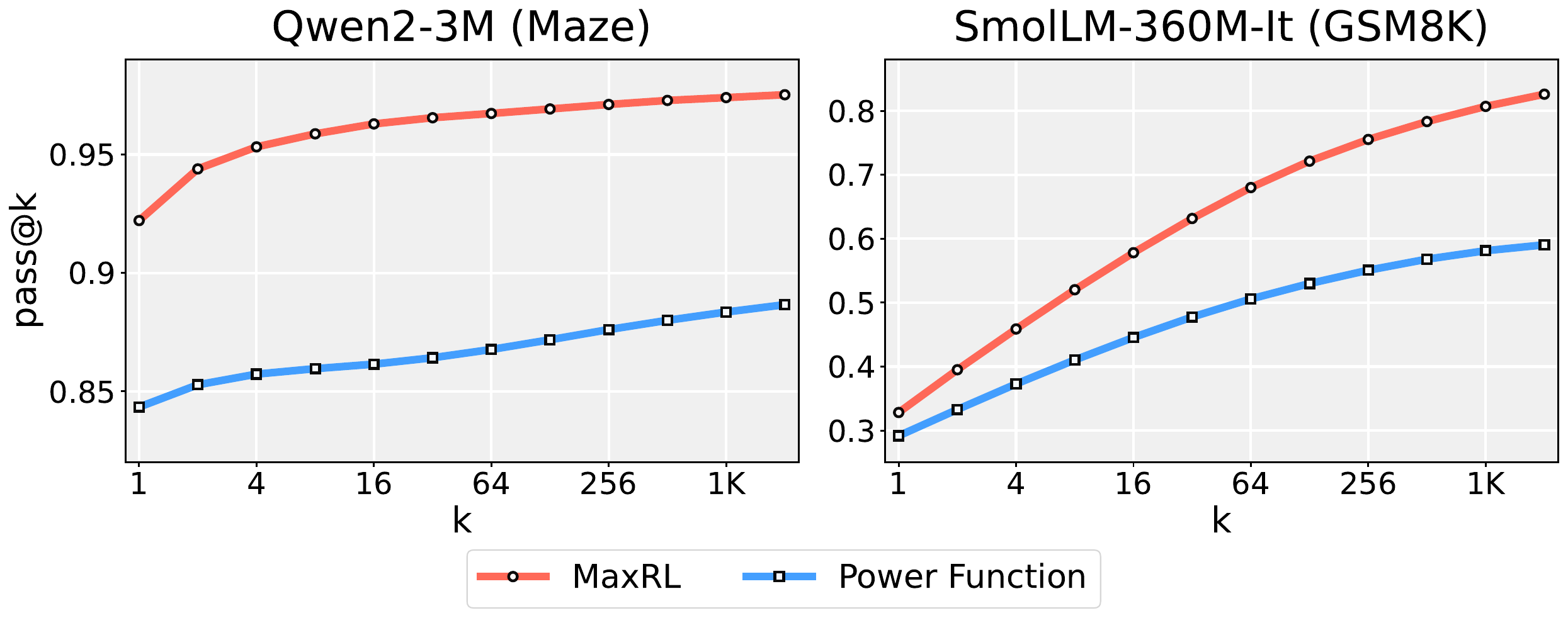}
    \caption{\footnotesize \textbf{(Comparison with Power Function in the maze and GSM8K setting.)}
    We compare \ours{} against the Power Function variant mentioned in this section on the infinite data maze and finite data GSM8K training settings under identical training budget and hyperparameters. On both settings, \ours{} significantly outperforms the Power Function at pass@k, for all values of k that we evaluated on.
    }
    \label{fig:power_function_comparisons}
\end{figure}

Next, we scale up our comparisons to the infinite training data maze and finite training data GSM8K settings using identical training hyperparameters and compute budget. \cref{fig:power_function_comparisons} summarizes our main findings: \ours{} outperforms this variant of the Power Function objective on both settings. \ours{} achieves significantly higher pass@k on both settings, and outperforms Power Function by $\sim 20\%$ at pass@2048 on GSM8K.

\begin{figure}[!h]
    \centering
    \includegraphics[width=1.0\linewidth]{figures/appendix_figures/qwen3_4B_base_power_function_comparisons.pdf}
    \caption{\footnotesize \textbf{(Comparison with Power Function on Qwen3-4B-Base training)} We compare \ours{} against the Power Function variant mentioned in this section while training a Qwen3-4B-Base model on the POLARIS-53K dataset under identical training setup and compute budget (1000 steps of RL training). \ours{} matches or outperforms the Power Function on evaluation datasets, achieving up to 12.4$\times$ gains in test-time scaling efficiency under a perfect verifier.}
    \label{fig:qwen3_power_function_comparisons}
\end{figure}

Finally, we also compared \ours{} against the Power Function on our largest training setting: Qwen3-4B-Base trained on POLARIS-53K.~\cref{fig:qwen3_power_function_comparisons} shows our main findings: under identical training setup and compute budget (1000 steps of RL training), \ours{} matches or outperforms Power Function on all 4 evaluation datasets, achieving up to 12.4$\times$ test-time scaling efficiency gains under a perfect verifier.

\newpage

\section{Additional Analyses} \label{app:additionalAnalysis}

\subsection{Non-Zero Gradient Signal Analysis} \label{app:maze_smollm_non_zero_pass_rates}

A prompt for which the model fails to generate any correct rollout contributes no gradient under REINFORCE, GRPO, RLOO, or \ours{}, since the advantage vanishes when every sampled return equals the empirical baseline. We extend the analysis in~\cref{fig:fraction_problems_solved_during_training} from the large-scale LLM settings to the maze and SmolLM2-360M-Instruct on GSM8K training settings, and record this fraction throughout training in~\cref{fig:smollm_maze_non_zero_pass_rate_prompts}. \ours{} consistently produces at least one correct rollout for a larger fraction of prompts than GRPO and RLOO across both settings, and the gap persists as training progresses. This matches the trends observed in the large-scale experiments and offers a complementary view on why \ours{} continues to improve when REINFORCE-style baselines stall: by upweighting harder prompts and producing larger gradient norms on them (\cref{fig:qwen_p_vs_grad_p_norm}), \ours{} keeps a larger fraction of the dataset contributing useful gradient signal even after the easier prompts have saturated.

\begin{figure}[!h]
    \centering
    \includegraphics[width=0.55\linewidth]{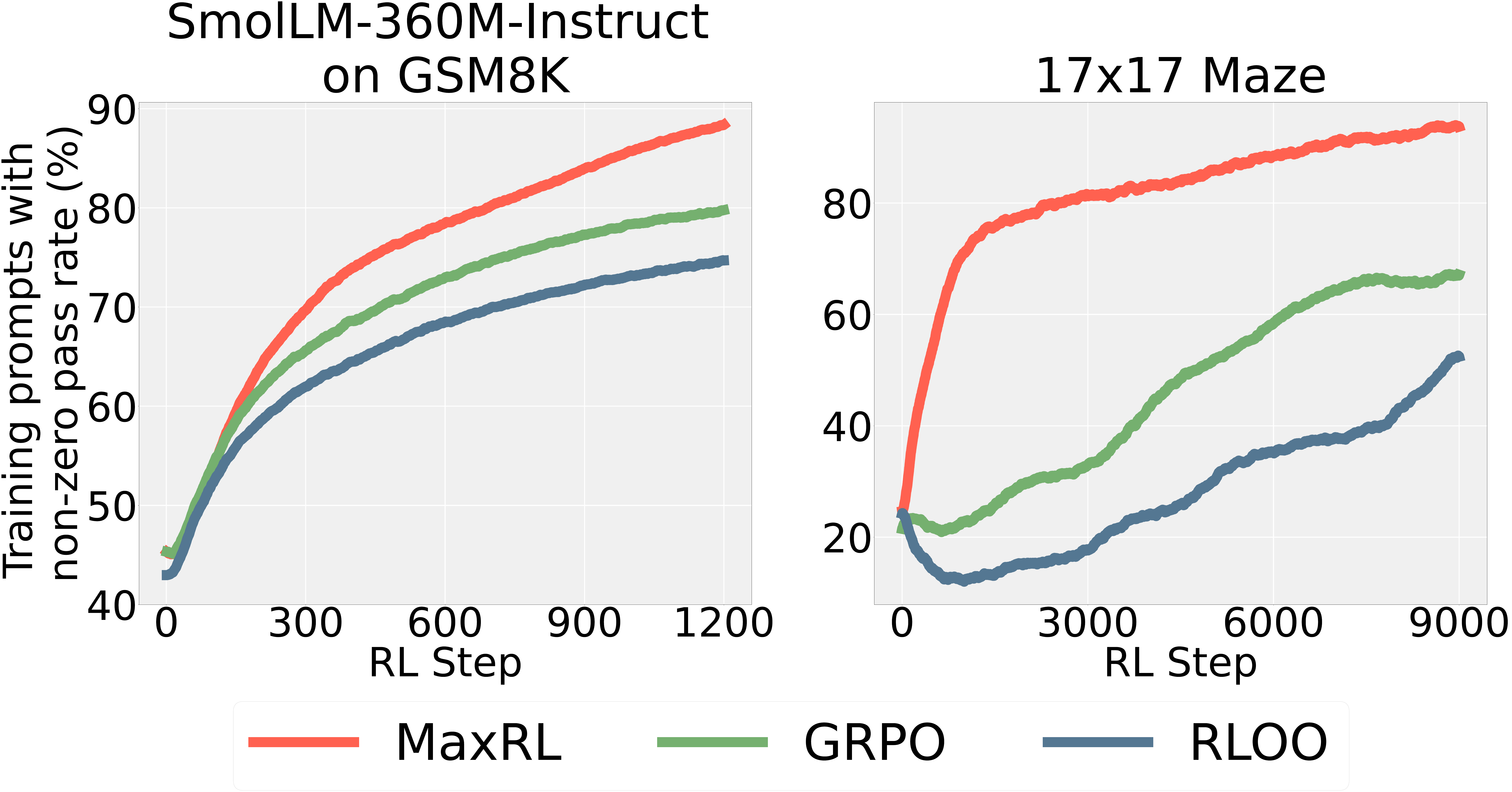}
    \caption{\footnotesize \textbf{(Fraction of training tasks with non-zero pass rate)} Extending the analysis from~\cref{fig:fraction_problems_solved_during_training} on large-scale LLM training, here we record the fraction of training tasks where the model generates at least one correct rollout on maze and SmolLM2-360M-Instruct training on GSM8K. We see the same trends as~\cref{fig:fraction_problems_solved_during_training}, and \ours{} consistently outperforms both GRPO and RLOO, demonstrating \ours{}'s ability to generate better learning signal during training, as tasks with zero pass-rate contribute no gradients.}
    \label{fig:smollm_maze_non_zero_pass_rate_prompts}
\end{figure}

\subsection{Policy Gradient Variance Analysis} \label{app:varianceAnalysis}

In this section, we analyze the variance of policy-gradient estimates for \ours{} and other methods, focusing on two questions:
(1) does the baseline in our estimator (\Cref{eq:final-estimator}) actually reduce variance, and (2) how does estimator noise scale with the number of rollouts for \ours{}?

We first study the effect of the control variate in \ours{}. Note that using the unconditional average score function as the baseline is not guaranteed to reduce variance. Previous work such as~\citet{greensmith2004variance} has noted that this can indeed be suboptimal even for the canonical REINFORCE objective, and provides the derivation of an optimal baseline. However, computing the optimal baseline is complicated in practice, and the average score function as the standard choice of baseline is empirically justified and widely used~\citep{ahmadian2024basicsrevisitingreinforcestyle,zeng2025shrinkingvarianceshrinkagebaselines}. In this section, we follow the same direction and provide empirical proof that the variance reduction technique used by us indeed reduces variance in practical scenarios. Using the rollout pools from the previous experiments, we estimate the trace of the sample covariance matrix across policy-gradient minibatches, following prior work~\citep{mccandlish2018empirical,zeng2025shrinkingvarianceshrinkagebaselines}. Given $B$ independently sampled minibatch gradient estimates $\{g_b\}_{b=1}^{B}$, we compute
\[
    \widehat{\mathcal{V}}(g)
    =
    \operatorname{tr}\!\left(\widehat{\operatorname{Cov}}(g)\right)
    =
    \frac{1}{B-1}\sum_{b=1}^{B}\|g_b-\bar{g}\|_2^2,
    \qquad
    \bar{g}=\frac{1}{B}\sum_{b=1}^{B}g_b,
\]
where each $g_b$ is the flattened policy-gradient vector for one sampled minibatch. Specifically, for GSM8K and Polaris-53K, we use 16 prompts with 512 pre-generated rollouts per prompt; for Maze, we use 256 prompts with 1024 rollouts per prompt. For each $N \in \{4,8,16,32,64,128\}$, we randomly subsample $B=4$ minibatches with $N$ rollouts per prompt and report $\widehat{\mathcal{V}}(g)$. \Cref{fig:variance_baseline} shows that the baseline consistently lowers the variance of the \ours{} gradient estimator across rollout counts and tasks. The baseline is most important when number of rollouts is low, and eventually for high enough rollout count the gradient variance with and without baseline seems to converge, as expected.

\begin{figure}[!h]
    \centering
    \includegraphics[width=0.9\linewidth]{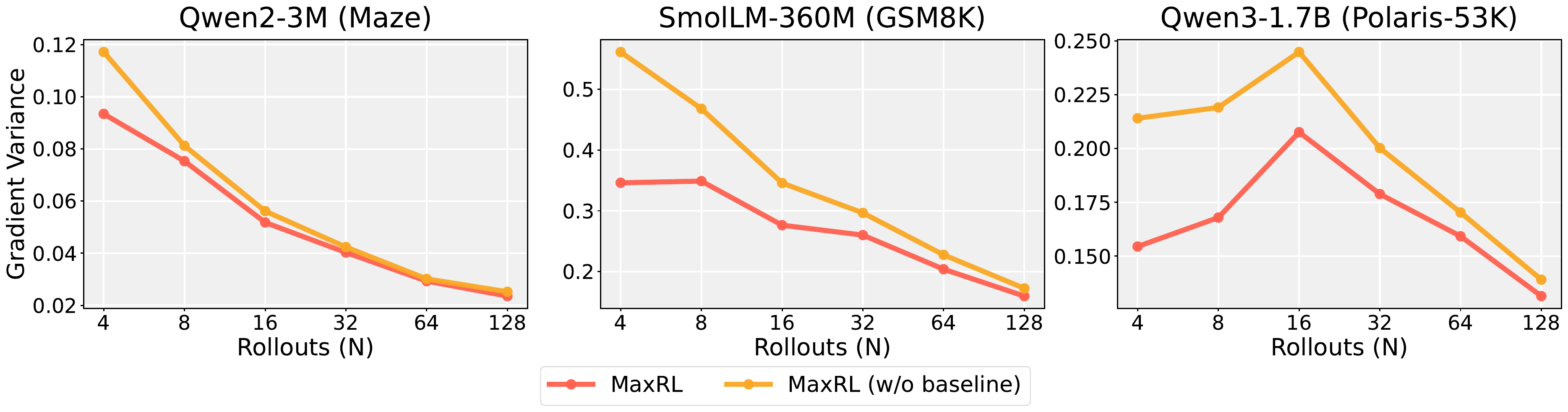}
    \caption{\footnotesize \textbf{(Effect of the \ours{} baseline on gradient variance)} We estimate the trace of the sample covariance matrix of gradient estimates with and without the proposed baseline for experiments in \Cref{sec:experiments}. The baseline consistently lowers variance.}
    \label{fig:variance_baseline}
\end{figure}

\begin{figure}[!h]
    \centering
    \includegraphics[width=0.9\linewidth]{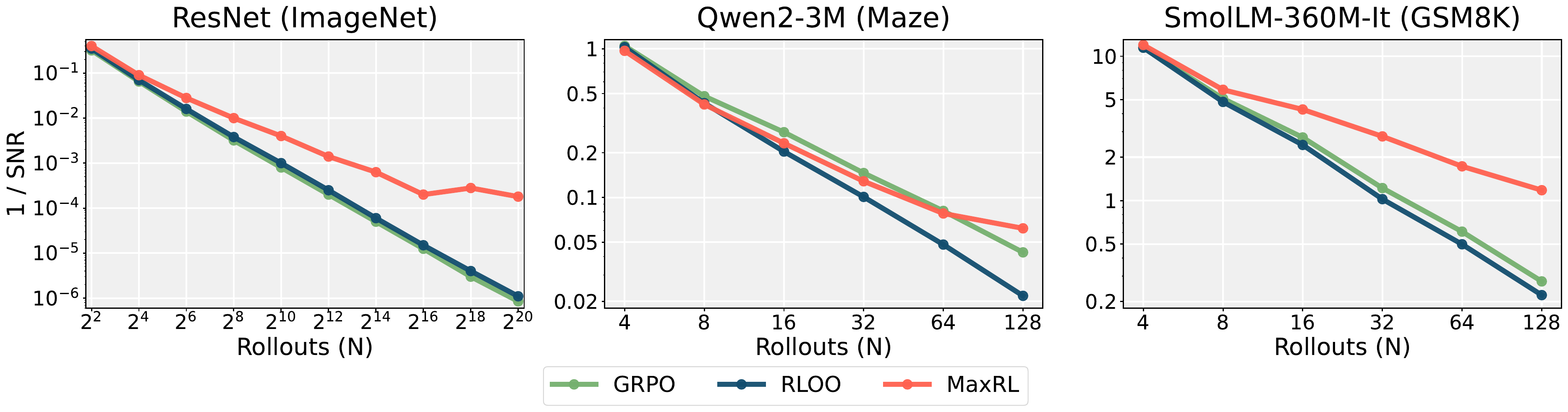}
    \caption{\footnotesize \textbf{(Inverse gradient SNR across rollout counts)} We compare normalized gradient variance for GRPO, RLOO, and \ours{}. \ours{} becomes increasingly accurate as the number of rollouts grows.}
    \label{fig:inverse_snr_analysis}
\end{figure}

We next study how the signal-to-noise ratio (SNR) of policy gradients scales with the number of rollouts for \ours{}, GRPO, and RLOO. Since these methods optimize different objectives, raw variance is not directly comparable across methods. We therefore normalize by the squared norm of the expected gradient and report inverse signal-to-noise ratio,
\(\textstyle \frac{\operatorname{tr}(\operatorname{Cov}(g))}{\|\mathbb{E}[g]\|_2^2}\),
where lower values indicate a more accurate gradient estimate relative to its signal. On ImageNet, we estimate this quantity using ResNet-50 gradients over a fixed batch of 1024 images and 64 gradient samples per rollout count. For Maze, we sample 8 prompt batches of 256 prompts from the pre-generated rollout table and estimate inverse SNR from 64 randomized rollout minibatches. For SmolLM on GSM8K, we fix one batch of 64 prompts and form 64 gradient samples through repeated rollout subsampling. \Cref{fig:inverse_snr_analysis} shows that the normalized variance of \ours{} decreases as $N$ increases, following the same qualitative trend as GRPO and RLOO.

Together, these results suggest that the proposed baseline reduces the raw variance of the \ours{} estimator. They also show that the stability of \ours{} improves as the rollout budget increases, despite its stronger amplification of very-low-success-rate problems.

\subsection{On the Distribution Sharpening Narrative}

Recent work~\citep{yue2025doesreinforcementlearningreally,wu2026invisibleleashrlvrescape} has shown that RL trained models can underperform base models at pass@k, and thereby under a high sampling budget coupled with a perfect verifier, base models can solve a larger number of problems than their RL-trained counterpart. We saw in our experiments that contrary to the standard REINFORCE and GRPO objectives, \ours{} increases pass@k even for very large values of k beyond that of the base model in a majority of scenarios and especially if we do not overtrain. The natural next question to ask is whether \ours{} can lead to expansion of capability beyond the base model.
Since any sequence of tokens that lead to the correct answer has a non-zero probability under any language model, we instead ask this question from a finite compute perspective: if one has a large but finite compute, what fraction of the problems can one solve with the base model, and its GRPO- and \ours{}-trained counterparts? 

\begin{table}[!h]
\centering
\caption{
Per-problem solvability on AIME 2025 with 4096 samples per question.
We list the indices of problems solved by each model. Compared to the
base model and GRPO-trained model, \ours{} solves three additional problems
(\textbf{10}, \textbf{24}, and \textbf{27}) exclusively.
}
\label{tab:aime25_per_problem_solvability}
\small
\begin{tabular}{llc}
\toprule
\textbf{Model} & \textbf{Solved Problem Indices (AIME 2025)} & \textbf{Solved} \\
\midrule
Qwen3-4B-Base
&
\begin{tabular}[c]{@{}l@{}}
0, 1, 2, 3, 4, 5, 7, 8, 9, 12, 13, 14, 15, 16, 18, \\
19, 20, 21, 22, 23, 26, 28, 29
\end{tabular}
&
$23/30$
\\
\addlinespace
GRPO (1000 steps)
&
\begin{tabular}[c]{@{}l@{}}
0, 1, 2, 3, 4, 5, 7, 8, 9, 11, 12, 13, 14, 15, 16, \\
18, 19, 20, 21, 22, 23, 26, 29
\end{tabular}
&
$23/30$
\\
\addlinespace
\ours{} (1000 steps)
&
\begin{tabular}[c]{@{}l@{}}
0, 1, 2, 3, 4, 5, 7, 8, 9, \textbf{10}, 11, 12, 13, 14, 15, 16, \\
18, 19, 20, 21, 22, 23, \textbf{24}, 26, \textbf{27}, 28, 29
\end{tabular}
&
\textcolor{red}{$27/30$}
\\
\bottomrule
\end{tabular}
\end{table}

\cref{tab:aime25_per_problem_solvability} shows the results of empirical study. Specifically, we take the Qwen3-4B-Base model, and its final GRPO and \ours{} trained checkpoints after 1000 steps of RL training. Next, we generate 4096 rollouts for each of the 30 questions in the AIME 2025 dataset, from each of these 3 models. We list the indices of the problems where each model generates at least 1 correct solution from its 4096 rollouts. We see that the problems solved by the base model and GRPO-trained model are strict subsets of the problem set solved by \ours{}. Moreover, \ours{} solves 3 additional problems exclusively that are not solved by either the base model or the GRPO-trained model.

\begin{figure}[!h]
    \centering
    \includegraphics[width=0.5\linewidth]{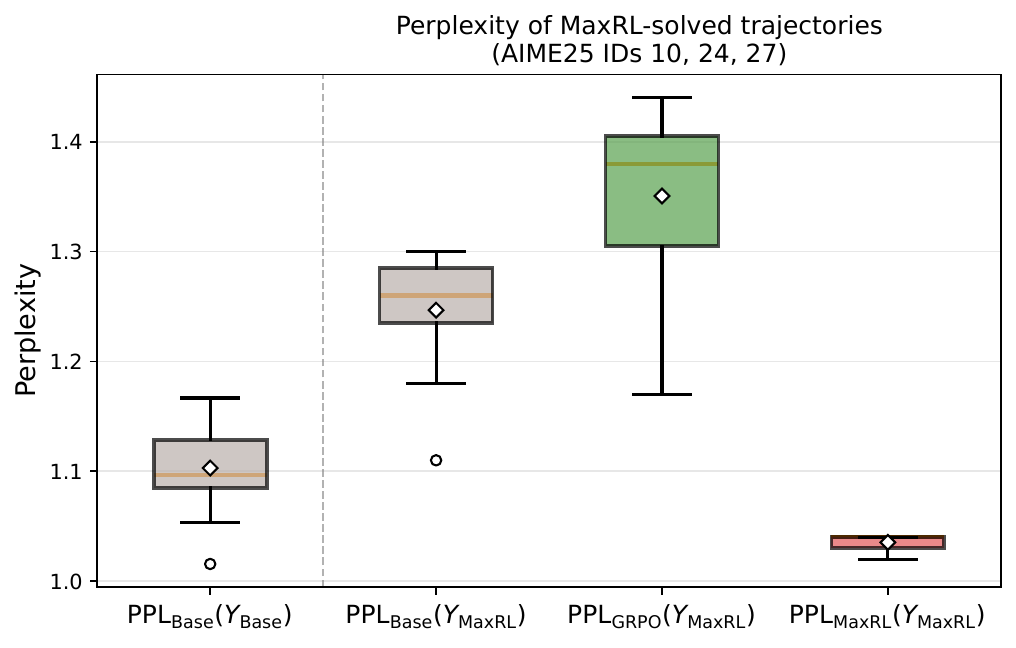}
    \caption{\footnotesize \textbf{(Perplexity analysis on math problems, using Qwen3-4B-Base and its fine-tuned variants)} We collect $Y_{\mathrm{MaxRL}}$, the 15 correct trajectories from 3 problems on AIME 2025 that only the \ours{}-trained model can solve, and compute the perplexity of each trajectory under all three models. For comparison, we also compute the perplexity of $Y_{\mathrm{Base}}$, an equal number of incorrect rollouts from the base model. The results show that $Y_{\mathrm{MaxRL}}$ have high perplexity under both the base model and the GRPO-trained model, suggesting that \ours{} develops capabilities beyond those present in the base model. Diamonds denote means and circles denote outliers.}
    \label{fig:aime25_perplexity_analysis}
\end{figure}

Beyond~\cref{tab:aime25_per_problem_solvability}, we also provide additional evidence in the form of perplexity analysis. Specifically,~\cref{fig:aime25_perplexity_analysis} shows that on the 3 problems from AIME 2025 that only our \ours{}-trained checkpoint could solve, the 15 correct rollouts have very high perplexity under the base and GRPO trained checkpoint. As a control set, we also record perplexity of an equal number of incorrect rollouts generated from the base model, and show that they have much lower perplexity than the \ours{}-generated correct trajectories. Overall, we believe that \ours{} can induce capabilities beyond what is present in the base model, and leave further studying this for future work.

\subsection{Non-Binary Reward Settings}

Our work focuses on binary correctness-based rewards, where the connection to maximum likelihood is most direct. While the binary reward setting already captures a broad and practically important class of RL problems, including applications such as improving complex reasoning in LLMs (e.g., math and code generation)~\citep{zeng2025simplerlzooinvestigatingtamingzero,mroueh2025reinforcementlearningverifiablerewards,deepcoder2025} and robotics~\citep{liu2023liberobenchmarkingknowledgetransfer,herzog2023deeprlscalesorting,kalashnikov2018qtoptscalabledeepreinforcement}, here we study non-binary reward settings for the sake of completeness. 

\begin{figure}[!h]
    \centering
    \includegraphics[width=0.8\linewidth]{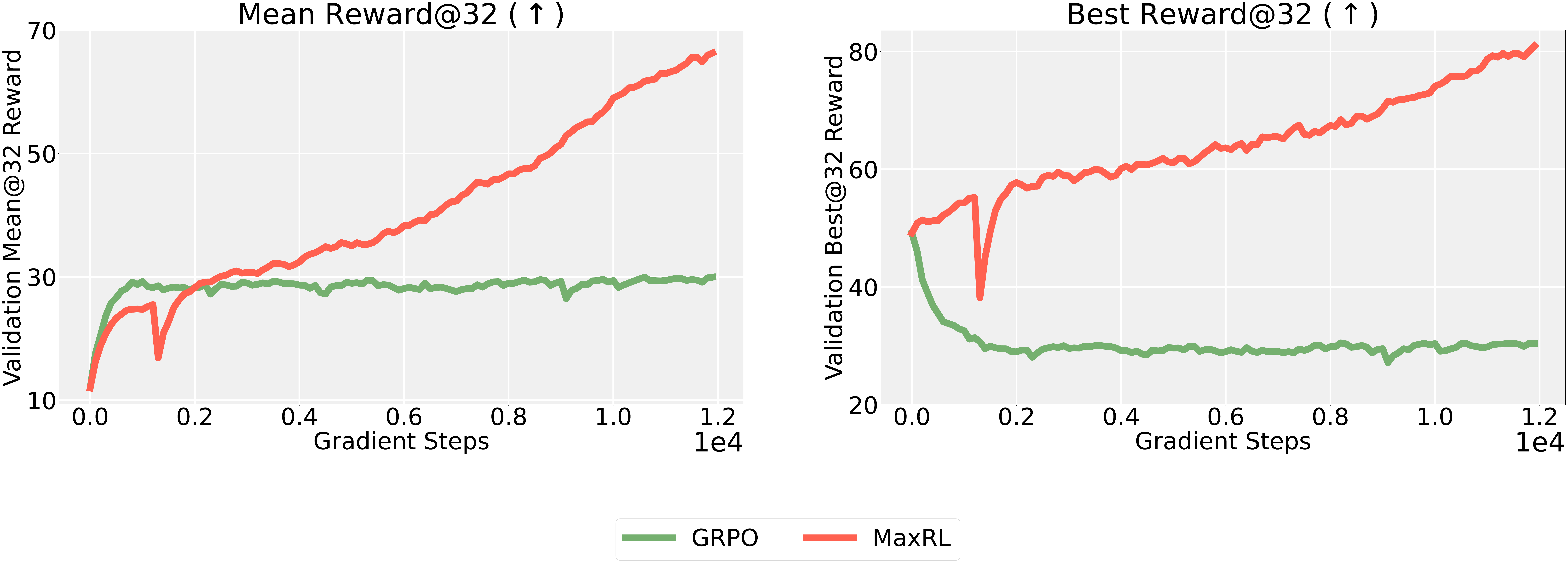}
    \caption{\footnotesize \textbf{(\ours{} extended to non-binary reward maze setting)} We experiment with our extension of \ours{} in the modified maze setting with non-binary rewards (y-axis shows reward rescaled between 0 and 100). \ours{} outperforms GRPO in this setting in both Mean@32 and Best@32 validation reward. On the other hand, the Best@32 reward for GRPO has collapsed, showing that training for longer is unlikely to improve for GRPO, whereas both Mean and Best reward are still increasing at the end of training for \ours{}.}
    \label{fig:maze_non_binary_rewards}
\end{figure}

Interestingly, \ours{} admits a natural extension to non-binary reward settings under mild assumptions. In particular, for non-negative rewards with non-zero expectation under the policy (i.e., 
$\mathbb{E}_{y \sim \pi_\theta(\cdot \mid x)}[r(x, y)] > 0$), we can generalize the \ours{} objective from
$$\log \mathbb{E}_{y \sim \pi_\theta(\cdot \mid x)}[\mathbb{I}[y = y^\ast]]$$
to $$\log \mathbb{E}_{y \sim \pi_\theta(\cdot \mid x)}[r(x, y)]$$
\textbf{This yields a gradient estimator analogous to \ours{}}, with advantage:
$$A(x, y_i) = \frac{r(x, y_i) - \hat{\mu}}{\hat{\mu}}$$
where $\hat{\mu}$ is the Monte-Carlo estimate of $\mathbb{E}_{y \sim \pi_\theta(\cdot \mid x)}[r(x, y)]$.

We have evaluated this extension on a modified version of the Maze task with continuous rewards $r \in [0, 1]$, where shorter trajectories receive higher rewards. Specifically, for maze $x$, let $L_i(x)$ denote the length of the $i$-th rollout, $L_{\mathrm{min}}(x)$ denote the shortest possible path length, and 
$T$ denote the maximum trajectory length possible (usually equal to the token limit for each generation). We define
$$r_i = \frac{T - L_i(x)}{T - L_{\mathrm{min}}(x)}$$

In this setting, we train a 16M transformer with batch size 256 and 16 rollouts per task, with all other hyperparameters kept the same as the default values for the maze setting. This extension of \ours{} outperforms GRPO by a large margin on both Mean@32 and Best@32 validation reward. Moreover, in this setting, the Best@32 reward for GRPO has collapsed, showing that training for longer is unlikely to improve for GRPO, whereas both Mean and Best reward are still increasing at the end of training for \ours{}.

While the initial results are encouraging, we leave the full theoretical treatment and broader empirical evaluation of non-binary rewards to future work.

\end{document}